\documentclass[10pt]{article} 
\usepackage[accepted]{tmlr}

\usepackage{amsmath,amsfonts,bm}

\def\eqref#1{equation~\ref{#1}}

\def\1{\bm{1}}

\DeclareMathAlphabet{\mathsfit}{\encodingdefault}{\sfdefault}{m}{sl}
\SetMathAlphabet{\mathsfit}{bold}{\encodingdefault}{\sfdefault}{bx}{n}

\usepackage{hyperref}
\usepackage{url}

\usepackage{amssymb}
\usepackage{pifont}
\newcommand{\cmark}{\ding{51}}%
\newcommand{\xmark}{\ding{55}}%
\usepackage{amsmath}
\usepackage{graphicx}
\usepackage{multirow}
\usepackage[utf8]{inputenc} 
\usepackage[T1]{fontenc}    
\usepackage{url}            
\usepackage{booktabs}       
\usepackage{amsfonts}       
\usepackage{nicefrac}       
\usepackage{microtype}      
\usepackage{ragged2e}
\usepackage{amsmath}
\usepackage{amssymb}
\usepackage{mathtools}
\usepackage{amsthm}
\newcommand{\std}[1]{\scriptsize(#1)}

\usepackage{thmtools}
\usepackage{thm-restate}
\usepackage{booktabs}
\usepackage{pgfplots}
\usepackage{subcaption}
\usepackage{caption}
\usepackage{thmtools}
\usepackage{thm-restate}
\usepackage{multirow}
\usepackage{graphicx}
\usepackage{booktabs}
\usepackage{pgfplots}
\usepackage{subcaption}
\usepackage{caption}
\usepackage{tikz}
\usepackage{bbding} 
\usepackage{listings}
\usepackage{longtable}
\usepackage{bbm}
\newtheorem{theorem}{Theorem}[section]

\newtheorem{proposition}[theorem]{Proposition}
\newcommand{\ind}{\perp\!\!\!\!\perp} 
\usepackage{algorithm2e}
\RestyleAlgo{ruled}

\usepackage[weather]{ifsym}
\usetikzlibrary{arrows.meta, positioning, shapes.geometric, calc, patterns}
\usepackage{amsmath}    
\usepackage{tcolorbox}  
\pgfplotsset{compat=1.18}
\definecolor{cvprblue}{rgb}{0.21,0.49,0.74}
\definecolor{mainblue}{rgb}{0.1,0.2,0.8}
\definecolor{maingreen}{rgb}{0.80, 0.9, 0.8}
\definecolor{mainorange}{rgb}{0.1,0.2,0.8}
\definecolor{abblue}{rgb}{0.1,0.2,0.8}
\definecolor{aborange}{rgb}{0.20, 0.65, 0.35}
\definecolor{warmorange}{rgb}{0.95, 0.63, 0.35}
\definecolor{lightgreen}{rgb}{0.45, 0.78, 0.62}
\definecolor{warmamber}{rgb}{0.96, 0.64, 0.38}
\definecolor{oceanblue}{rgb}{0.0, 0.47, 0.71}
\definecolor{softblue}{rgb}{0.42, 0.58, 0.72} 

\title{FairNVT: Fair Classification via Noise Injection in Vision Transformers}

\author{\name Qiaoyue Tang\thanks{Work done during an internship at RBC Borealis.} \email qiaoyuet@cs.ubc.ca \\
\addr University of British Columbia
\AND
\name Sepidehsadat Hosseini \email sepid.hosseini@borealisai.com \\
\addr RBC Borealis
\AND
\name Mengyao Zhai \email mengyao.zhai@borealisai.com \\
\addr RBC Borealis
\AND
\name Thibaut Durand \email thibaut.durand@borealisai.com \\
\addr RBC Borealis
\AND
\name Greg Mori \email greg.mori@borealisai.com \\ 
\addr RBC Borealis \\
}

\def\month{08}  
\def\year{2026} 
\def\openreview{\url{https://openreview.net/forum?id=rzm6gZrYgl}} 

\begin{document}

\maketitle

\begin{abstract}
This paper presents \textbf{FairNVT}, a lightweight debiasing framework for pretrained transformer-based encoders that improves prediction fairness while preserving task performance. FairNVT is motivated by the intuition that reducing sensitive-attribute information in the representation used by the downstream classifier can facilitate fairer predictions. Our approach learns task-relevant and sensitive embeddings via lightweight adapters, applies calibrated Gaussian noise to the sensitive embedding, and fuses it with the task representation. Together with orthogonality constraints and fairness regularization, these components jointly reduce sensitive-attribute leakage in the learned embeddings and encourage fairer downstream predictions. Across three datasets spanning vision and language, FairNVT reduces sensitive-attribute attacker accuracy, improves fairness metrics such as demographic parity difference and equalized odds, and maintains competitive task performance.
\end{abstract}

\section{Introduction}
\label{sec:intro}
Modern machine learning systems largely follow a pretrain–transfer paradigm, where large foundation models are trained on massive, noisy datasets and then adapted to downstream tasks using their transferable representations. 
Yet these models often encode \emph{social and demographic biases} present in their training data, leading to systematic unfairness across attributes such as gender, race, and age. When deployed in sensitive domains, such as recruitment, credit scoring, or facial recognition, these biases may propagate to downstream tasks, resulting in inequitable treatment of individuals and undermine the reliability of deploying machine learning systems \citep{gallegos-etal-2024-bias, li2024surveyfairnesslargelanguage}.

Fairness in machine learning is commonly studied at the \emph{prediction level}, where the goal is to ensure that model outputs satisfy group-based criteria such as demographic parity, equal opportunity, or equalized odds. These metrics quantify disparities in model predictions across demographic groups and are essential for evaluating the societal impact of deployed models. In a typical downstream pipeline, a classifier is trained on top of pretrained embeddings while satisfying these fairness criteria. However, prediction-level fairness provides only a partial view of the problem. Even when a downstream classifier satisfies fairness metrics, the underlying embeddings may still retain substantial information about sensitive attributes. Such information can often be recovered from the learned representations, enabling downstream misuse, model inversion, or fairness degradation when the embeddings are reused for new tasks \citep{feng2023pretrainingdatalanguagemodels, gallegos-etal-2024-bias}. These observations motivate \emph{representation-level} fairness, which aims to learn embeddings that remain predictive for downstream tasks while being invariant, or at least less informative, with respect to sensitive attributes.

A growing body of work has explored these two notions of fairness, but they are typically studied in isolation. Prediction-level methods directly impose fairness constraints on model outputs \citep{kang2023infofairinformationtheoreticintersectionalfairness, wang2023fairnesstextgenerationmutual, ijcai2024p273}, whereas representation-level methods seek to reduce sensitive information in learned embeddings using adversarial learning \citep{zhang2018mitigatingunwantedbiasesadversarial, Gesadebiasing2023}, contrastive learning \citep{park2022faircontrastivelearningfacial}, or projection-based techniques \citep{Islam_Chen_Cai_2024, shi-etal-2024-debiasing}. Recent studies further suggest that representation-level and prediction-level fairness are not necessarily aligned, highlighting the challenge of improving both simultaneously \citep{shen2022does}.

In this work, we show that reducing sensitive information in learned representations through targeted noise injection can facilitate fairer downstream predictions. We propose \textbf{FairNVT}, a lightweight debiasing framework for pretrained transformer-based encoders built on noise injection. Starting from a frozen pretrained encoder, FairNVT learns separate task and sensitive adapters, where an orthogonality objective encourages the two branches to capture complementary information. We apply calibrated Gaussian noise to the sensitive branch to reduce its utility for downstream prediction, and apply fairness constraints to mitigate residual sensitive information that remains in the task representation. Empirically, FairNVT consistently reduces sensitive-attribute predictability from the learned embeddings, improves prediction-level fairness, and maintains competitive task performance across vision and language datasets, yielding a favorable fairness--utility trade-off.

In summary, this paper makes the following three key contributions:
\begin{itemize}
    \item We propose FairNVT, a lightweight, single-stage debiasing framework for pretrained transformer encoders that learns separate task and sensitive branches, suppresses the sensitive branch through calibrated Gaussian noise, and combines it with fairness optimization constraint to improve downstream fairness.
    \item We demonstrate empirically that reducing sensitive-attribute leakage in the representation used by the downstream classifier can facilitate fairer predictions while preserving task performance.
    \item Extensive experiments on vision and language benchmarks show that FairNVT consistently reduces sensitive-attribute predictability, improves prediction-level fairness, and maintains competitive task performance, yielding a favorable fairness--utility trade-off.
\end{itemize}

\section{Preliminaries: Fairness in Classification}

We consider two complementary notions of fairness: \emph{prediction-level} fairness and \emph{representation-level} fairness.
\textit{Prediction-level} fairness is commonly studied under group fairness criteria, which seek to reduce the dependence between model predictions and sensitive attributes.
For example, for data-label pair $(X, Y)$, where the features $X$ may encode sensitive information $S$, a classifier $f: X \rightarrow \hat{Y}$ is considered fair with respect to demographic parity (DP) difference if $P(\hat{Y}\mid S=s) = P(\hat{Y}\mid S=s')$, i.e., the prediction distribution is invariant to the sensitive attribute. Such notion of fairness is widely studied in prior work, including \cite{park2022faircontrastivelearningfacial, tian2024fairvitfairvisiontransformer, Park_2024_CVPR}.

\textit{Representation-level} fairness instead considers the learned representations. A common objective is to learn representations that are statistically independent of sensitive attributes, i.e. $Z \ind S$. 
Since exact independence is difficult to verify empirically~\cite{liu22fairrepresentation}, prior work typically evaluates representation-level fairness by measuring how accurately the sensitive attribute can be inferred from the learned representation~\citep{ravfogel-etal-2020-null, kumar-etal-2023-parameter}.

These two notions are closely related: when predictions depend only on the learned representation $\Hat{Y}=h(Z)$, statistical independence $Z \ind S$ implies $P(\hat{Y}\mid S=s) = P(\hat{Y}\mid S=s') = P(\hat{Y})$.
Although exact independence is generally unattainable in practice, this observation motivates our approach, which reducing sensitive-attribute information in the representation provided to the downstream classifier can facilitate fairer predictions.
We discuss the related works in Appendix \ref{appendix:related_work}, and further discuss this motivation in Appendix \ref{appendix:certification}. Next, we outline how our model supports these objectives.
\section{FairNVT Framework}
\label{sec:method}

\begin{figure*}[htb]
\small

\centering
\scalebox{1.0}{
\begin{tikzpicture}[
    froze/.style={rectangle, rounded corners, draw=black, fill=oceanblue!40, thick, text width=2cm, align=center},
    source/.style={rectangle, rounded corners, draw=black, fill=lightgreen!40, thick, text width=2cm, align=center},   sens/.style={rectangle, rounded corners, draw=black, fill=warmamber!40, thick, text width=2cm, align=center},
    task/.style={rectangle, rounded corners, draw=black, fill=white!40, thick, text width=2cm, align=center},
    fusion/.style={rectangle, rounded corners, draw=black, fill=lightgreen!40, thick, text width=2cm, align=center},
    fusion_en/.style={rectangle, rounded corners, draw=black, fill=lightgreen!40, thick, text width=2.5cm, align=center},
    attack/.style={rectangle, rounded corners, draw=black, fill=pink!40, thick, text width=2.75cm, align=center},
    label/.style={rectangle, rounded corners, draw=black, fill=white!40, thick, text width=1.5cm, align=center},
    reusable/.style={rectangle, rounded corners, draw=black, fill=pink!40, thick, text width=2cm, align=center},
    metric/.style={rectangle, rounded corners, draw=black, fill=pink!40, thick, text width=2cm, align=center},
    arrow/.style={-Latex, thick},
]

\node[froze] (Frozen) {{\bf Pretrained Model} \\ {(Frozen \SnowflakeChevron)}};
\node[label, above right= 0.60 and -2.2cm of Frozen] (train) {Train};

\node[sens, above right=0.4cm and 0.4cm of Frozen] (Sens-ada) {Sensitive \\ Adapter\\[0.1cm]};
\node[sens, right=0.7cm of Sens-ada] (Sens-head) {Sensitive \\ Head\\[0.1cm]};
\node[sens, right=0.7cm of Sens-head] (Sens-Loss) {Sensitive \\ Loss\\[0.1cm]};

\node[task, below right=0.4 and 0.4cm of Frozen] (task-ada) {Task \\ Adapter\\[0.1cm]};
\node[fusion, right=0.7cm of task-ada] (fusion) {Embeddings \\ Fusion\\[0.1cm]};
\node[task, right=0.7cm of fusion] (task-head) {Task \\ Head\\[0.1cm]};
\node[task, right=0.4cm of task-head] (task-loss) {Task \\ Loss\\[0.1cm]};
\node[fusion, above right=0.2 and 0.4cm of task-head] (dp-loss) {Fairness \\ Loss\\[0.1cm]};

\node[fusion, below =0.58cm of Sens-ada] (orth-loss) {Orthogonality \\ Loss\\[0.1cm]};
\node[source,  below= 0.58cm of Sens-head] (Noise-Injection) {Noise \\Injection};

\draw[arrow, double] (Frozen) -- node[above,left= 0.1cm and 0.1cm] {}(Sens-ada);
\draw[arrow] (Sens-ada) -- node[midway, above] {$e_s^{\text{clip}}$} (Noise-Injection);
\draw[arrow] (Sens-ada) -- node[midway, above] {$e_s$}(Sens-head);
\draw[arrow] (Sens-head) -- (Sens-Loss);

\draw[arrow, double] (Frozen) -- node[above,left= 0.1cm and 0.1cm] {}(task-ada);
\draw[arrow] (task-ada) --  node[midway, below] {$e_t$} (fusion);
\draw[arrow] (Noise-Injection) --  node[midway, right] {$e_s^{\text{noised}}$} (fusion);

\draw[arrow] (fusion) -- node[midway, below] {$e_f$} (task-head);
\draw[arrow] (task-head) -- (task-loss);
\draw[arrow] (task-head.east) -- ++(0.1,0) |- (dp-loss);

\draw[arrow] (Sens-ada) -- node[midway, right] {$e_s$} (orth-loss);
\draw[arrow] (task-ada) -- node[midway, right] {$e_t$}(orth-loss);

\draw[dashed,thick] plot[const plot] coordinates{(-2,-2.1)(12.4,-2.1)};

\node[froze, below =2.8cm of  Frozen] (Frozen2) {{\bf Pretrained Model} \\ {(Frozen \SnowflakeChevron)}};

\node[label, above right= 0.60 and -2.2cm of Frozen2] (train2) {Inference};

\node[ sens, above right=0.3 cm and 0.4cm of Frozen2] (Sens-ada2) {Sensitive \\ Adapter\\[0.1cm]};

\node[task,task, below right=0.3 and 0.4cm of Frozen2] (task-ada2) {Task \\ Adapter\\[0.1cm]};

\node[source, below right=-2.4 and 0.75cm of task-ada2] (Sens-smooth2) {Noise \\ Injection\\[0.1cm]};


\node[fusion, below right= -0.9cm and 0.75cm of task-ada2] (fusion_inference) {Embeddings \\ Fusion\\[0.1cm]};


\node[attack, below right= -2.7 and 0.25 cm of fusion_inference] (attack) {Attacker \\ (Predicting Sensitive Attribute) \\[0.1cm]};

\node[task,  below right=-0.9 and 0.7cm of fusion_inference] (task-head2) {Task \\ Head\\[0.1cm]};
\node[metric,  above right=0.45cm and  0.8cm of task-head2] (task-acc) {Task \\ Accuracy\\[0.1cm]};
\node[metric,   above right= -1.05cm  and 0.8cm of task-head2] (dp-eo) {DP \\ EOpp\\ Eq Odds \\[0.1cm]};

\draw[arrow, double] (Frozen2) -- node[above,left= 0.1cm and 0.1cm] {} (Sens-ada2);
\draw[arrow, double] (Frozen2) --node[above,left= 0.1cm and 0.1cm] {} (task-ada2);
\draw[arrow] (Sens-ada2) --node[midway, above] {$ e_s^{\text{clip}}$} (Sens-smooth2);
\draw[arrow] (Sens-smooth2) -- node[midway, left] {$e_s^{\text{noised}}$}(fusion_inference);

\draw[arrow] (task-ada2) --node[midway, above] {$e_t$} (fusion_inference);


\draw[arrow] (fusion_inference) -- node[midway, above] {${e}_f$} (attack);
\draw[arrow] (fusion_inference) -- node[midway, above] {$e_f$} (task-head2);
\draw[arrow] (task-head2.east) -- ++(0.5,0) |-(task-acc);
\draw[arrow] (task-head2.east) -- (dp-eo);
    
\end{tikzpicture}

}

\caption{\textbf{Overview of the proposed FairNVT framework.}
During \textit{training}, a frozen ViT backbone is attached with lightweight {task} and {sensitive} adapters, where trainable parameters are attached to each Transformer layer (indicated by $\Rightarrow$). The adapters yield task ($e_t$) and sensitive ($e_s$) embeddings. The sensitive path inputs $e_s$ for the sensitive head and a clipped and noised embedding $e_s^{\mathrm{noised}}$ ($e_s^{\mathrm{clip}}$ injected with noise), that is concatenated with $e_t$ to get the fused embedding $e_f$ for task prediction. We jointly optimize a weighted sum of task and sensitive classification losses, orthogonality and fairness losses. During \textit{inference}, the sensitive and task adapters remain frozen at trained weights. A random noise is drawn from the same distribution during training to obfuscate the sensitive embedding and fused with the task embedding before entering the task classification head. Fairness metrics are calculated from the fused embedding $e_f$. 
\textcolor{oceanblue}{Blue} marks the frozen backbone; \textcolor{warmamber}{Orange} extracts sensitive information; \textcolor{lightgreen}{Green} performs debiasing, fusion, and prediction; \textcolor{pink}{Red} marks the evaluation metrics.
}
\label{fig:overview_model} 
\end{figure*}

FairNVT mitigates bias in downstream predictions from frozen pretrained embeddings by encouraging the separation of task-relevant and sensitive-attribute information, selectively suppressing the sensitive branch, and jointly optimizing for fairness. Figure~\ref{fig:overview_model} provides an overview of the proposed framework. The key design principle is to reduce the sensitive-attribute information available to the downstream task classifier while preserving task-relevant features.
To achieve this goal, FairNVT consists of three key components. First, the frozen backbone representation is connected to separate Task and Sensitive Adapters, each supervised by its corresponding classification objective. This split-adapter architecture encourages the Sensitive Adapter to capture sensitive-attribute-related information in a dedicated branch while preserving task-relevant information in the Task Adapter. An orthogonality objective further encourages the two branches to learn complementary representations. Second, calibrated Gaussian noise is injected into the Sensitive Adapter before it is fused with the Task embedding, suppressing the sensitive information in the pathway while retaining task-relevant features. Third, the adapters and classification heads are jointly optimized using task and sensitive classification losses together with an orthogonality loss and a fairness loss. While the classification and orthogonality objectives encourage the two branches to learn separated information, the fairness loss acts on the downstream task classifier to mitigate the residual sensitive information that remains in the fused representation.
Section \ref{subsec:model_components} discusses the model components. Section \ref{subsec:train} introduces the optimization objectives, and \ref{subsec:inference} describes the training and inference procedures.

\subsection{Model Components}
\label{subsec:model_components}

\paragraph{Adapters and classification heads.}

\label{paragraph:adapter}
We use the Adapter modules to extract task-relevant and sensitive information from the frozen pre-trained models, with supervision from the task and sensitive labels. The Adapters are model-agnostic, lightweight blocks of trainable parameters attached to various blocks of the frozen pre-trained model. For example, for the image classification tasks which we use the Vision Transformer (ViT, \cite{vit_paper}) as frozen pre-trained models, the Adapters \footnote{The Bottleneck Adapter architecture is shown in https://docs.adapterhub.ml/methods.html\#bottleneck-adapters} 
are bottleneck feed-forward layers attached to each Transformer layer, consisting of down-projection matrix to project the hidden states into a lower dimension layer, and an up-projection matrix to project back into the original hidden dimension \citep{poth-etal-2023-adapters}. 
To encourage the model to learn separate task and sensitive representations, we attach separate adapters together with their corresponding classification heads. We use the class token representation (\texttt{[CLS]} token embedding) from the adapted ViT model as the task ($e_t$) and sensitive ($e_s$) embeddings, with only the task and sensitive Adapters activated respectively. Each embedding is passed to a lightweight multi-layer perceptron (MLP) that predicts its corresponding label. As described in Section~\ref{subsec:train}, the orthogonality objective further encourages these two embeddings to capture complementary information, providing a dedicated branch on which subsequent noise injection can be applied.

\paragraph{Noise injection.}
\label{paragraph:noise_inject}
To introduce perturbation to the targeted sensitive information, we clip the sensitive embedding and add random noise sampled from a Gaussian distribution.
Specifically, let $e_s$ be the sensitive embedding vector, we clip the sensitive embedding to upper-bound its $L2$-norm to $C$, $e_s^{\textnormal{clip}}=e_s / \max(1, \frac{||e_s||_2}{C})$, where $C$ is a hyperparameter to control the embedding scale. Clipping bounds the scale of the embedding, allowing the subsequent noise magnitude to be calibrated relative to a fixed norm. The noise $z$ is then randomly drawn from a Gaussian distribution with mean zero and variance scales with $C$ , i.e., $z\sim \mathcal{N}$(0, $C^2\sigma^2\mathbb{I}^d$), where $\sigma$ controls the noise level and $d$ is the dimension of $e_s$. Finally, the perturbed embedding is obtained by adding noise $z$ isotopically to the sensitive embedding, $e_s^{\textnormal{noised}}=e_s^{\textnormal{clip}} + z$. Using zero-mean Gaussian noise obfuscates the sensitive information while avoiding a systematic shift in the embedding.

\paragraph{Embedding fusion.}
After perturbing the sensitive embedding, we concatenate the noised sensitive embedding with the task embedding to form the representation used by the downstream task classifier.
We adopt two steps to encourage the Task and Sensitive Adapter to capture the corresponding information. First, only the clean sensitive embedding ($e_s$) is supervised by the sensitive classification head, while the noised embedding ($e_s^{\textnormal{noised}}$) is provided only to the downstream task classifier. This allows the Sensitive Adapter to continue learning sensitive-attribute-related information from clean supervision while the downstream classifier receives only the perturbed information. Second, we stop the gradient from the task classification loss from flowing through $e_s^{\textnormal{noised}}$, such that the learned task information does not interfere with the Sensitive Adapter.

\subsection{Optimization Objectives}
\label{subsec:train}
The proposed framework is trained jointly with classification, orthogonality and fairness losses to balance between making accurate and fair task predictions. We describe each loss and their objectives in this section.

\paragraph{Classification loss.}
The cross-entropy loss is used for each classification head to evaluate the predictive performances. Let $i, k$ be the sample and class index, $\theta$ be the model parameters, $(x_i, y_i)$ be each data-label pair, $\hat{y}$ be the predicted label, then the cross-entropy loss for predicting task ($t$) and sensitive ($s$) labels are,
\begin{equation}
\begin{aligned}
\label{eq:loss_ce}
    L_{\textnormal{ce}}^{\alpha}(\theta)
    =-\frac{1}{n}\sum_{i=1}^{n}\sum_{k=1}^{K}y_{i, \alpha}^k\log p_\theta(\hat{y}_{i, \alpha}=k|x_i), \alpha\in\{s,t\}.
\end{aligned}
\end{equation}
\paragraph{Orthogonality loss.}
Since the Task and Sensitive Adapters are initialized from the same frozen backbone, their learned representations may initially contain similar information. Moreover, when the task label is highly correlated with the sensitive attribute (e.g., predicting \emph{wearing glasses} when the sensitive attribute is \emph{age}), the task objective alone may encourage the Task Adapter to exploit sensitive-attribute information. To encourage the two branches to separate, we combine a cosine-similarity loss between the task and sensitive embeddings with the Hilbert--Schmidt Independence Criterion (HSIC)~\citep{hsic} between the task embedding and the sensitive attribute labels.
Let $e_t$, $e_s$ denote the task and sensitive embedding that depend on $\theta$ and let $s$ denote the vector of sensitive attributes,
\begin{equation}
\begin{aligned}
\label{eq:loss_cossim}
    L_{\textnormal{orth}}(\theta) = \textnormal{CosSim}(e_t, e_s)+\textnormal{HSIC}(e_t, s),
\end{aligned}
\end{equation}
where $\textnormal{CosSim}(e_t, e_s)= \frac{1}{n}\sum_{i=1}^{n}(e_{t,i}^\top e_{s,i} /{||e_{t,i}||_2||e_{s,i}||_2})^2$, and $\textnormal{HSIC}(e_t, s)=\frac{1}{(n-1)^2}\textnormal{tr}(KHLH)$ where $K_{ij}=\exp(-||e_{t,i}-e_{t,j}||^2_2/2\sigma^2)$ is the kernel matrix of the embeddings, $L_{ij}=\mathbbm{1}(s_i=s_j)$ is the kernel matrix of the sensitive attributes, and $H=\mathbf{I}_n-\frac{1}{n}\mathbf{1}_n\mathbf{1}_n^\top$ is the centering matrix.
The cosine-similarity term encourages the Task and Sensitive Adapters to learn complementary representations, whereas the HSIC term reduces the statistical dependence between the task embedding and the sensitive attribute. 

\paragraph{Fairness loss.}
While the orthogonality loss and noise injection help disentangling and suppressing the sensitive information seen by the classifier head, the remaining sensitive information flow in from the task path is handled with a fairness loss to ensure making fair predictions in the task classifier. Following the definition of demographic parity difference \citep{agarwal2019fairregressionquantitativedefinitions}, 
let $n_0, n_1$ be the number of samples in a batch belonging to sensitive group $0, 1$ respectively, $p=p_{\theta}(\hat{y}=1|x)$ be the probability of predicting the positive class of label $y$, and $\mathbf{1}[\cdot]$ be the indicator function then,
\begin{equation}
\begin{aligned}
\label{eq:loss_dp}
    L_{\textnormal{dp}}(\theta)=\Bigl\lvert\frac{1}{n_0}\sum_{i=1}^{n_0}\mathbf{1}[s_i=0]p_{i} - \frac{1}{n_1}\sum_{j=1}^{n_1}\mathbf{1}[s_j=1]p_{j}\Bigr\rvert.
\end{aligned}
\end{equation}
We optimize demographic parity during training because it provides a simple and stable surrogate that penalizes disparities in predictions across sensitive groups. In contrast, objectives such as equalized odds or equal opportunity additionally condition on the task label, resulting in more complex and potentially less stable optimization. Although the proposed loss explicitly optimizes only demographic parity, encouraging the classifier to rely less on sensitive-attribute information can also improve other group fairness metrics in practice. We therefore report demographic parity, equalized odds, and equal opportunity as evaluation metrics.

The overall loss is weighted to adjust the scale differences between the three losses and to allow flexibility of viewing different importance of the optimization targets, $L=L_{\textnormal{ce}}^{\textnormal{t}} + \beta_1 L_{\textnormal{ce}}^{\textnormal{s}} + \beta_2 L_{\textnormal{orth}} + \beta_3 L_{\textnormal{dp}}$,
where $\beta$s are hyperparameters representing weights on each loss.

\subsection{Training and Inference Procedures}
\label{subsec:inference}
The arrows in Figure \ref{fig:overview_model} show the forward pass direction. During backpropagation, only the Adapters and classification heads parameters are updated with loss $L$, while the pre-trained model remains frozen. The noise injection and embedding concatenation steps do not induce learnable parameters.
During training, lightweight task and sensitive adapters are attached to the frozen backbone to produce the task embedding $e_t$ and sensitive embedding $e_s$. The sensitive embedding is clipped and noised to obtain the embedding $e_s^{\text{noised}}$, which is then concatenated with $e_t$ to form the fused embedding $e_f=[e_t, e_s^{\text{noised}}]$. The fused embedding is used as the input to the task classifier head. The adapters and classifier parameters are jointly optimized with loss $L$ to encourage fairer task predictions.
During inference, the adapters and classifier head are fixed at the learned parameters and applied directly to the input data. 

We make the following key remarks for the training and inference pipeline.
First, noise is sampled randomly from a fixed distribution and injected during both training and inference stages. Using the same perturbation mechanism at both stages ensures that the classifier is trained and evaluated under consistent conditions, encouraging the classifier to rely less on information carried by the Sensitive Adapter at inference time. 
Second, fairness evaluations are conducted on the fused embedding $e_f$, as it is the representation used by the trained classifier and therefore determines the model’s predictions and reflects the information accessible to the decision-making process. 
Finally, the sensitive attribute labels are only used during training to supervise the Sensitive Adapter and compute the fairness objective. They are not accessed during inference, as the classifiers remain fixed at the trained parameters.

\section{Experiments}
\label{sec:exp}

We examine the performance of FairNVT on image classification tasks using the CelebA and UTKFace datasets, with a pretrained ViT-B/16 model as the frozen backbone. Across multiple task and sensitive attribute pairs, FairNVT demonstrate strong performance, achieving high task accuracy while improving prediction fairness with respect to the sensitive attribute (\S\ref{subset:main_results}). We further validate our hypothesis that suppressing sensitive information through controlled noise can improve prediction-level fairness while preserving task-relevant signals (\S\ref{subsec:ablations}). Implementation details are discussed in Appendix~\ref{appendix:experiment_setup}. In appendix~\ref{appendix:text_experiments}, we extend FairNVT to the text domain and present results on the BIOS dataset, where a pre-trained Bert-Base model is used as the frozen backbone. 

\paragraph[]{Datasets\footnote{The datasets are publicly available and include perceived annotations provided by the dataset creators. We use these labels solely for modeling and fairness evaluation to compare with previously published results on these benchmarks.} and tasks.}
We use publicly available datasets {CelebA}~\citep{celaba} and {UTKFace}~\citep{zhifei2017cvpr} for facial attribute classification. 
CelebA contains roughly 200K images with attribute annotations. 
Following prior work~\citep{tian2024fairvitfairvisiontransformer,park2022faircontrastivelearningfacial}, we consider perceived gender or age as sensitive attributes, and we study target attributes such as expression(smiling), big nose and wavy hair. 
We use the official train/validation/test splits. UTKFace contains approximately 20K images with annotations including gender and age. 
To follow a binary fairness formulation~\citep{park2022faircontrastivelearningfacial}, we group age into $<35$ vs.\ $\ge 35$ and use age as the sensitive attribute and gender as the target attribute \footnote{Though there is no official train/validation/test splits, the dataset has three subsets, where we use subsets 1, 2, and 3 for training, validation, and testing, respectively.}. 
The sensitive attribute of Age (Young) in CelebA is imbalanced as 77\% majority vs. 23\% minority, whereas the Gender (Male) attribute is more balanced as 57\% majority vs. 43\% minority. The Age attribute in UTKFace has 64\% majority vs. 36\% minority.

\paragraph{Metrics.}
We evaluate task performance, prediction-level and representation-level fairness performances with standard metrics\footnote{https://fairlearn.org/} used by baseline methods. All reported values are scaled by $\times10^2$.

\noindent (1) Task performance is measured using \emph{accuracy} (Acc). We additionally report \emph{balanced accuracy} (BAcc), as the task attribute is often imbalanced, and relying solely on accuracy may lead to misleading evaluations of task performance.
\begin{equation}
    \textit{Acc} \coloneqq \frac{\text{TP}+\text{TN}}{\text{P}+\text{N}},\;
    \textit{BAcc}\coloneqq \frac{1}{2}
    \left(
        \frac{\text{TP}}{\text{TP} + \text{FN}} +
        \frac{\text{TN}}{\text{TN} + \text{FP}}
    \right),
\end{equation}
where P, N denote real positive and negatives, TP, TN, FP, and FN denote true/false positives/negatives, respectively.
To assess prediction-level fairness, we employ three widely used group fairness metrics.  Given true labels $Y \in \{0,1\}$ and predictions $\hat{Y}$, let $S \in \{0,1\}$ denote the binary sensitive attribute:

\noindent (2) \emph{Demographic Parity (DP)} computes the difference between the largest and smallest rates across all groups: 
\begin{equation}
\begin{aligned}
    \textit{DP}\coloneqq \max_{s} \mathbb{E}[\hat{Y}\mid S] - \min_{s} \mathbb{E}[\hat{Y}\mid S],
\end{aligned}
\end{equation}
which simplifies to $|\mathbb{E}[\hat{Y}=1\mid S=0]- \mathbb{E}[\hat{Y}=1\mid S=1]|$ in the binary case.

\noindent (3) \emph{Equalized Odds (EO)} adds conditioning on the task label compared to \emph{DP}. EO evaluates group fairness by averaging disparities in both true positive rates and false positive rates across sensitive groups: 
\begin{equation}
\begin{aligned}
    \textit{EO}\coloneqq\frac{1}{2}\!\sum\nolimits_{y\in\{0,1\}}&\!\big|\mathbb{E}[\hat{Y}=1\mid Y=y,S=0]- \mathbb{E}[\hat{Y}=1\mid Y=y,S=1]\big|,
\end{aligned}
\end{equation}
(4) \emph{Equal Opportunity (EOpp)} is a relaxed version of \emph{EO} that only considers conditional expectations with respect to positive task labels. EOpp considers the disparities in true positive rates only: 
\begin{equation}
\begin{aligned}
    \textit{EOpp}\coloneqq |\mathbb{E}&[\hat{Y}=1\mid Y=1,S=0]- \mathbb{E}[\hat{Y}=1\mid Y=1,S=1]|.
\end{aligned}
\end{equation}
We report absolute DP/EO/EOpp gaps throughout, in all three metrics, lower values indicate higher fairness level.
To assess representation-level fairness, we examine the prediction accuracy of the sensitive attribute from an attacker. Lower values indicate higher fairness level.

\noindent (5) \emph{Attacker accuracy/balanced accuracy (Att.Acc/BAcc)} \footnote{Attacker performance is used in ablation studies only to analyze sensitive attribute leakage in the framework.} measures sensitive-information leakage using a post-hoc attacker: a MLP trained to predict $S$ from embeddings at a saved checkpoint (encoder frozen). Lower attacker accuracy indicates less recoverable sensitive information. 
Architecture and training details of the attacker model are provided in Appendix \ref{appendix:experiment_setup}.

\paragraph{Baselines.} 
We compare our approach with the vanilla setup, and several image-based fair classification baselines under a unified evaluation protocol. 
\begin{itemize}
  \item \textbf{Vanilla (ViT)}~\citep{vit_paper}: ViT with a task adapter and classification head trained, with no fairness intervention.
  \item \textbf{ViT-FSCL}~\citep{park2022faircontrastivelearningfacial}: Debiasing through contrastive learning; we re-implement it on a ViT backbone for consistent comparison.
  \item \textbf{FairViT}~\citep{tian2024fairvitfairvisiontransformer}: Debiasing via adaptive masking on ViT attention maps.
  \item \textbf{FairVPT}~\citep{Park_2024_CVPR}: Debiasing using Visual Prompt Tuning that adapts pre-trained ViT model to downstream classification tasks \footnote{FairVPT does not have official code release and is implemented by the authors based on the descriptions in the paper.}.
  \item \textbf{FairNet}~\citep{zhou2025fairnet}: Debiasing via a bias detector and conditional LoRA.
\end{itemize}

\subsection{Main Results}
\label{subset:main_results}

We compare FairNVT with the baselines on CelebA and UTKFace datasets. On CelebA, we evaluate three task--sensitive attribute pairs: Expression(Smiling)/Gender(Male), Big Nose/Age(Young), and Wavy Hair/Gender(Male). On UTKFace, we evaluate the Gender/Age task--sensitive attribute pair.

\begin{figure}[htb]
    \centering
    \includegraphics[width=\textwidth]{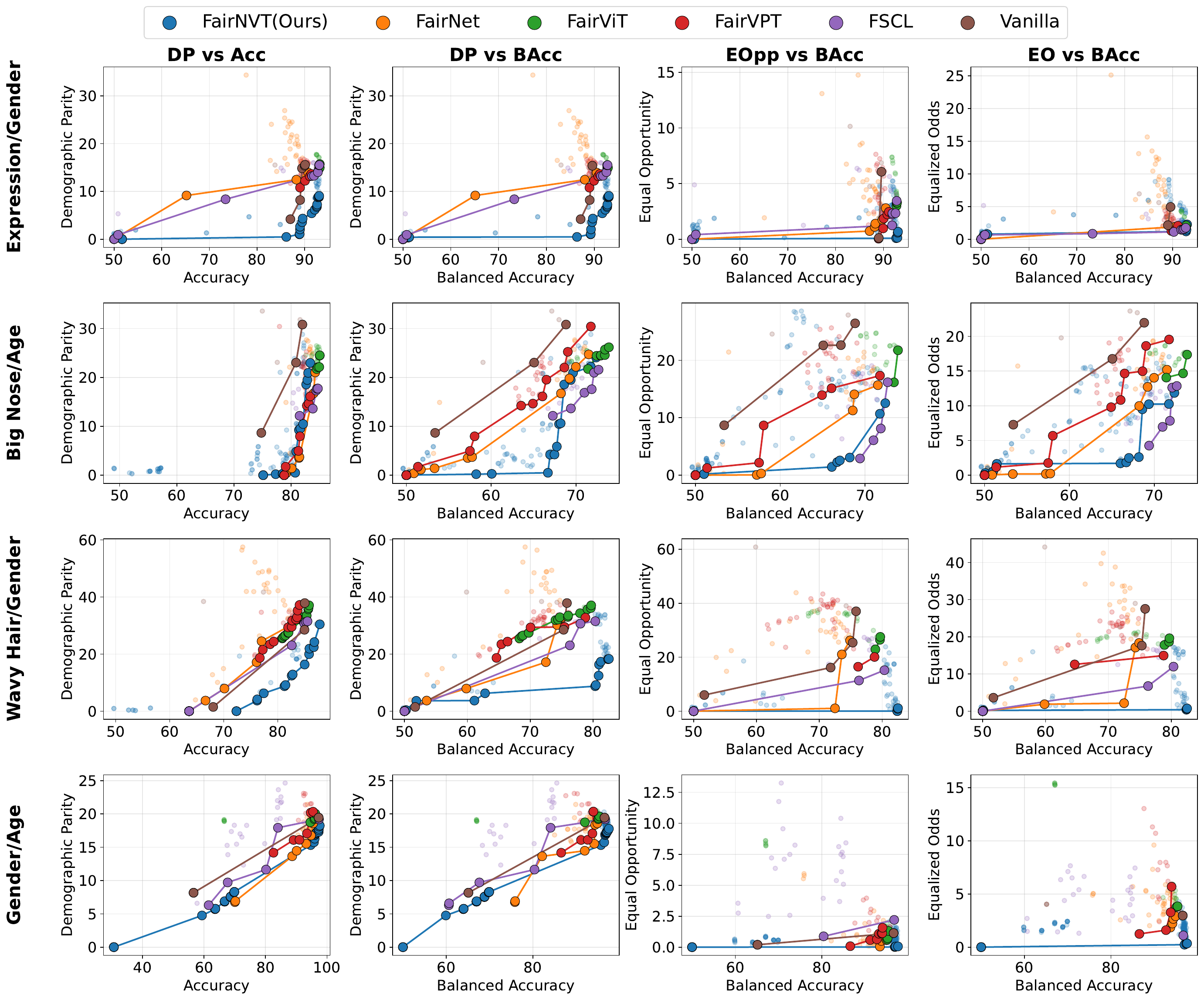}
    \caption{\textbf{Fairness--utility Pareto curves.} The lower-right region indicates higher task performance and lower fairness violations. FairNVT consistently achieves more favorable or competitive trade-offs than the baseline methods across the four task--sensitive attribute pairs.}
    \label{fig:main_pareto}
\end{figure}

\paragraph{Fairness--Utility tradeoff.}
Figure~\ref{fig:main_pareto} compares the Pareto curves of task performance and fairness scores for all methods. Each point corresponds to the test performance of a model trained with a different hyperparameter configuration. As higher task accuracy (or balanced accuracy) and lower fairness metric values indicate better performance, the lower-right corner represent more favorable fairness--utility trade-offs. The points on the Pareto curve thus consist of results from configurations for which no other configuration achieves both higher task performance and better fairness simultaneously.

Overall, FairNVT consistently lies on or near the Pareto frontier, indicating a more favorable fairness--utility trade-off than competing methods. Across the four experiments, FairNVT generally achieves higher accuracy (or balanced accuracy) at the same level of fairness, or equivalently, lower demographic parity (DP), equalized odds (EO), and equal opportunity (EOpp) violations for comparable predictive performance.
For the Expression/Gender and Wavy Hair/Gender tasks, the Pareto frontier of FairNVT is consistently shifted toward the desirable lower-right region, demonstrating that the proposed representation learning strategy better preserves task performance while improving fairness. On Big Nose/Age and Gender/Age tasks, FairNVT remains competitive, achieving comparable trade-offs to existing fairness-aware methods.
This suggests that the proposed disentangled representation and targeted perturbation provide a more favorable balance between utility and fairness than applying fairness regularization directly to a single representation.

\paragraph{Comparison at the best task accuracy.} While the Pareto frontiers characterize the complete trade-off of each method, practical deployment typically requires selecting a single model. Table~\ref{tab:all_results} reports the model achieving the highest validation task accuracy for each method to complement this analysis with a representative operating point. 
Best results are shown in \textbf{bold}, and the second-best results are \underline{underlined}. We report the mean and standard deviation of all evaluation metrics over three independent runs. For the baseline methods, the reported variability reflects randomness from model initialization and training. For FairNVT, it additionally includes the stochasticity introduced by sampling Gaussian noise during inference, thereby capturing both training and inference-time sources of randomness.

Across all four experiments, FairNVT consistently achieves comparable or higher task accuracy with competitive fairness metrics over the baselines. For example, on the Wavy Hair/Gender task, FairNVT attains the highest task accuracy of 87.8\%, exceeding the best baseline by 2\%, while simultaneously achieving the lowest EO, and EOpp violations. On UTKFace, FairNVT likewise maintains a high task accuracy of 98\% together with consistently improved fairness metrics.

\begin{table}[htb]
\centering
\caption{\textbf{Image-Based Classification task:} Comparing our method with baselines on CelebA (a-c) and UTKFace (d) dataset. FairNVT demonstrates strong performance in higher task performance while achieving fair predictions.}
\label{tab:all_results}

\begin{subtable}[t]{0.49\textwidth}
\centering
\resizebox{\textwidth}{!}{%
\setlength{\tabcolsep}{6pt}
\renewcommand{\arraystretch}{1.3}
\begin{tabular}{cccccc}
\toprule
\textbf{Method} & \textbf{Acc($\uparrow$)} & \textbf{BAcc($\uparrow$)} & \textbf{DP($\downarrow$)} & \textbf{EOpp($\downarrow$)} & \textbf{EO($\downarrow$)}  \\ \midrule
\textbf{Vanilla}& 89.6\std{0.5} &89.0\std{0.6} & 16.4\std{1.1} & 6.9\std{1.2} & 5.2\std{1.2} \\
\textbf{ViT-FSCL}      & 92.9\std{1.0} & 
92.6\std{1.0} & 15.5\std{2.2} & 3.5\std{1.2} & \bf{1.8\std{1.1}}  \\
\textbf{FairViT}       & \underline{93.1\std{0.2}} & \bf{92.8\std{0.3}} & 15.7\std{0.3} &  3.8\std{0.4} & 1.9\std{0.6}  \\
\textbf{FairVPT}       & 91.4\std{0.2} & 91.1\std{0.2} & \underline{13.6\std{0.3}} & \underline{2.5\std{0.3}} & 2.1\std{0.6} \\
\textbf{FairNet}       & 90.7\std{0.3} & 90.5\std{0.4} & 13.9\std{0.3} & 2.8\std{0.2} & \underline{1.9\std{0.4}} \\
\textbf{FairNVT(Ours)} & \bf{93.1\std{0.5}} & \underline{92.8\std{0.5}} & \bf{9.1\std{1.8}}  & \bf{0.7\std{1.3}} & 2.5\std{1.5} \\
\bottomrule
\end{tabular}%
}
\caption{Task: Expression (Smiling), Sensitive Attribute: Gender (Male)}
\label{tab:expression_gender}
\end{subtable}
\hfill
\begin{subtable}[t]{0.49\textwidth}
\centering
\resizebox{\textwidth}{!}{%
\setlength{\tabcolsep}{6pt}
\renewcommand{\arraystretch}{1.3}
\begin{tabular}{cccccc}
\toprule
\textbf{Method} & \textbf{Acc($\uparrow$)} & \textbf{BAcc($\uparrow$)} & \textbf{DP($\downarrow$)} & \textbf{EOpp($\downarrow$)} & \textbf{EO($\downarrow$)}  \\ \midrule
\textbf{Vanilla}       &81.2\std{0.8} & 67.9\std{1.2}& 30.8\std{1.2}& 26.4\std{0.2}&22.0\std{0.2} \\
\textbf{ViT-FSCL}      & \bf{84.7\std{0.8}} & 69.1\std{1.6} & \underline{17.7\std{2.4}} & \underline{17.2\std{2.1}} & 11.5\std{2.8} \\
\textbf{FairViT}       & \underline{84.6\std{0.2}} & \bf{71.9\std{0.2}} & 24.5\std{0.8} & 22.5\std{1.4} & 16.7\std{1.2}  \\
\textbf{FairVPT}       & 83.1\std{0.2} & 66.0\std{0.3} & \bf{16.1\std{1.0}} & \bf{15.1\std{1.0}} & \bf{11.0\std{0.8}}  \\
\textbf{FairNet}       & 84.2\std{0.3} & 68.0\std{0.2} & 21.0\std{0.5} & 20.4\std{0.8} & 15.2\std{0.7}  \\
\textbf{FairNVT(Ours)} & 83.4\std{0.2} & \underline{71.5\std{0.5}} & 22.9\std{1.5} &17.3\std{0.8}&\underline{11.1\std{0.2}}  \\
\bottomrule
\end{tabular}%
}
\caption{Task: Big Nose, Sensitive Attribute: Age (Young)}
\label{tab:bignose_young}
\end{subtable}

\begin{subtable}[t]{0.49\textwidth}
\centering
\resizebox{\textwidth}{!}{%
\setlength{\tabcolsep}{6pt}
\renewcommand{\arraystretch}{1.3}
\begin{tabular}{cccccc}
\toprule
\textbf{Method} & \textbf{Acc($\uparrow$)} & \textbf{BAcc($\uparrow$)} & \textbf{DP($\downarrow$)} & \textbf{EOpp($\downarrow$)} & \textbf{EO($\downarrow$)}   \\ \midrule
\textbf{Vanilla}   & {84.4\std{0.5}} & 75.9\std{0.1} & 37.8\std{0.7} & 37.0\std{0.5} & 27.6\std{0.7}  \\
\textbf{VIT-FSCL}  & 85.5\std{0.5} & 77.8\std{2.7} & 31.5\std{1.1} & \underline{26.4\std{2.1}} & \bf{16.1\std{2.6}}  \\
\textbf{FairViT}   & \underline{85.8\std{0.4}} & \underline{79.7\std{0.3}} & 37.1\std{0.8} & 27.5\std{1.1} & 19.6\std{0.9}  \\
\textbf{FairVPT}       & 84.0\std{0.4} & 74.8\std{0.4} & 37.2\std{0.8} & 38.1\std{0.8} & 24.9\std{1.0} \\
\textbf{FairNet}       & 82.5\std{0.3} & 74.3\std{0.2} &\bf{ 30.2\std{0.7}} & 27.1\std{0.6} & 17.1\std{0.9} \\
\textbf{FairNVT (Ours)} & \bf{87.8\std{0.3}} & \bf{80.6\std{0.2}} & \underline{30.4\std{0.9}} & \bf{23.6\std{0.6}}  &\underline{17.1\std{0.7}} \\
\bottomrule
\end{tabular}%
}
\caption{Task: Wavy Hair, Sensitive Attribute: Gender (Male)}
\label{tab:hat_gender}
\end{subtable}
\hfill
\begin{subtable}[t]{0.49\textwidth}
\centering
\resizebox{\textwidth}{!}{%
\setlength{\tabcolsep}{6pt}
\renewcommand{\arraystretch}{1.3}
\begin{tabular}{cccccc}
\toprule
\textbf{Method} & \textbf{Acc($\uparrow$)} & \textbf{BAcc($\uparrow$)} & \textbf{DP($\downarrow$)} & \textbf{EOpp($\downarrow$)} & \textbf{EO($\downarrow$)} \\ \midrule
\textbf{Vanilla}       & 97.3\std{0.1} & 96.0\std{0.5} & 19.5\std{0.3} & 1.3\std{0.2} & 3.1\std{0.2} \\
\textbf{ViT-FSCL}      & \underline{97.4\std{0.2}} & \underline{96.7\std{0.5}} & 19.2\std{0.1} & 2.2\std{0.3} & \underline{1.1\std{0.1}} \\
\textbf{FairViT}       & 96.2\std{0.0} & 95.0\std{0.1} & 20.0\std{0.9} & 1.6\std{0.3} & 4.5\std{0.4} \\
\textbf{FairVPT}       & 95.4\std{0.1} & 93.9\std{0.2} & 20.3\std{0.2} & 1.1\std{0.4} & 5.9\std{0.5}   \\
\textbf{FairNet}       & 95.5\std{0.3} & 94.9\std{0.3} & \underline{18.6\std{0.2}} & \underline{0.6\std{0.8}} & 2.9\std{0.4}   \\
\textbf{FairNVT(Ours)} & \bf{97.7\std{0.5}} & \bf{97.4\std{0.5}} & \bf{18.2\std{0.7}} & \bf{0.6\std{0.2}} & \bf{0.9\std{0.7}} \\
\bottomrule
\end{tabular}%
}
\caption{Task: Gender, Sensitive Attribute: Age}
\label{tab:utk_gender_age}
\end{subtable}
\end{table}

\paragraph{Qualitative results.}
Figure~\ref{fig:celeba_heatmaps} visualizes model attributions on \textit{CelebA}, where task is \textit{expression (smiling}) and sensitive attribute is \textit{gender (male)}.
We observe that the {Vanilla} model frequently relies on irrelevant background or on gender-correlated regions (e.g., hair/beard), 
while other baseline models down-weight such cues. 
Our method consistently attends more to expression-relevant regions such as the mouth, cheeks, and eyes, while effectively suppressing gender-related cues. 
This behavior aligns with the observed quantitative improvements in DP and EOpp, as well as the significant reduction in attacker accuracy. 

\begin{figure*}[tbh!]
\centering
\scriptsize
\newcommand{\imgbox}[1]{\makebox[1.8cm]{#1}}
\renewcommand{\arraystretch}{0}
\setlength{\tabcolsep}{0pt}
\resizebox{\textwidth}{!}{%
\begin{tabular}{@{}c@{\hspace{-7pt}}c@{\hspace{-7pt}}c@{\hspace{-7pt}}c@{\hspace{-7pt}}c@{\hspace{10pt}}c@{\hspace{-7pt}}c@{\hspace{-7pt}}c@{\hspace{-7pt}}c@{\hspace{-7pt}}c@{}}
\scriptsize Input &
\scriptsize Vanilla (ViT) &
\scriptsize ViT-FSCL &
\scriptsize FairViT &
\scriptsize \textbf{FairNVT} (Ours) &
\scriptsize Input &
\scriptsize Vanilla (ViT) &
\scriptsize ViT-FSCL &
\scriptsize FairViT &
\scriptsize \textbf{FairNVT} (Ours) \\
\imgbox{\includegraphics[height=1.6cm]{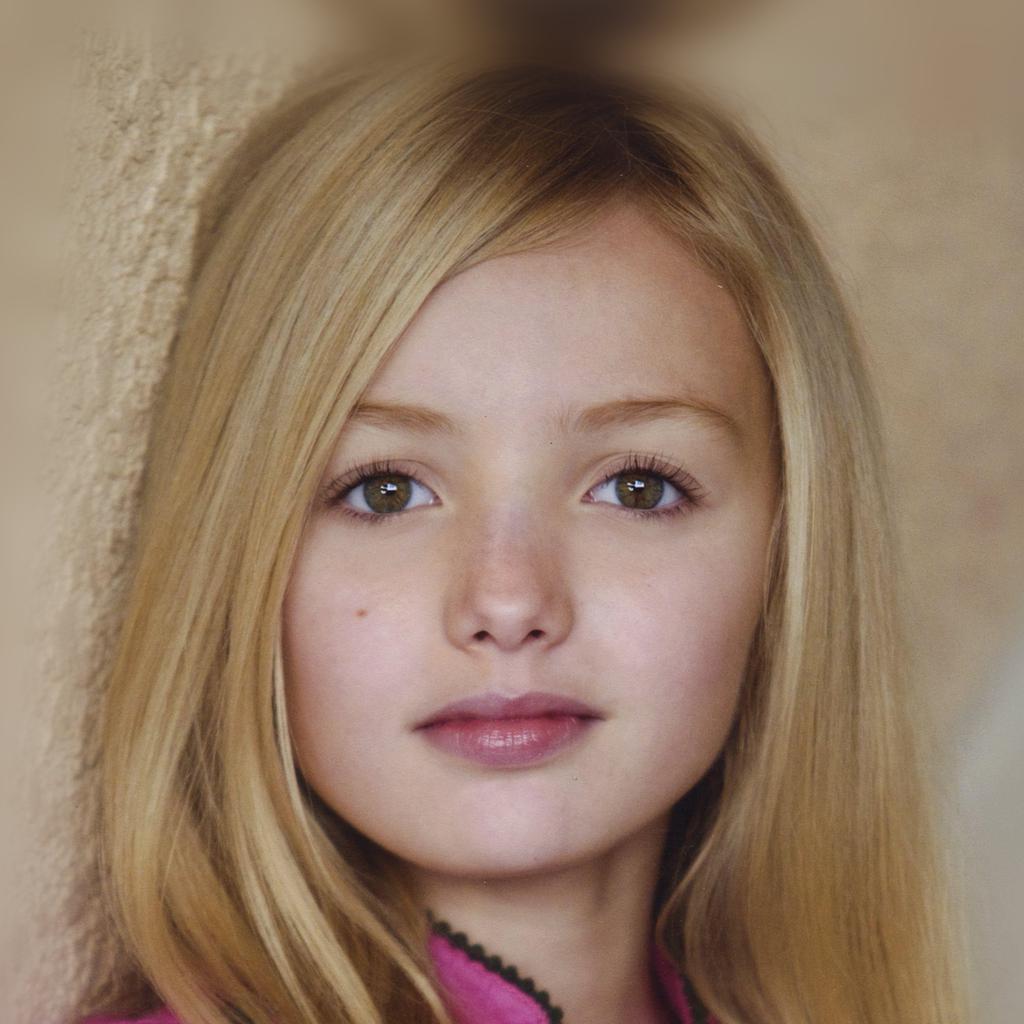}} &
\imgbox{\includegraphics[height=1.6cm]{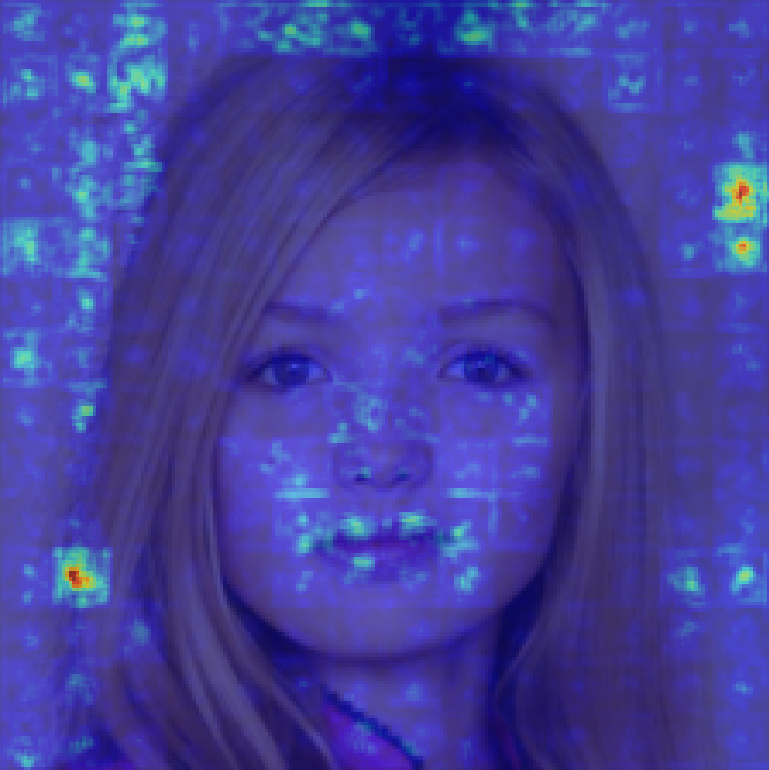}} &
\imgbox{\includegraphics[height=1.6cm]{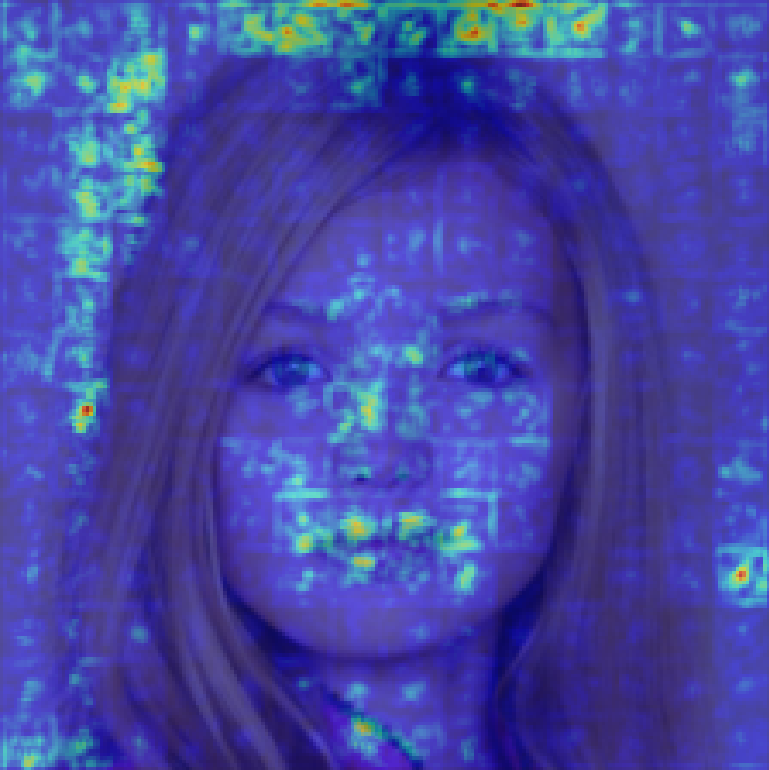}} &
\imgbox{\includegraphics[height=1.6cm]{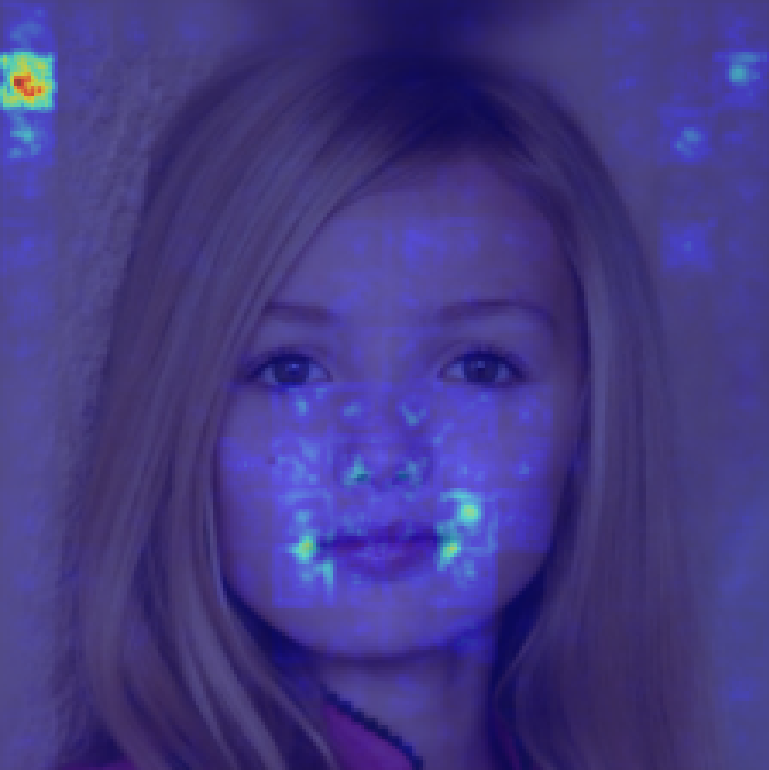}} &
\imgbox{\includegraphics[height=1.6cm]{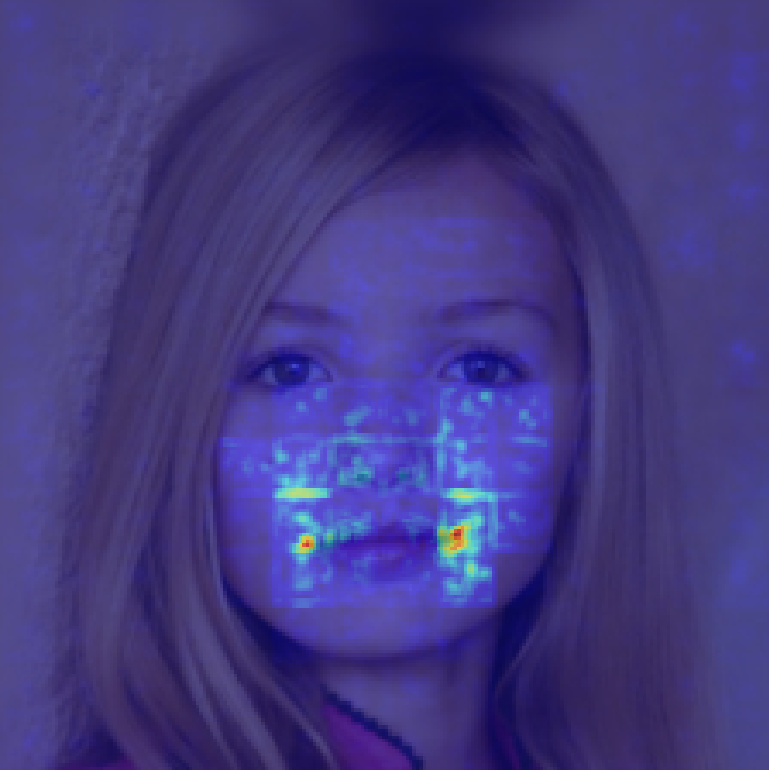}} &
\imgbox{\includegraphics[height=1.6cm]{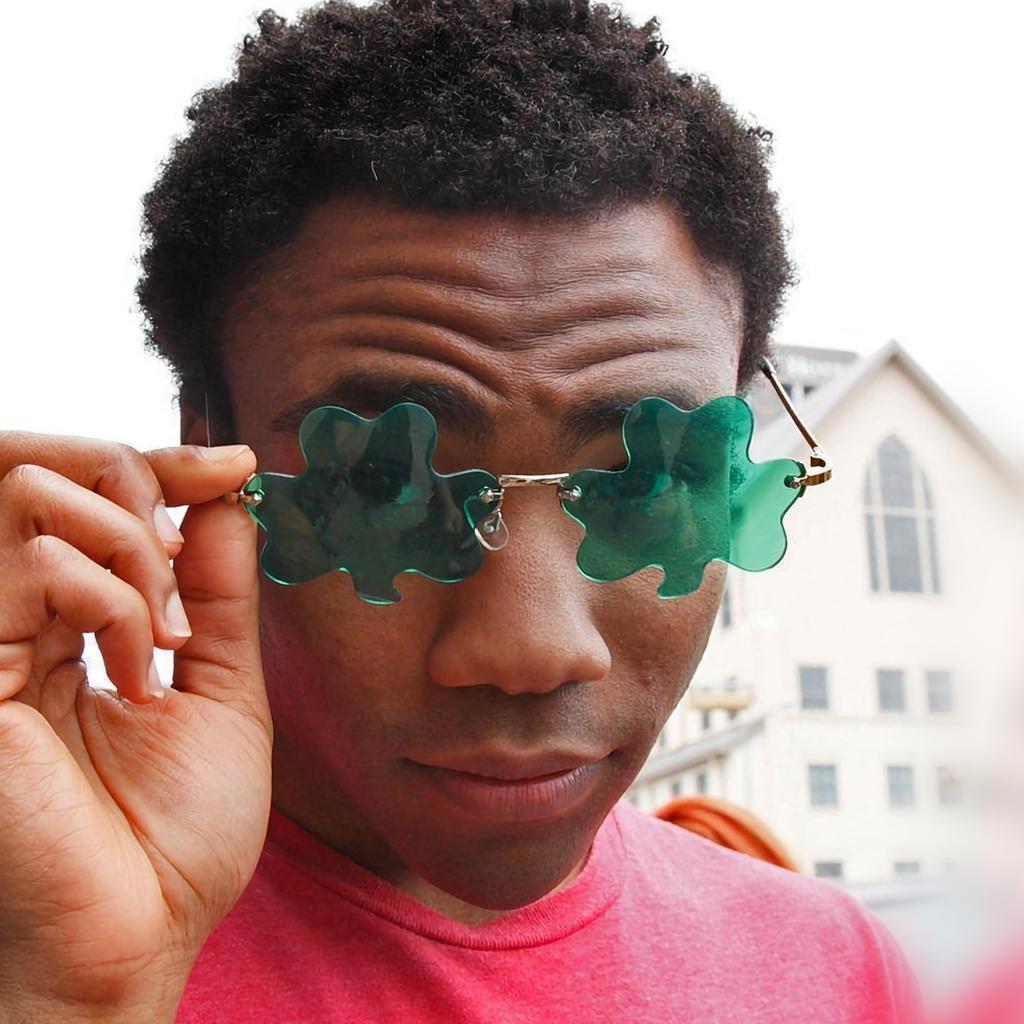}} &
\imgbox{\includegraphics[height=1.6cm]{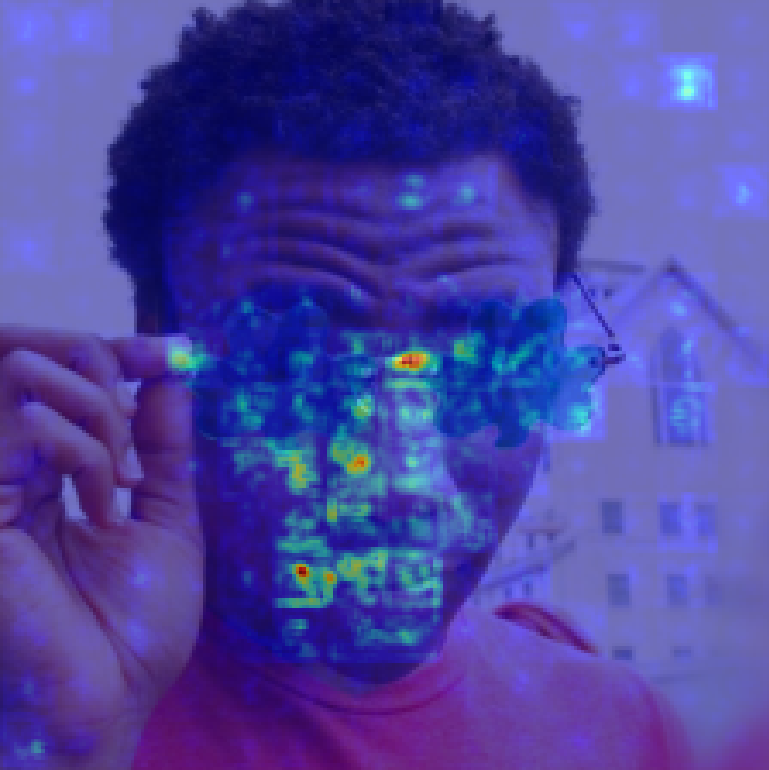}} &
\imgbox{\includegraphics[height=1.6cm]{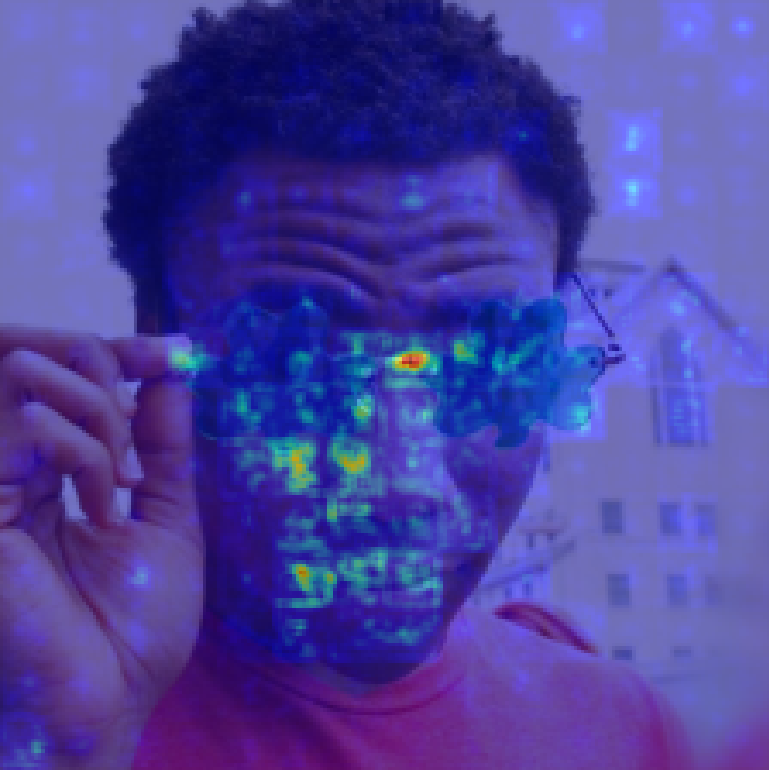}} &
\imgbox{\includegraphics[height=1.6cm]{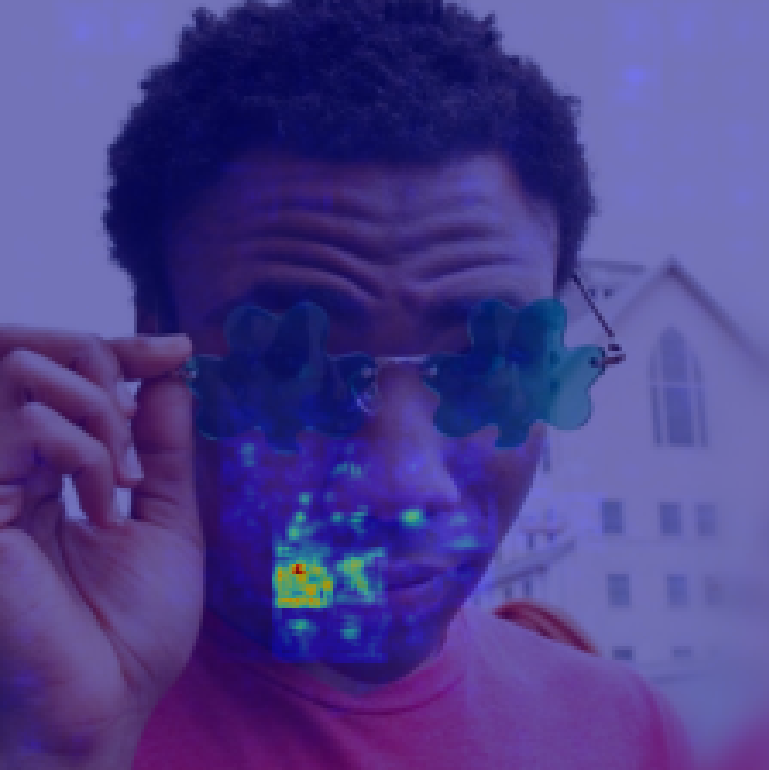}} &
\imgbox{\includegraphics[height=1.6cm]{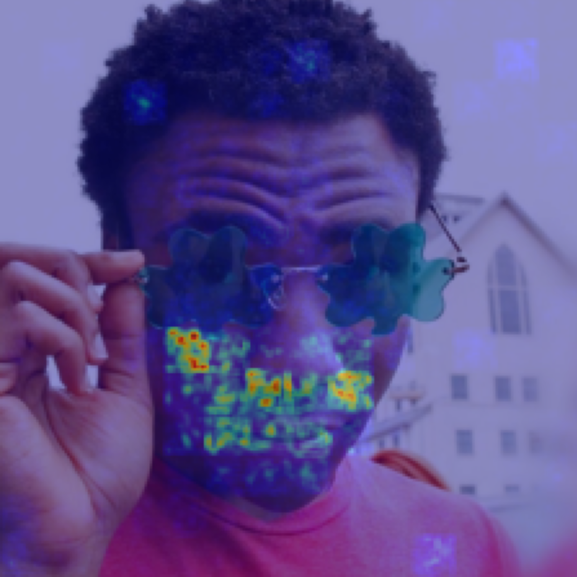}} \\[2pt]
\imgbox{\includegraphics[height=1.6cm]{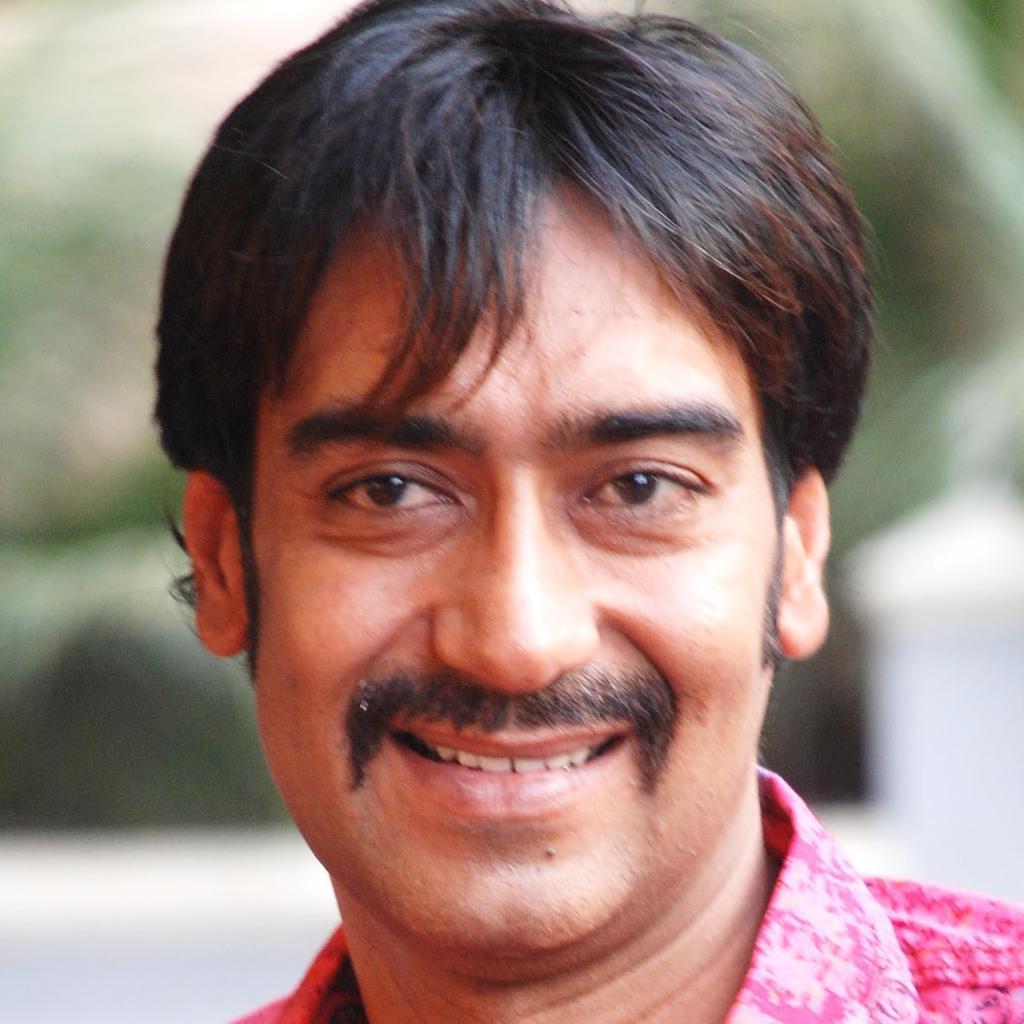}} &
\imgbox{\includegraphics[height=1.6cm]{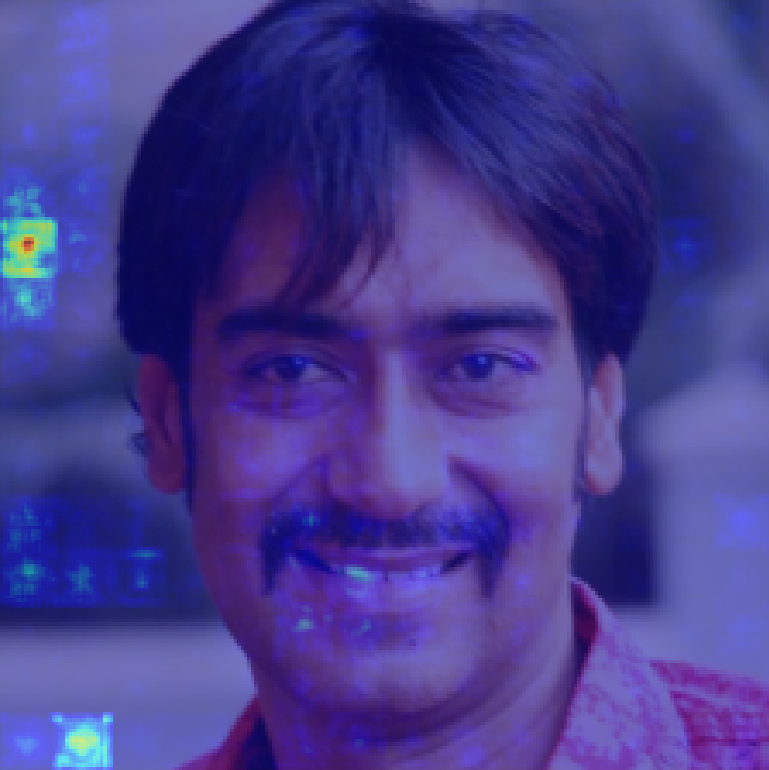}} &
\imgbox{\includegraphics[height=1.6cm]{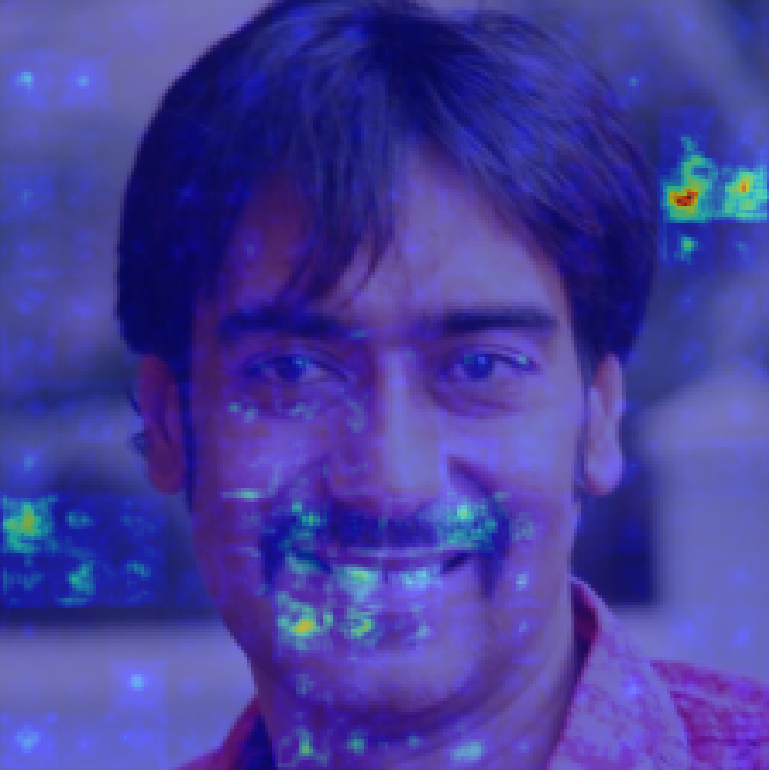}} &
\imgbox{\includegraphics[height=1.6cm]{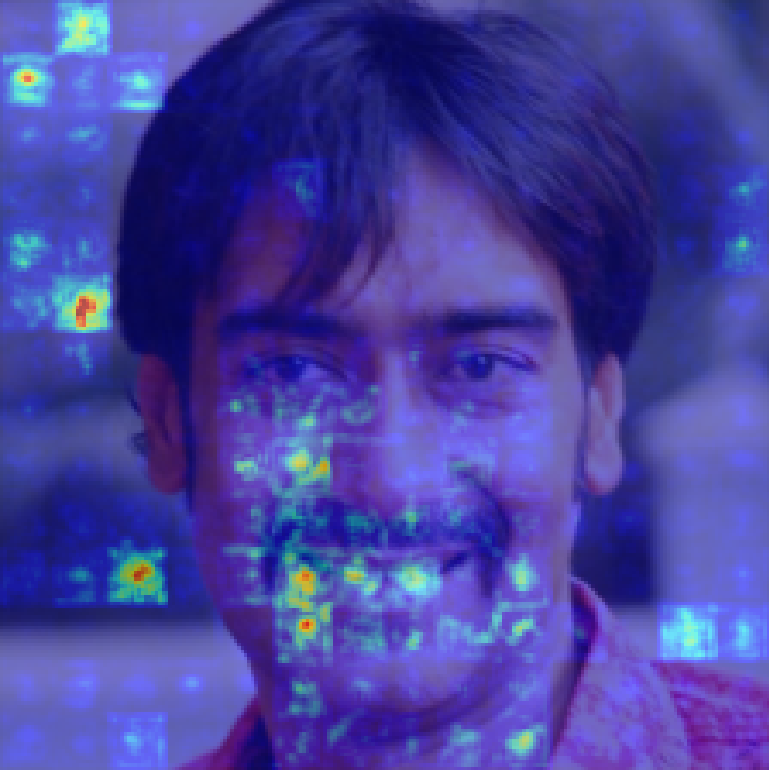}} &
\imgbox{\includegraphics[height=1.6cm]{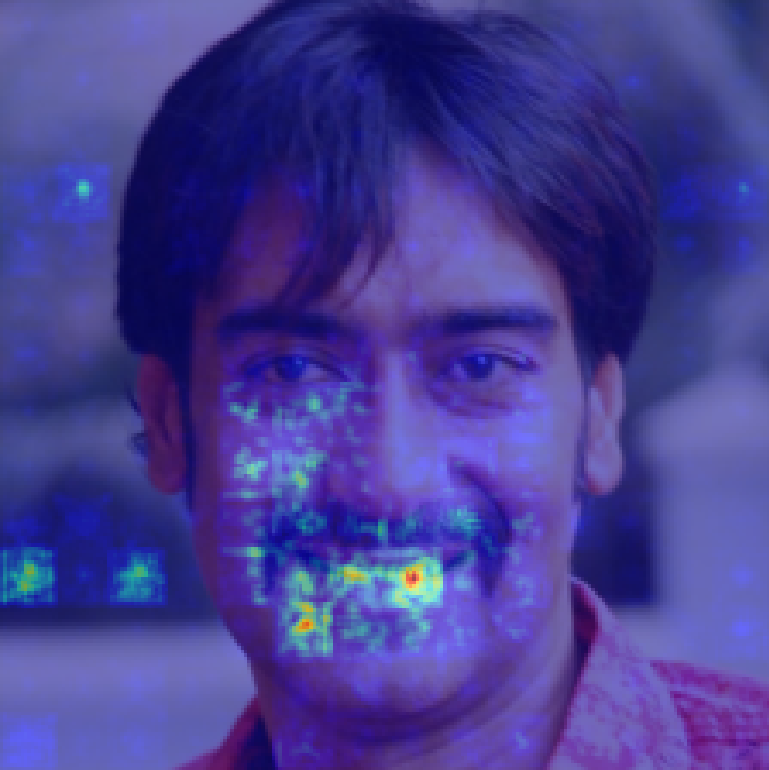}} &
\imgbox{\includegraphics[height=1.6cm]{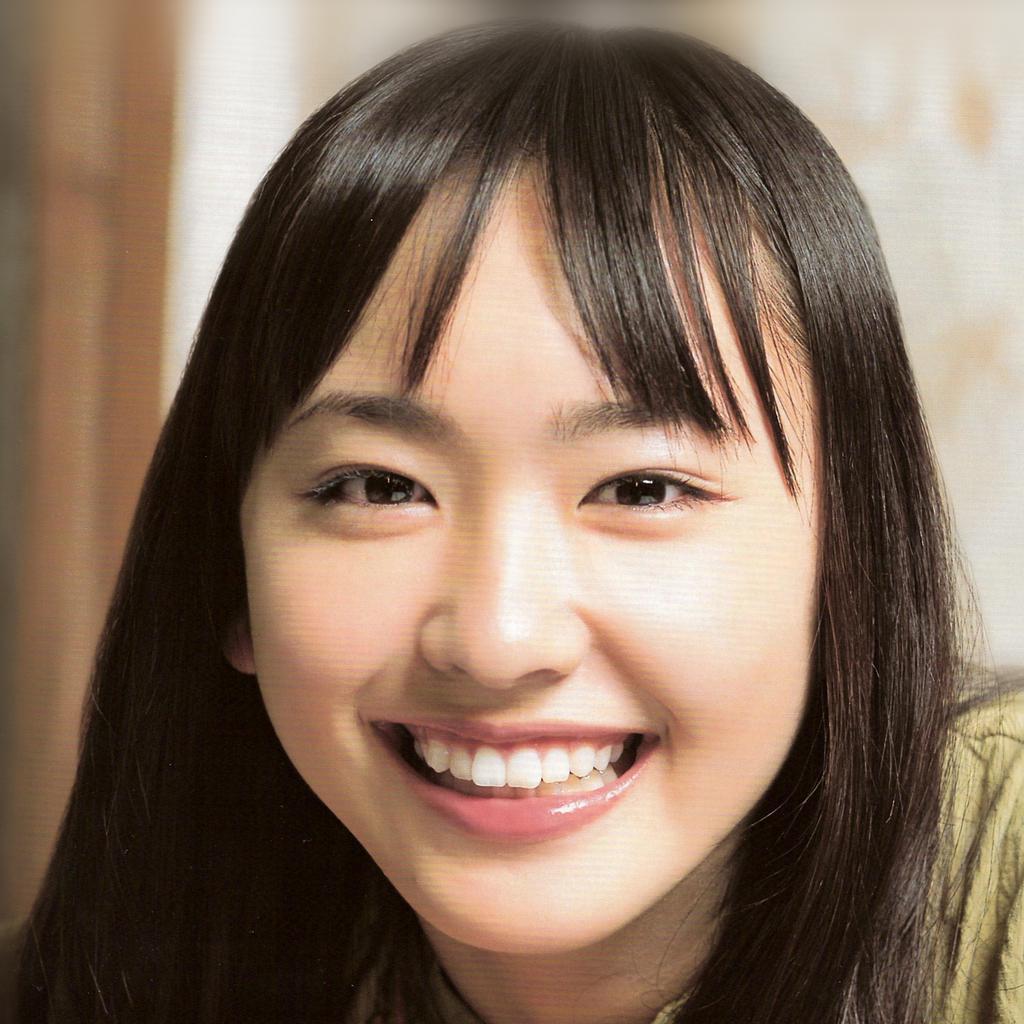}} &
\imgbox{\includegraphics[height=1.6cm]{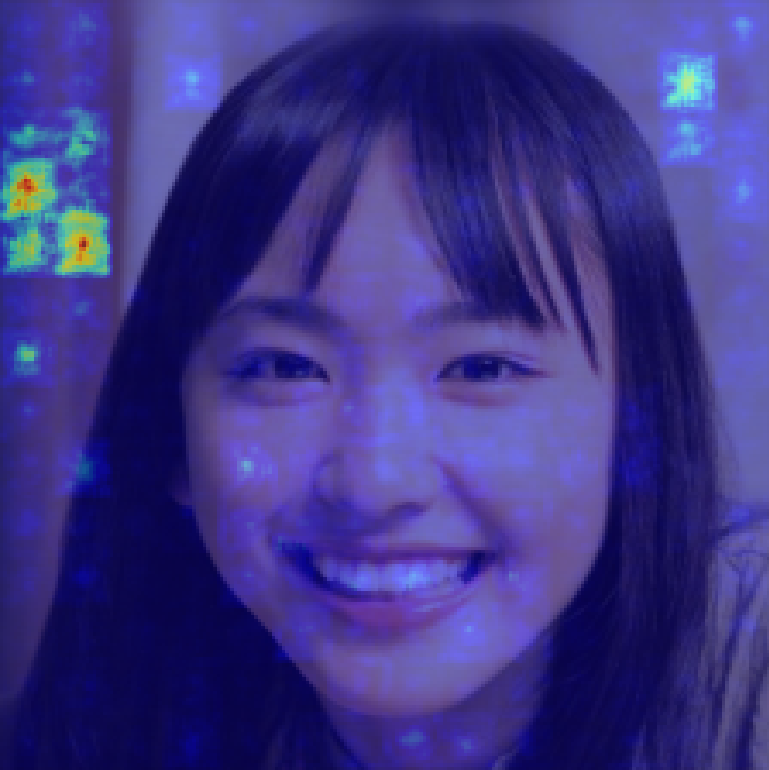}} &
\imgbox{\includegraphics[height=1.6cm]{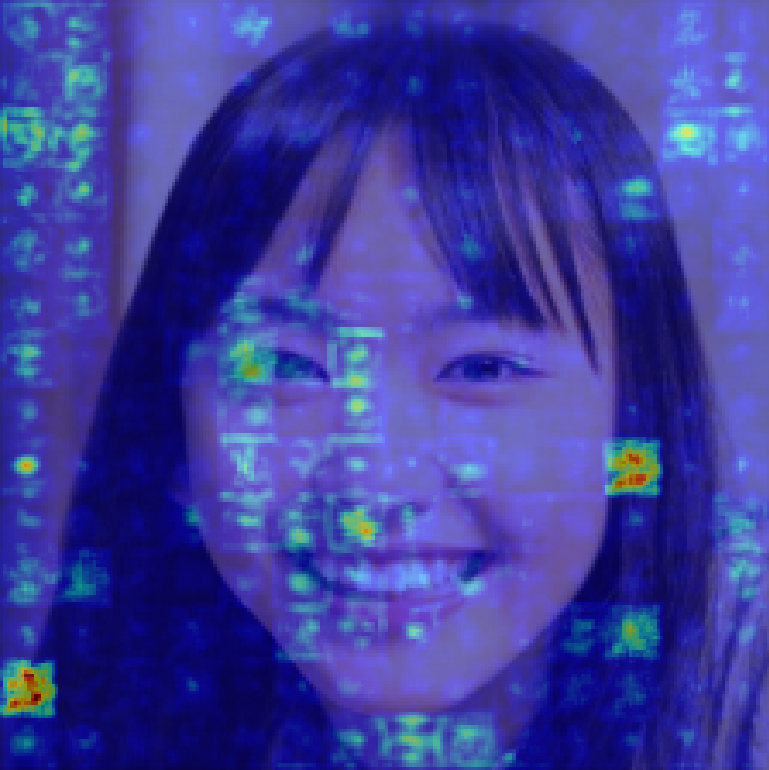}} &
\imgbox{\includegraphics[height=1.6cm]{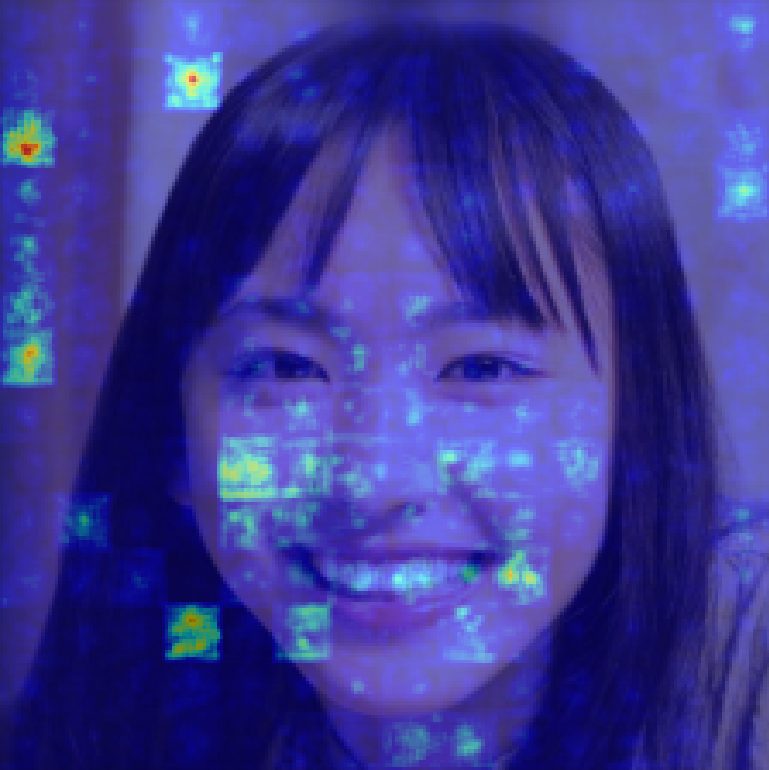}} &
\imgbox{\includegraphics[height=1.6cm]{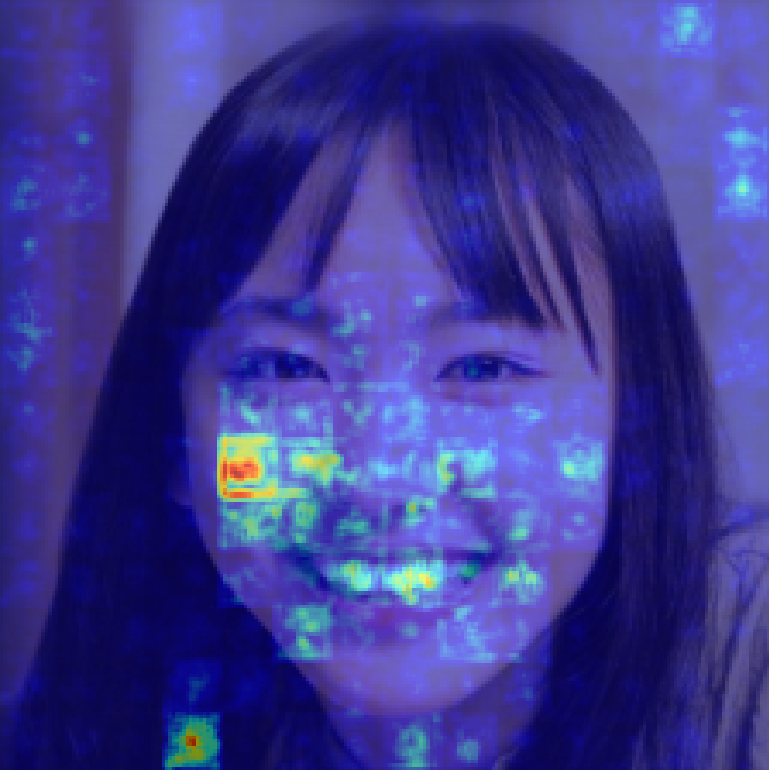}} \\[2pt]
\end{tabular}
}
\caption{\textbf{Gradient-based saliency map} for the \textit{Expression (smiling)} as main task and \textit{Gender (male)} as sensitive attribute. Warmer regions indicate stronger contribution to the output logit. FairNVT primarily attends to expression-relevant areas (mouth/cheeks), demonstrating reduced reliance on gender-correlated cues.}
\label{fig:celeba_heatmaps}
\end{figure*}

\subsection{Ablation Studies}
\label{subsec:ablations}

\begin{table}[htb]
\centering
\caption{\textbf{Effect of model components.} Each ablation adds a component of the proposed method. Results correspond to the hyperparameter configuration achieving the highest task accuracy for each variant.}
\resizebox{\textwidth}{!}{%
\begin{tabular}{ccccccccccccccc}
\toprule
 &  &  &  &  & \multicolumn{5}{c}{\textbf{Expression/Gender}} & \multicolumn{5}{c}{\textbf{Big Nose/Age}} \\ \midrule
\multicolumn{1}{c|}{\textbf{Ablation}} & \textbf{Adapter} & \textbf{Fair Loss} & \textbf{Orth Loss} & \multicolumn{1}{c|}{\textbf{Noise}} & \textbf{Acc($\uparrow$)} & \textbf{BAcc($\uparrow$)} & \textbf{DP($\downarrow$)} & \textbf{EOpp($\downarrow$)} & \multicolumn{1}{c|}{\textbf{EO}($\downarrow$)} & \textbf{Acc($\uparrow$)} & \textbf{BAcc($\uparrow$)} & \textbf{DP($\downarrow$)} & \textbf{EOpp($\downarrow$)} & \textbf{EO($\downarrow$)} \\ \midrule
\multicolumn{1}{c|}{\textbf{Vanilla}} & Task only & \xmark & \xmark & \multicolumn{1}{c|}{\xmark} & 89.6 & 89.0 & 16.4 & 6.9 & \multicolumn{1}{c|}{5.2} & 81.2 & 67.9 & 30.8 & 26.4 & 22.0 \\
\multicolumn{1}{c|}{\textbf{Abl-1}} & Task only & \cmark & \xmark & \multicolumn{1}{c|}{\xmark} & 91.4 & 91.2 & 11.1 & 1.1 & \multicolumn{1}{c|}{1.4} & 80.7 & 70.1 & 22.2 & 11.6 & 9.9 \\
\multicolumn{1}{c|}{\textbf{Abl-2}} & Task + Sensitive & \cmark & \xmark & \multicolumn{1}{c|}{\xmark} & 92.2 & 91.6 & 14.4 & 5.8 & \multicolumn{1}{c|}{3.3} & 83.0 & 75.4 & 27.3 & 12.2 & 13.0 \\
\multicolumn{1}{c|}{\textbf{Abl-3}} & Task + Sensitive & \cmark & \cmark & \multicolumn{1}{c|}{\xmark} & 92.5 & 91.8 & 12.3 & 4.4 & \multicolumn{1}{c|}{2.5} & 83.2 & 74.7 & 27.2 & 13.5 & 13.5 \\
\multicolumn{1}{c|}{\textbf{FairNVT}} & Task + Sensitive & \cmark & \cmark & \multicolumn{1}{c|}{\cmark} & 93.1 & 92.8 & 9.1 & 0.7 & \multicolumn{1}{c|}{2.5} & 83.4 & 71.5 & 22.9 & 17.3 & 11.1 \\ \bottomrule
\end{tabular}%
}
\label{tab:abl_pareto_table}
\end{table}

\begin{figure}[htb]
    \centering
    \includegraphics[width=\textwidth]{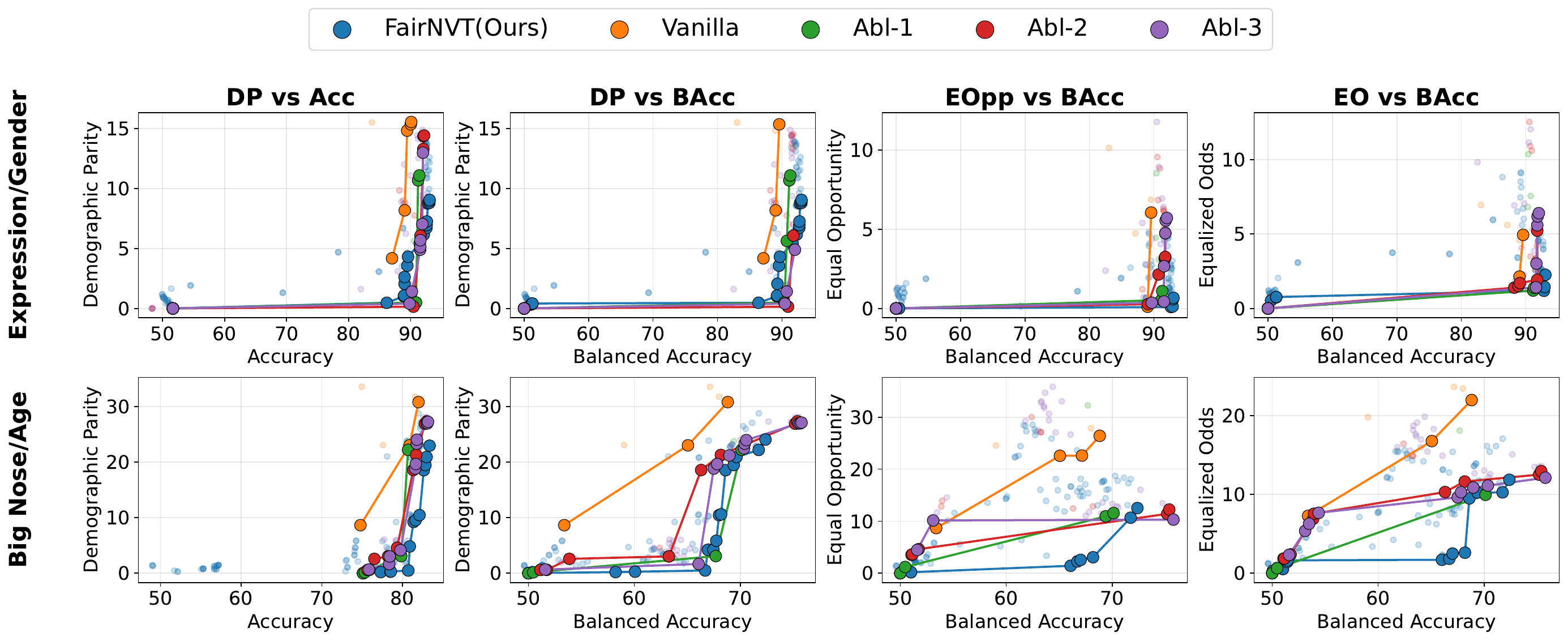}
    \caption{\textbf{Fairness--utility trade-off of the ablation variants.} Pareto curves over all hyperparameter configurations. The lower-right region indicates better trade-offs, corresponding to higher task performance and lower fairness violations. FairNVT consistently lies on or near the Pareto frontier, demonstrating the most favorable overall trade-off.}
    \label{fig:abl_pareto}
\end{figure}

\paragraph{Effect of different model components.}
We ablate the main components of FairNVT to evaluate their individual contributions to fairness and task performance. Table~\ref{tab:abl_pareto_table} summarizes the components included in each ablation together with the run selected based on the highest task accuracy. Figure~\ref{fig:abl_pareto} presents the Pareto curves across all hyperparameter configurations, illustrating the resulting fairness--utility trade-offs.

Comparing the Vanilla setup with Ablation~1 shows that introducing the fairness loss substantially reduces fairness violations on both datasets. For example, the DP difference decreases by 5.3 on Expression/Gender and by 8.6 on Big Nose/Age, indicating that directly optimizing the fairness objective is effective at improving downstream fairness. However, using only a single task adapter results in a larger utility cost, with task accuracy decreasing by 1.7\% on Expression/Gender and 2.7\% on Big Nose/Age compared with the full FairNVT model.

Comparing Ablations~1 and 2, Ablations~1, 2 and 3 highlights the contribution of the split-adapter architecture and the orthogonality loss respectively. Introducing a separate Sensitive Adapter (Ablation~2) improves task accuracy by 1.1\% on Expression/Gender and 2.3\% on Big Nose/Age relative to Ablation~1, while adding the orthogonality loss (Ablation~3) provides a further gain of 0.3\% and 0.2\%, respectively. These improvements are accompanied by higher fairness violations than Ablation~1, suggesting that the additional sensitive branch introduces strong sensitive information signal that is also useful for the downstream task. The orthogonality loss partially mitigates this effect, reducing DP by 2.1 and 0.1 on Expression/Gender and Big Nose/Age, respectively, comparing with Ablation~2.

Finally, the full FairNVT model has the most favorable overall fairness--utility trade-off by combining the split-adapter architecture, orthogonality regularization, noise injection, and fairness optimization. On Expression/Gender, FairNVT achieves both the highest task accuracy and the lowest fairness scores among all ablations. On Big Nose/Age, Ablations~2 and~3 can achieve higher task accuracy, as shown by the rightmost points in Figure~\ref{fig:abl_pareto} (Row~2), but these gains come with substantially larger fairness violations. In contrast, FairNVT maintains competitive task performance while achieving improved DP scores.

\paragraph{Evaluate sensitive attribute leakage with attacker.}
We evaluate the amount of sensitive information retained in different learned representations by training an attacker to predict the sensitive attribute from each embedding. Specifically, after training FairNVT, we extract the task embedding ($e_t$), sensitive embedding ($e_s$), noised sensitive embedding ($e_s^{\text{noised}}$), and the fused embedding ($e_f$), where the embedding weights are frozen, and a separate attacker is trained on each embedding to predict the sensitive attribute. 

As shown in Table~\ref{tab:abl_attack_embeds} (Column 1-4), the sensitive embedding ($e_s$) has high or nearly perfect prediction accuracy ($\approx99\%$ in Expression/Gender), indicating that the sensitive adapter successfully captures information related to the sensitive attribute. In contrast, the task embedding ($e_t$) yields substantially lower attacker performance ($\approx80\%$ in Expression/Gender), suggesting that the split-adapter architecture together with the orthogonality loss reduces, though does not eliminate, sensitive information in the task embedding. Applying Gaussian noise to the sensitive embedding ($e_s^{\text{noised}}$) reduces attacker performance to nearly random guessing ($\approx50\%$ in Expression/Gender), demonstrating that noise substantially suppresses sensitive information in this channel. The attacker performs better on the fused embedding ($e_f$) than on $e_s^{\text{noised}}$ alone, but worse than on the task embedding $e_t$.
These results indicate that noise suppresses the sensitive information in the sensitive channel, and the residual sensitive information can still exist in the task information channel, where the fairness loss handles it.

Since the task embedding still contains residual sensitive information, we further evaluate stronger attacker models with additional hidden layers. As shown in Table~\ref{tab:abl_attack_embeds} (Column 5-6), attacker accuracy increases only gradually by around $1\%$ as model capacity grows from 3 to 10 hidden layers. This suggests the sensitive information is more recoverable but still substantially less predictable, which is consistent with the intended role of the split-adapter architecture and targeted perturbation.

We additionally evaluate the case where the fused representation concatenates the task embedding with the average of multiple independently perturbed sensitive embeddings in Table~\ref{tab:abl_attack_embeds} (Column 7-8). 
The attacker performance increases as expected, as averaging over $k$
draws of independent Gaussian noise reduces the variance to $\sigma^2/k$, where $\sigma^2$ is the variance of a single noise draw. The reduced variance makes the sensitive information easier to recover.

\paragraph{Summary of ablation study.}
Taken together, \textit{the component ablation and attacker analysis provide empirical support for the design of FairNVT}. The split-adapter architecture creates a dedicated pathway for sensitive information, enabling targeted perturbation of the sensitive representation, while the orthogonality loss encourages task and sensitive adapters to be learn more decoupled representations. Gaussian perturbation substantially suppresses sensitive information in the sensitive pathway before fusion, whereas the fairness loss discourages the downstream classifier from relying on the remaining sensitive information present in the fused representation. Collectively, these components produce representations with reduced sensitive attribute exposure while preserving task-relevant information, leading to improved fairness--utility trade-offs across the evaluated experiments.

\begin{table}[htb]
\centering
\caption{\textbf{Sensitive attribute leakage measured by attacker performance.} Attacker accuracy and balanced accuracy on different learned representations. Additional columns evaluate stronger attacker models (Att3, Att10 represent a MLP attacker with 3, 10 hidden layers respectively), and repeated observations of the noised sensitive embedding through averaging multiple perturbations (Avg5, Avg50 represent averaging over 5, 50 randomly perturbed sensitive embedding respectively).}
\resizebox{\textwidth}{!}{%
\begin{tabular}{cc|cccc|cc|cc}
\toprule
& & $e_t$ & $e_s$ & $e_s^{\text{noised}}$ & $e_f$ & $e_f$ w/Att3 & $e_f$ w/Att10 & $e_f$ w/Avg5 & $e_f$ w/Avg50 \\ \midrule
\multirow{2}{*}{\textbf{Expression/Gender}} & \textbf{Att.Acc} & 68.3 & 99.0 & 51.7 & 52.1 & 52.2 & 52.4 & 63.6 & 92.5  \\
 & \textbf{Att.BAcc} & 68.6 & 99.0 & 50.2 & 51.7 & 50.8 & 51.1 & 62.0 & 92.3  \\ \midrule
\multirow{2}{*}{\textbf{Big Nose/Age}} & \textbf{Att.Acc} & 74.6 & 88.2 & 63.1 & 67.6 & 67.9 & 68.3 & 77.1 & 85.9  \\
 & \textbf{Att.BAcc} & 60.4 & 80.1 & 50.1 & 53.2 & 53.3 & 53.6 & 59.7 & 76.3  \\
\bottomrule
\end{tabular}%
}
\label{tab:abl_attack_embeds}
\end{table}

\section{Conclusion}
We introduced FairNVT, a plug-in framework for fair classification with frozen backbone models. By learning separate task and sensitive representations, injecting calibrated Gaussian noise into the sensitive pathway, and jointly optimizing orthogonality and fairness objectives, FairNVT achieves favorable fairness--utility trade-offs across multiple experiments, reducing fairness violations while maintaining competitive task performance.

FairNVT has several limitations. First, similar to many supervised fairness methods, our method requires sensitive attribute labels during training. This assumption may limit applicability in settings where sensitive labels are unavailable, unreliable, undesirable to collect or severely imbalanced. Second, although inference is reproducible with fixed random seeds, FairNVT relies on stochastic perturbations and therefore produces stochastic predictions. The task embedding is deterministic but may not be reliably reusable as a debiased embedding for other downstream tasks. Finally, our design is motivated by the intuition of separating task and sensitive information, but does not guarantee complete disentanglement of the learned representations.

These limitations motivate several directions for future work. Developing stronger objectives for separating task and sensitive information and providing a deeper theoretical understanding of the proposed framework are promising directions. Designing stronger fairness loss for severe class imbalance cases where achieving fairness is more difficult. In addition, while our experiments focus on image and text classification with transformer encoders, the framework is largely architecture agnostic and can naturally be extended to other modalities and transformer-based models. 
We hope the proposed framework encourages future work on broader fairness tasks beyond binary classification, including multi-class, multi-attribute, and broader scenarios such as vision-language models.

\section*{Broader Impact Concerns}
This work uses face datasets with perceived gender and age labels, including binary groupings. Although this is standard in the benchmark datasets and existing literature, these labels can be ethically sensitive, noisy and potentially harmful if used outside controlled research settings. We also clarify that lower benchmark disparity as reported in this work does not certify fairness in real deployments, especially under distribution shift or for unmodeled intersectional groups.

\bibliography{main}

@string{ ai = "Artificial Intelligence" }

@string{ cvpr = "{IEEE} Conference on Computer Vision and Pattern Recognition (CVPR)" }

@string{ neurips = "Advances in Neural Information Processing Systems (NeurIPS)" }

@string{ iclr = "International Conference on Learning Representations (ICLR)" }

@string{ aaai = "Conference on Artificial Intelligence (AAAI)" }

@string{ sigkdd = "ACM SIGKDD Conference on Knowledge Discovery and Data Mining" }

@string{ ijcai = "International Joint Conference on Artificial Intelligence (IJCAI)" }

@inproceedings{agarwal2018reductions,
  title={A reductions approach to fair classification},
  author={Agarwal, Alekh and Beygelzimer, Alina and Dud{\'\i}k, Miroslav and Langford, John and Wallach, Hanna},
  booktitle={International Conference on Machine Learning},
  year={2018},
 
}

@inproceedings{
anonymous2025evaluating,
title={Evaluating Sparse Autoencoders on Targeted Concept Removal Tasks},
author={Karvonen,Adam and Rager, Can and  Marks, Samuel and Nanda, Neel },
booktitle={Second NeurIPS Workshop on Attributing Model Behavior at Scale},
year={2024},
url={https://openreview.net/forum?id=H9DhZTb19S}
}

@inproceedings{feldman2015certifying,
  title={Certifying and removing disparate impact},
  author={Feldman, Michael and Friedler, Sorelle A and Moeller, John and Scheidegger, Carlos and Venkatasubramanian, Suresh},
  booktitle={proceedings of the 21th ACM SIGKDD international conference on knowledge discovery and data mining},
  year={2015}
}

@inproceedings{NIPS2017_9a49a25d,
  title={Optimized pre-processing for discrimination prevention},
  author={Calmon, Flavio and Wei, Dennis and Vinzamuri, Bhanukiran and Natesan Ramamurthy, Karthikeyan and Varshney, Kush R},
  booktitle={Annual Conference on Neural Information Processing Systems},
  year={2017}
}

@article{kamiran2012data,
  title={Data preprocessing techniques for classification without discrimination},
  author={Kamiran, Faisal and Calders, Toon},
  journal={Knowledge and information systems},
  volume={33},
  number={1},
  pages={1--33},
  year={2012},
  publisher={Springer}
}

@inproceedings{masoudian-etal-2024-effective,
    title = "Effective Controllable Bias Mitigation for Classification and Retrieval using Gate Adapters",
    author = "Masoudian, Shahed  and
      Volaucnik, Cornelia  and
      Schedl, Markus  and
      Rekabsaz, Navid",
    editor = "Graham, Yvette  and
      Purver, Matthew",
    booktitle = "Proceedings of the 18th Conference of the European Chapter of the Association for Computational Linguistics (Volume 1: Long Papers)",
    year = "2024",
 
   
}

@inproceedings{he2020deberta,
title={DEBERTA: DECODING-ENHANCED BERT WITH DISENTANGLED ATTENTION},
author={Pengcheng He and Xiaodong Liu and Jianfeng Gao and Weizhu Chen},
booktitle={International Conference on Learning Representations},
year={2021},

}

@article{liu2019roberta,
  title={Roberta: A robustly optimized bert pretraining approach},
  author={Liu, Yinhan and Ott, Myle and Goyal, Naman and Du, Jingfei and Joshi, Mandar and Chen, Danqi and Levy, Omer and Lewis, Mike and Zettlemoyer, Luke and Stoyanov, Veselin},
  journal={arXiv preprint arXiv:1907.11692},
  year={2019}
}

@article{wang2025vision,
  title={Vision transformers for image classification: A comparative survey},
  author={Wang, Yaoli and Deng, Yaojun and Zheng, Yuanjin and Chattopadhyay, Pratik and Wang, Lipo},
  journal={Technologies},
  volume={13},
  number={1},
  pages={32},
  year={2025},
  publisher={MDPI}
}

@article{khan2022transformers,
  title={Transformers in vision: A survey},
  author={Khan, Salman and Naseer, Muzammal and Hayat, Munawar and Zamir, Syed Waqas and Khan, Fahad Shahbaz and Shah, Mubarak},
  journal={ACM computing surveys},
  volume={54},
  pages={1--41},
  year={2022},
}

@article{gallegos-etal-2024-bias,
  title={Bias and fairness in large language models: A survey},
  author={Gallegos, Isabel O and Rossi, Ryan A and Barrow, Joe and Tanjim, Md Mehrab and Kim, Sungchul and Dernoncourt, Franck and Yu, Tong and Zhang, Ruiyi and Ahmed, Nesreen K},
  journal={Computational Linguistics},
  volume={50},
  number={3},
  year={2024},

}

@misc{li2024surveyfairnesslargelanguage,
      title={A Survey on Fairness in Large Language Models}, 
      author={Yingji Li and Mengnan Du and Rui Song and Xin Wang and Ying Wang},
      year={2024},
      eprint={2308.10149},
      archivePrefix={arXiv},
      primaryClass={cs.CL},
      url={https://arxiv.org/abs/2308.10149}, 
}

@inproceedings{celaba,
  title={Deep learning face attributes in the wild},
  author={Liu, Ziwei and Luo, Ping and Wang, Xiaogang and Tang, Xiaoou},
  booktitle={Proceedings of the IEEE international conference on computer vision},
  year={2015}
}

@inproceedings{jin2022inputagnosticcertifiedgroupfairness,
  title={Input-agnostic certified group fairness via gaussian parameter smoothing},
  author={Jin, Jiayin and Zhang, Zeru and Zhou, Yang and Wu, Lingfei},
  booktitle={International conference on machine learning},

  year={2022},

}

@InProceedings{cohen2019certifiedadversarialrobustnessrandomized,
  title={Certified adversarial robustness via randomized smoothing},
  author={Cohen, Jeremy and Rosenfeld, Elan and Kolter, Zico},
  booktitle={International Conference on Machine Learning},
  year={2019},

}

@inproceedings{lecuyer2019certifiedrobustnessadversarialexamples,
  title={Certified robustness to adversarial examples with differential privacy},
  author={Lecuyer, Mathias and Atlidakis, Vaggelis and Geambasu, Roxana and Hsu, Daniel and Jana, Suman},
  booktitle={2019 IEEE symposium on security and privacy (SP)},
  pages={656--672},
  year={2019},
  organization={IEEE}
}

@inproceedings{poth-etal-2023-adapters,
    title = "Adapters: A Unified Library for Parameter-Efficient and Modular Transfer Learning",
    author = {Poth, Clifton  and
      Sterz, Hannah  and
      Paul, Indraneil  and
      Purkayastha, Sukannya  and
      Engl{\"a}nder, Leon  and
      Imhof, Timo  and
      Vuli{\'c}, Ivan  and
      Ruder, Sebastian  and
      Gurevych, Iryna  and
      Pfeiffer, Jonas},
    booktitle = "Proceedings of the 2023 Conference on Empirical Methods in Natural Language Processing: System Demonstrations",
    month = dec,
    year = "2023",
}

@inproceedings{peychev2022latentspacesmoothingindividually,
  title={Latent space smoothing for individually fair representations},
  author={Peychev, Momchil and Ruoss, Anian and Balunovi{\'c}, Mislav and Baader, Maximilian and Vechev, Martin},
  booktitle={European Conference on Computer Vision},
  pages={535--554},
  year={2022},
  organization={Springer}
}

@inproceedings{ghanbarzadeh-etal-2023-gender,
  title={Gender-tuning: Empowering fine-tuning for debiasing pre-trained language models},
  author={Ghanbarzadeh, Somayeh and Huang, Yan and Palangi, Hamid and Moreno, Radames Cruz and Khanpour, Hamed},
   booktitle = "Findings of the Association for Computational Linguistics",
  year={2023}
}

@inproceedings{madras2018learning,
  title={Learning adversarially fair and transferable representations},
  author={Madras, David and Creager, Elliot and Pitassi, Toniann and Zemel, Richard},
  booktitle={International Conference on Machine Learning},
  pages={3384--3393},
  year={2018},
  organization={PMLR}
}

@article{louizos2015variational,
  title={The variational fair autoencoder},
  author={Louizos, Christos and Swersky, Kevin and Li, Yujia and Welling, Max and Zemel, Richard},
  journal={arXiv preprint arXiv:1511.00830},
  year={2015}
}

@InProceedings{pmlr-v28-zemel13,
  title={Learning fair representations},
  author={Zemel, Rich and Wu, Yu and Swersky, Kevin and Pitassi, Toni and Dwork, Cynthia},
  booktitle={International conference on machine learning},
  pages={325--333},
  year={2013},
  organization={PMLR}
}

@inproceedings{kumar-etal-2023-parameter,
    title = "Parameter-efficient Modularised Bias Mitigation via {A}dapter{F}usion",
    author = "Kumar, Deepak  and
      Lesota, Oleg  and
      Zerveas, George  and
      Cohen, Daniel  and
      Eickhoff, Carsten  and
      Schedl, Markus  and
      Rekabsaz, Navid",
    editor = "Vlachos, Andreas  and
      Augenstein, Isabelle",
    booktitle = "Proceedings of the Conference of the European Chapter of the Association for Computational Linguistics",
    year = "2023",
    address = "Dubrovnik, Croatia",

  
}

@inproceedings{zhang2018mitigatingunwantedbiasesadversarial,
  title={Mitigating unwanted biases with adversarial learning},
  author={Zhang, Brian Hu and Lemoine, Blake and Mitchell, Margaret},
  booktitle={Proceedings of the 2018 AAAI/ACM Conference on AI, Ethics, and Society},
  pages={335--340},
  year={2018}
}

@article{zhou2024unibiasunveilingmitigatingllm,
  title={Unibias: Unveiling and mitigating llm bias through internal attention and ffn manipulation},
  author={Zhou, Hanzhang and Feng, Zijian and Zhu, Zixiao and Qian, Junlang and Mao, Kezhi},
  journal={Advances in Neural Information Processing Systems},
  year={2024}
}

@inproceedings{devlin2019bert,
  title={Bert: Pre-training of deep bidirectional transformers for language understanding},
  author={Devlin, Jacob and Chang, Ming-Wei and Lee, Kenton and Toutanova, Kristina},
  booktitle={Annual Conference of the Nations of the Americas Chapter of the Association for Computational Linguistics},
  year={2019}
}

@inproceedings{vit_paper,
title={An Image is Worth 16x16 Words: Transformers for Image Recognition at Scale},
author={Alexey Dosovitskiy and Lucas Beyer and Alexander Kolesnikov and Dirk Weissenborn and Xiaohua Zhai and Thomas Unterthiner and Mostafa Dehghani and Matthias Minderer and Georg Heigold and Sylvain Gelly and Jakob Uszkoreit and Neil Houlsby},
booktitle={International Conference on Learning Representations},
year={2021},
}

@INPROCEEDINGS {park2022faircontrastivelearningfacial,
author = { Park, Sungho and Lee, Jewook and Lee, Pilhyeon and Hwang, Sunhee and Kim, Dohyung and Byun, Hyeran },
booktitle = cvpr,
title = {{ Fair Contrastive Learning for Facial Attribute Classification }},
year = {2022},
}

@inproceedings{dwork2011fairnessawareness,
  title={Fairness through awareness},
  author={Dwork, Cynthia and Hardt, Moritz and Pitassi, Toniann and Reingold, Omer and Zemel, Richard},
  booktitle={Proceedings of the innovations in theoretical computer science conference},
  year={2012}
}

@inproceedings{Gesadebiasing2023,
  title={The Effect of Adversarial Debiasing on Model Performance},
  author={G{\"o}tte, Gesa},
  year={2023},
  booktitle={INFORMATIK}
}

@inproceedings{Islam_Chen_Cai_2024,
  title={Fairness without demographics through shared latent space-based debiasing},
  author={Islam, Rashidul and Chen, Huiyuan and Cai, Yiwei},
  booktitle={ Association for the Advancement of Artificial Intelligence},
 
  year={2024}
}

@inproceedings{halevy2025whosmultifairestallrethinking,
  title={Who’s the (Multi-) Fairest of Them ALL: Rethinking Interpolation-Based Data Augmentation Through the Lens of Multicalibration},
  author={Halevy, Karina and Hou, Karly and Badrinath, Charumathi},
  booktitle={ Association for the Advancement of Artificial Intelligence},
  year={2025}
}

@inproceedings{sun2023faircdacontinuousdirectionalaugmentation,
  title={Fair-cda: continuous and directional augmentation for group fairness},
  author={Sun, Rui and Zhou, Fengwei and Dong, Zhenhua and Xie, Chuanlong and Hong, Lanqing and Li, Jiawei and Zhang, Rui and Li, Zhen and Li, Zhenguo},
  booktitle={Proceedings of the AAAI Conference on Artificial Intelligence},
  volume={37},
  number={8},
  pages={9918--9926},
  year={2023}
}

@inproceedings{shi-etal-2024-debiasing,
  title={Debiasing with sufficient projection: A general theoretical framework for vector representations},
  author={Shi, Enze and Ding, Lei and Kong, Linglong and Jiang, Bei},
  booktitle={Proceedings of the 2024 Conference of the North American Chapter of the Association for Computational Linguistics: Human Language Technologies (Volume 1: Long Papers)},
  pages={5960--5975},
  year={2024}
}

@INPROCEEDINGS {kang2023infofairinformationtheoreticintersectionalfairness,
  title={Infofair: Information-theoretic intersectional fairness},
  author={Kang, Jian and Xie, Tiankai and Wu, Xintao and Maciejewski, Ross and Tong, Hanghang},
  booktitle={IEEE international conference on big data (big data)},
  year={2022},

}

@inproceedings{wang2023fairnesstextgenerationmutual,
  title={Toward fairness in text generation via mutual information minimization based on importance sampling},
  author={Wang, Rui and Cheng, Pengyu and Henao, Ricardo},
  booktitle={International conference on artificial intelligence and statistics},
  year={2023},

}

@inproceedings{ijcai2024p273,
title     = {{Learning Fair Representations for Recommendation via Information Bottleneck Principle}},
author    = {Xie, Junsong and Yang, Yonghui and Wang, Zihan and Wu, Le},
booktitle = ijcai,
year      = {2024},
}

@inproceedings{tian2024fairvitfairvisiontransformer,
  title={FairViT: Fair Vision Transformer via Adaptive Masking},
  author={Tian, Bowei and Du, Ruijie and Shen, Yanning},
  booktitle={European Conference on Computer Vision},
  pages={451--466},
  year={2024},
  organization={Springer}
}

@inproceedings{tang2022self,
  title={Self-supervised pre-training of swin transformers for 3d medical image analysis},
  author={Tang, Yucheng and Yang, Dong and Li, Wenqi and Roth, Holger R and Landman, Bennett and Xu, Daguang and Nath, Vishwesh and Hatamizadeh, Ali},
  booktitle={Proceedings of the IEEE/CVF conference on computer vision and pattern recognition},
  pages={20730--20740},
  year={2022}
}

@inproceedings{transface,
  title={Transface: Calibrating transformer training for face recognition from a data-centric perspective},
  author={Dan, Jun and Liu, Yang and Xie, Haoyu and Deng, Jiankang and Xie, Haoran and Xie, Xuansong and Sun, Baigui},
  booktitle={Proceedings of the IEEE international conference on computer vision},

  year={2023}
}

@article{shao2021transmil,
  title={Transmil: Transformer based correlated multiple instance learning for whole slide image classification},
  author={Shao, Zhuchen and Bian, Hao and Chen, Yang and Wang, Yifeng and Zhang, Jian and Ji, Xiangyang and others},
  journal={Annual Conference on Neural Information Processing Systems},
  volume={34},
  year={2021}
}

@inproceedings{jacob2021facial,
  title={Facial action unit detection with transformers},
  author={Jacob, Geethu Miriam and Stenger, Bjorn},
  booktitle={Proceedings of the IEEE/CVF conference on computer vision and pattern recognition},
  year={2021}
}

@inproceedings{narayan2025facexformer,
  title={Facexformer: A unified transformer for facial analysis},
  author={Narayan, Kartik and VS, Vibashan and Chellappa, Rama and Patel, Vishal M},
  booktitle={Proceedings of the IEEE/CVF International Conference on Computer Vision},
  pages={11369--11382},
  year={2025}
}

@InProceedings{Park_2024_CVPR,
    author    = {Park, Sungho and Byun, Hyeran},
    title     = {Fair-VPT: Fair Visual Prompt Tuning for Image Classification},
    booktitle = {Proceedings of the IEEE/CVF Conference on Computer Vision and Pattern Recognition (CVPR)},
    month     = {June},
    year      = {2024},
    pages     = {12268-12278}
}

@inproceedings{agarwal2019fairregressionquantitativedefinitions, title={Fair regression: Quantitative definitions and reduction-based algorithms},
  author={Agarwal, Alekh and Dud{\'\i}k, Miroslav and Wu, Zhiwei Steven},
  booktitle={International Conference on Machine Learning},
  year={2019},

}

@inproceedings{zhifei2017cvpr,
title={Age Progression/Regression by Conditional Adversarial Autoencoder},
author={Zhang, Zhifei and Song, Yang and Qi, Hairong},
booktitle={Proceedings of the IEEE/CVF conference on computer vision and pattern recognition},
year={2017},
}

@article{shen2021contrastive,
  title={Contrastive learning for fair representations},
  author={Shen, Aili and Han, Xudong and Cohn, Trevor and Baldwin, Timothy and Frermann, Lea},
  journal={arXiv preprint arXiv:2109.10645},
  year={2021}
}

@inproceedings{de2019bias,
  title={Bias in bios: A case study of semantic representation bias in a high-stakes setting},
  author={De-Arteaga, Maria and Romanov, Alexey and Wallach, Hanna and Chayes, Jennifer and Borgs, Christian and Chouldechova, Alexandra and Geyik, Sahin and Kenthapadi, Krishnaram and Kalai, Adam Tauman},
  booktitle={proceedings of the Conference on Fairness, Accountability, and Transparency},
  year={2019}
}

@article{Paszke2019PyTorchAI,
  title={PyTorch: An Imperative Style, High-Performance Deep Learning Library},
  author={Adam Paszke and Sam Gross and Francisco Massa and Adam Lerer and James Bradbury and Gregory Chanan and Trevor Killeen and Zeming Lin and Alban Desmaison and Andreas Kopf and Edward Fi{\'{s}}cher and Yuandong Tian and Vincent Hoffman and Nachiket Dalal and Siddharth Narang and Soumith Chintala and Gregory P. Chanan},
  journal={ArXiv},
  year={2019},
  volume={abs/1912.01703}
}

@inproceedings{loshchilov2019decoupledweightdecayregularization,
  author       = {Ilya Loshchilov and
                  Frank Hutter},
  title        = {Decoupled Weight Decay Regularization},
  booktitle    = {7th International Conference on Learning Representations, {ICLR} 2019,
                  New Orleans, LA, USA, May 6-9, 2019},
  publisher    = {OpenReview.net},
  year         = {2019},
  url          = {https://openreview.net/forum?id=Bkg6RiCqY7},
  bibsource    = {dblp computer science bibliography, https://dblp.org}
}

@inproceedings{kingma2017adammethodstochasticoptimization,
  author       = {Diederik P. Kingma and
                  Jimmy Ba},
  editor       = {Yoshua Bengio and
                  Yann LeCun},
  title        = {Adam: {A} Method for Stochastic Optimization},
  booktitle    = {International Conference on Learning Representations},
  year         = {2015},

}

@inproceedings{feng2023pretrainingdatalanguagemodels,
    title = "From Pretraining Data to Language Models to Downstream Tasks: Tracking the Trails of Political Biases Leading to Unfair {NLP} Models",
    author = "Feng, Shangbin  and
      Park, Chan Young  and
      Liu, Yuhan  and
      Tsvetkov, Yulia",
    editor = "Rogers, Anna  and
      Boyd-Graber, Jordan  and
      Okazaki, Naoaki",
    booktitle={Annual Meeting of the Association for Computational Linguistics},
    year = "2023",
   
  
}

@inproceedings{ravfogel-etal-2020-null,
    title = "Null It Out: Guarding Protected Attributes by Iterative Nullspace Projection",
    author = "Ravfogel, Shauli  and
      Elazar, Yanai  and
      Gonen, Hila  and
      Twiton, Michael  and
      Goldberg, Yoav",
    editor = "Jurafsky, Dan  and
      Chai, Joyce  and
      Schluter, Natalie  and
      Tetreault, Joel",
    booktitle = "Proceedings of the 58th Annual Meeting of the Association for Computational Linguistics",
    month = jul,
    year = "2020",
    address = "Online",
    publisher = "Association for Computational Linguistics",
    url = "https://aclanthology.org/2020.acl-main.647/",
    doi = "10.18653/v1/2020.acl-main.647",
    pages = "7237--7256"
}

@inproceedings{yeom2020individualfairnessrevisitedtransferring,
author = {Yeom, Samuel and Fredrikson, Matt},
title = {Individual fairness revisited: transferring techniques from adversarial robustness},
year = {2021},
isbn = {9780999241165},
booktitle = {Proceedings of the Twenty-Ninth International Joint Conference on Artificial Intelligence},
articleno = {61},
numpages = {7},
location = {Yokohama, Yokohama, Japan},
series = {IJCAI'20}
}

@inproceedings{fatemi-etal-2023-improving,
  title={Improving gender fairness of pre-trained language models without catastrophic forgetting},
  author={Fatemi, Zahra and Xing, Chen and Liu, Wenhao and Xiong, Caimming},
  booktitle={Annual Meeting of the Association for Computational Linguistics},
  year={2023}
}

@inproceedings{hauzenberger-etal-2023-modular,
    title = "Modular and On-demand Bias Mitigation with Attribute-Removal Subnetworks",
    author = "Hauzenberger, Lukas  and
      Masoudian, Shahed  and
      Kumar, Deepak  and
      Schedl, Markus  and
      Rekabsaz, Navid",
    editor = "Rogers, Anna  and
      Boyd-Graber, Jordan  and
      Okazaki, Naoaki",
    booktitle={Annual Meeting of the Association for Computational Linguistics},
    year = "2023",
   
}

@article{Yang_Yu_Fung_Li_Ji_2023, title={ADEPT: A DEbiasing PrompT Framework}, volume={37}, number={9}, journal={Proceedings of the AAAI Conference on Artificial Intelligence}, author={Yang, Ke and Yu, Charles and Fung, Yi R. and Li, Manling and Ji, Heng}, year={2023}, month={Jun.}, pages={10780-10788}}

@inproceedings{lauscher-etal-2021-sustainable-modular,
title = "Sustainable Modular Debiasing of Language Models",
    author = "Lauscher, Anne  and
      Lueken, Tobias  and
      Glava{\v{s}}, Goran",
    editor = "Moens, Marie-Francine  and
      Huang, Xuanjing  and
      Specia, Lucia  and
      Yih, Scott Wen-tau",
    booktitle = "Findings of the Association for Computational Linguistics: EMLP",
    year = "2021",
}

@article{aubinais2023fundamental,
  title={Fundamental limits of membership inference attacks on machine learning models},
  author={Aubinais, Eric and Gassiat, Elisabeth and Piantanida, Pablo},
  journal={arXiv preprint arXiv:2310.13786},
  year={2023}
}

@inproceedings{shen2022does,
  title={Does representational fairness imply empirical fairness?},
  author={Shen, Aili and Han, Xudong and Cohn, Trevor and Baldwin, Timothy and Frermann, Lea},
  booktitle={Findings of the Association for Computational Linguistics: AACL-IJCNLP 2022},
  pages={81--95},
  year={2022}
}

@inproceedings{liu22fairrepresentation,
author = {Liu, Ji and Li, Zenan and Yao, Yuan and Xu, Feng and Ma, Xiaoxing and Xu, Miao and Tong, Hanghang},
title = {Fair Representation Learning: An Alternative to Mutual Information},
year = {2022},
isbn = {9781450393850},
publisher = {Association for Computing Machinery},
address = {New York, NY, USA},
url = {https://doi.org/10.1145/3534678.3539302},
doi = {10.1145/3534678.3539302},
booktitle = {Proceedings of the 28th ACM SIGKDD Conference on Knowledge Discovery and Data Mining},
pages = {1088–1097},
numpages = {10},
location = {Washington DC, USA},
series = {KDD '22}
}

@article{zhou2025fairnet,
  title={FairNet: Dynamic Fairness Correction without Performance Loss via Contrastive Conditional LoRA},
  author={Zhou, Songqi and Liu, Zeyuan and Jiang, Benben},
  journal={arXiv preprint arXiv:2510.19421},
  year={2025}
}

@inproceedings{hsic,
author = {Gretton, Arthur and Bousquet, Olivier and Smola, Alex and Sch\"{o}lkopf, Bernhard},
title = {Measuring statistical dependence with hilbert-schmidt norms},
year = {2005},
isbn = {354029242X},
publisher = {Springer-Verlag},
address = {Berlin, Heidelberg},
url = {https://doi.org/10.1007/11564089_7},
doi = {10.1007/11564089_7},
booktitle = {Proceedings of the 16th International Conference on Algorithmic Learning Theory},
pages = {63–77},
numpages = {15},
location = {Singapore},
series = {ALT'05}
}
\bibliographystyle{tmlr}

\appendix

\section{Related Works}
\label{appendix:related_work}

\paragraph{Transformers for Classification.}

Transformers have seen broad adoption for classification in both vision and text. 
Image-level transformer models such as ViT~\cite{vit_paper} 
have been widely used across domains including face analysis~\cite{transface,narayan2025facexformer,jacob2021facial}, 
medical imaging~\cite{shao2021transmil,tang2022self}, and general object 
recognition~\cite{khan2022transformers,wang2025vision}. 
These backbones match or surpass strong CNN models while offering flexible 
transfer to new datasets.

Similarly, transformer-based language models 
(e.g., BERT~\cite{devlin2019bert},
RoBERTa~\cite{liu2019roberta},
DeBERTa~\cite{he2020deberta}) 
have become the dominant choice for text classification,
often outperforming CNN/RNN architectures and transferring effectively 
via pretrain–adapt pipelines.

Given their importance, understanding and mitigating their fairness challenges is crucial.
We focus on image classification with frozen vision transformers and show that the proposed framework also transfers effectively to text.


\paragraph{Fairness Approaches.}

Many approaches mitigate group disparities by modifying the training distribution itself. 
Classical methods include reweighing~\cite{kamiran2012data}, disparate impact removal~\cite{feldman2015certifying}, and optimized preprocessing~\cite{NIPS2017_9a49a25d}, 
which explicitly adjust sample weights or features to balance sensitive groups. 
More recent strategies alter the training data more subtly through curated fine-tuning~\cite{ghanbarzadeh-etal-2023-gender}, group rebalancing, or fairness-oriented augmentation~\cite{sun2023faircdacontinuousdirectionalaugmentation,halevy2025whosmultifairestallrethinking}, 
aiming to reduce distributional bias without modifying model parameters.

Unlike data modification approaches, we make no data level changes, group labels are used only at training to optimize demographic parity, and not required at inference.

Beyond data manipulation, fairness has also been pursued through changing the learning objective~\cite{agarwal2018reductions,zhang2018mitigatingunwantedbiasesadversarial}.
More recently, transformer-based methods adjust attention~\cite{zhou2024unibiasunveilingmitigatingllm}, 
mask bias-correlated ViT regions~\cite{tian2024fairvitfairvisiontransformer}, 
or apply fairness-aware prompting~\cite{Park_2024_CVPR}. These recent approaches typically adjust attention or prompting, whereas we operate in a learned sensitive latent subspace and apply randomized smoothing without architectural changes or retraining the frozen model.

Another direction introduces parameter-efficient modules for debiasing, such as adapters~\cite{fatemi-etal-2023-improving,hauzenberger-etal-2023-modular,
Yang_Yu_Fung_Li_Ji_2023,lauscher-etal-2021-sustainable-modular,
kumar-etal-2023-parameter,masoudian-etal-2024-effective}. 
For example, DAM~\cite{kumar-etal-2023-parameter} adds debiasing adapters alongside task adapters to handle multiple sensitive attributes, 
while ConGater~\cite{masoudian-etal-2024-effective} introduces controllable gates that balance fairness and utility at inference time. 
Although these approaches lower training cost, they typically act indirectly on representations without explicitly identifying or perturbing a sensitive subspace. 
In contrast, our method keeps the transformer backbone frozen and directly manipulates a learned sensitive subspace through noise injection.

Recent methods improve fairness by directly altering latent representations, 
with approaches based on latent factorization or variational
modeling~\cite{pmlr-v28-zemel13,louizos2015variational} and adversarially aligned
representations~\cite{madras2018learning,zhang2018mitigatingunwantedbiasesadversarial,
Gesadebiasing2023} that aim to reduce sensitive information in learned features through min–max optimization, can be unstable and often requiring multi-stage training.

Projection-based methods such as INLP~\cite{ravfogel-etal-2020-null},
sufficient projection (SUP)~\cite{shi-etal-2024-debiasing}, and
SLSD~\cite{Islam_Chen_Cai_2024} remove subspaces predictive of sensitive
attributes; however, linear removal can discard task-relevant information when
sensitive and semantic directions overlap. 
Information-theoretic approaches~\cite{kang2023infofairinformationtheoreticintersectionalfairness,
wang2023fairnesstextgenerationmutual,ijcai2024p273} estimate mutual information
to regularize fairness, while contrastive
debiasing~\cite{park2022faircontrastivelearningfacial,shen2021contrastive} often
relies on group-balanced sampling and two-stage optimization. 
Recent concept-editing methods~\cite{anonymous2025evaluating} learn sparse
subspaces aligned with sensitive concepts and suppress them to reduce probe
recoverability.


 \paragraph{Fairness via Smoothing Models.}
Randomized smoothing
\cite{lecuyer2019certifiedrobustnessadversarialexamples,
cohen2019certifiedadversarialrobustnessrandomized}
is primarily studied as a robustness technique, where prediction stability under
noise yields certified guarantees for robust predictions. Although not originally
developed for fairness, the resulting invariance suggests that smoothing could
help reducing reliance on sensitive factors.

Individual fairness is formalized via task-relevant similarity
metrics~\cite{dwork2011fairnessawareness}. Empirical work connecting smoothing
to fairness remains limited.  For example,
\cite{jin2022inputagnosticcertifiedgroupfairness} trains group-specific models
and averages their parameters to certify group fairness in low-dimensional
tabular settings, while \cite{yeom2020individualfairnessrevisitedtransferring,
peychev2022latentspacesmoothingindividually} encourage individual fairness via
smoothing in input or latent spaces. These approaches, however, require isolated sensitive attributes in tabular data style, or operate only when input perturbations are well
defined. Unlike prior works, our method performs smoothing selectively in a learned sensitive subspace, suppressing sensitive variation while preserving task structure, without architectural changes or using sensitive labels at inference.



\section{Obfuscating Sensitive Information Improves Fairness}
\label{appendix:certification}
In this section, we demonstrate the mathematical intuition to the design of the FairNVT framework. 

Let $(X, Y)$ denote the feature-label pair, where the features $X$ may contain information about both the task label $Y$ and a sensitive attribute $S$. Let $Z=e(X)$ denote the representation produced by an encoder $e$, for example, the pre-trained frozen backbone models, and let $h$ be a classifier producing predictions $\hat{Y}=\mathbbm{1}(h(Z)>\tau)$ with a threshold $\tau$. In the case where both $S, Y$ are binary attributes, we define demographic parity difference as $DP \coloneqq |\mathrm{Pr}(\hat{Y}=1|S=0)-\mathrm{Pr}(\hat{Y}=1|S=1)|$, then the following result is an immediate consequence of the definition of total variation distance.
\begin{proposition}
    If $Z \ind S$, then $DP=0$.
\end{proposition}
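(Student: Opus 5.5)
The plan is to express $DP$ through the conditional laws of $Z$ given $S$ and then use independence to make those laws coincide. Write $P_s$ for the conditional distribution of $Z$ given $S=s$, for $s\in\{0,1\}$. Set $A=\{z : h(z)>\tau\}$, assuming $h$ is measurable so that $A$ is an event. Because $\hat{Y}=\mathbbm{1}(h(Z)>\tau)$ is a deterministic function of $Z$, we have
\begin{equation*}
\mathrm{Pr}(\hat{Y}=1\mid S=s)=\mathrm{Pr}(Z\in A\mid S=s)=P_s(A).
\end{equation*}
This gives $DP=|P_0(A)-P_1(A)|\le \sup_{B}|P_0(B)-P_1(B)| = \mathrm{TV}(P_0,P_1)$. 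This is the sense in which the statement is "an immediate consequence of the definition of total variation distance." The bound holds for every threshold $\tau$ and every classifier $h$.

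Next I use $Z\ind S$. Independence means $\mathrm{Pr}(Z\in B, S=s)=\mathrm{Pr}(Z\in B)\,\mathrm{Pr}(S=s)$ for all measurable $B$. Dividing by $\mathrm{Pr}(S=s)$ gives $P_0=P_1=P_Z$, the marginal law of $Z$. Hence $\mathrm{TV}(P_0,P_1)=0$, so $DP=0$. Equivalently, $\mathrm{Pr}(\hat{Y}=1\mid S=s)=\mathrm{Pr}(\hat{Y}=1)$ for both groups, which recovers the remark in the Preliminaries.

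The only point requiring care is well-definedness. The conditional probabilities in $DP$ need $\mathrm{Pr}(S=0)>0$ and $\mathrm{Pr}(S=1)>0$; I will state this as a standing assumption, since otherwise $DP$ is undefined. With that in place there is no real obstacle. I would also remark that the same argument applies to the randomized FairNVT pipeline, provided the injected noise is drawn independently of $(X,S)$. In that case $Z=e_f$ together with the noise is still independent of $S$ whenever $e_f\ind S$, and the prediction remains a measurable function of it.
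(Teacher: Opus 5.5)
Your proof is correct and matches the paper's argument. Both take $A=\{z: h(z)>\tau\}$, write $\Pr(\hat Y=1\mid S=s)=P_s(A)$, bound $DP=|P_0(A)-P_1(A)|$ by $\delta_{TV}(P_0,P_1)$, and use $Z\ind S$ to get $P_0=P_1$; your positivity assumption $\Pr(S=s)>0$ and the explicit derivation $P_0=P_1=P_Z$ are small additions the paper leaves implicit.

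In your closing aside, $\hat Y$ is already a deterministic function of $e_f$, which contains the noise, so $e_f\ind S$ alone gives $DP=0$ and the proviso on the noise is not needed for that implication.
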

\begin{proof}
    Let $A$ be the event that the classifier $h$ predicts $\hat{Y}=1$, i.e. $A=\{z:\mathbbm{1}(h(z)>\tau)=1\}$, and let $P, Q$ be the conditional distribution of $Z|S=0$ and $Z|S=1$. At inference time $\hat{Y}$ is a deterministic function of $Z$, so $P(A)=\mathrm{Pr}(Z\in A|S=0)=\mathrm{Pr}(\hat{Y}=1|S=0)$, $Q(A)=\mathrm{Pr}(\hat{Y}=1|S=1)$. By the definition of demographic parity difference ($DP$) and total variation distance ($\delta_{TV}$),
    \[
        DP \coloneqq |P(\hat{Y}=1|S=0)-P(\hat{Y}=1|S=1)| = |P(A)-Q(A)|\leq \sup_A|P(A)-Q(A)| \coloneqq \delta_{TV}(P,Q).
    \]
    If $Z \ind S$, then $P=Q$ and $\delta_{TV}(P,Q)=0$ hence $DP=0$.
\end{proof}

Although exact independence between $Z$ and $S$ is difficult to achieve in practice, reducing the dependence of the learned representation on the sensitive attribute is a natural surrogate objective. The following proposition describes an \textit{idealized} setting in which the sensitive component of the representation is completely randomized and the task embedding is made independent of the sensitive attribute.
\begin{proposition}
Suppose the learned representation allows a decomposition $Z=(Z^t, Z^s)$, where $Z^t$ contains task-relevant non-sensitive information and satisfies $Z^t \ind S$. Let $N$ be a random variable independent of both $S$ and $Z^t$. If the sensitive component $Z^s$ is replaced by $N$, then the resulting representation $\tilde{Z}=(Z^t,N)$
satisfies $\tilde{Z}\ind S$.
\end{proposition}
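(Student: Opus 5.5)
The plan is to verify independence directly on product sets and then extend to all measurable sets with a $\pi$--$\lambda$ argument. I will read the hypothesis ``$N$ is independent of both $S$ and $Z^t$'' as joint independence, $N \ind (Z^t, S)$. This is the natural model for the noise injection: $N$ is drawn freshly from a fixed distribution, regardless of the input. Pairwise independence would not suffice, which is the one subtle point of the proof. For example, take $S$ and $Z^t$ to be independent fair bits and set $N = S \oplus Z^t$. Then $N$ is independent of $S$ and of $Z^t$ separately, yet $(Z^t,N)$ determines $S$. I will state this reading explicitly at the start.

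\emph{Step 1 (rectangles).} Fix measurable sets $A$ in the range of $Z^t$ and $B$ in the range of $N$, and fix a value $s$ with $\Pr(S=s)>0$. Since $N \ind (Z^t,S)$,
\[
\Pr(Z^t\in A,\, N\in B \mid S=s) = \Pr(Z^t\in A \mid S=s)\,\Pr(N\in B).
\]
The hypothesis $Z^t \ind S$ then gives $\Pr(Z^t\in A\mid S=s)=\Pr(Z^t\in A)$. Applying joint independence once more without conditioning, $\Pr(Z^t\in A)\Pr(N\in B)=\Pr(Z^t\in A, N\in B)$. Hence $\Pr(\tilde Z\in A\times B\mid S=s)=\Pr(\tilde Z\in A\times B)$ for every rectangle $A\times B$.

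\emph{Step 2 (extension).} The measurable rectangles form a $\pi$-system that generates the product $\sigma$-algebra. Both $C\mapsto \Pr(\tilde Z\in C\mid S=s)$ and $C\mapsto\Pr(\tilde Z\in C)$ are probability measures, and by Step 1 they agree on this $\pi$-system. Dynkin's $\pi$--$\lambda$ theorem therefore makes them agree on every measurable $C$. So the conditional law of $\tilde Z$ given $S=s$ does not depend on $s$, which is exactly $\tilde Z \ind S$. For binary $S$ it is enough to compare $s=0$ and $s=1$.

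\emph{Step 3 (consequence).} Applying the preceding proposition to $\tilde Z$ gives $DP=0$ for any classifier $h$ thresholded on $\tilde Z$. This connects the idealized decomposition to the demographic parity motivation of FairNVT.

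The main obstacle is the independence bookkeeping in Step 1: it needs the joint statement $N\ind(Z^t,S)$, as the XOR example shows. Once that is assumed, the rest is routine measure theory.
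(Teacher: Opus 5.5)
Your proof is correct and takes the same route as the paper, whose entire argument is the factorization $P(Z^t,N,S)=P(Z^t)P(N)P(S)=P(\tilde Z)P(S)$. Your rectangle-plus-Dynkin argument is the rigorous version of that factorization, and conditioning on $\{S=s\}$ is fine since $S$ is binary in this section. The middle equality in the paper silently relies on the joint independence $N\ind(Z^t,S)$ that you state explicitly, and your XOR example correctly shows that pairwise independence alone would not justify it.
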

\begin{proof}
Since $Z^t \ind S$ and $N$ is independent of both $Z^t$ and $S$, $P(\tilde{Z},S)=P(Z^t,N,S)=P(Z^t)P(N)P(S)=P(\tilde{Z})P(S)$, which implies $\tilde{Z}\ind S$.
\end{proof}

The propositions illustrate the intuition behind our framework. While FairNVT does not replace the sensitive component with an independent random variable, it injects Gaussian noise into the sensitive embedding to progressively reduce the information it carries about the sensitive attribute while preserving task-relevant information. On the other hand, we apply orthogonality constraint to decoupling task and sensitive embedding, encouraging $Z^t$ to be less correlated with $S$. Even though achieving independence is challenging, these operations move toward the idealized setting to reduce sensitive information exposure to the downstream task classifier. Consequently in the ideal case, we expect the representations of different sensitive groups to become less distinguishable. Since the demographic parity difference is upper bounded by the total variation distance between $P(Z\mid S=0)$ and $P(Z\mid S=1)$, we expected reduced demographic parity differences. Moreover, since total variation distance upper bounds the optimal accuracy of distinguishing between two probability distributions (Theorem~3.1 of~\cite{aubinais2023fundamental}), reducing the discrepancy between sensitive group embeddings also limits the accuracy of predicting the sensitive attribute from the learned representation.

For Equalized Odds (EO) and Equal Opportunity (EOpp), a stronger condition $Z \ind S|Y$ is sufficient to ensure fairness when predictions depend only on the learned representation. While our framework does not explicitly enforce this conditional independence, reducing sensitive information in the learned representation may also reduce conditional disparities in practice, which we verify empirically.

\section{Experiment Setup Details}
\label{appendix:experiment_setup}

\paragraph{Model Architectures}

We use the ViT-Base model \footnote{\texttt{google/vit-base-patch16-224}} as the frozen backbone for the CelebA and UTKFace datasets.
Both task and sensitive adapters are bottleneck adapters inserted into each Transformer block of the frozen backbone. Each adapter consists of a down-projection that maps hidden states to a lower-dimensional space and an up-projection that restores them to the original hidden dimension. The reduction factor is a tunable hyperparameter.
The task and sensitive classification heads are Multi-Layer Perceptrons (MLPs) whose hidden layer size matches the respective embedding dimension; the number of hidden layers is also treated as a tunable hyperparameter. For evaluating representation-level fairness via attacker accuracies, we use an attacker network with the same architecture as the task classification head. Table~\ref{tab:architecture_specification} summarizes an example FairNVT architecture used in the experiment for the task attribute \emph{expression (smiling)} and the sensitive attribute \emph{gender (male)}. FairNVT for vision task trains only 5.4M parameters ($\sim 6\%$ of ViT-Base) by freezing the backbone and introducing lightweight adapters and classification heads, significantly reducing computational cost compared to full fine-tuning.

\begin{table}[htb]
\centering
\caption{FairNVT architectural specifications for the experiment with task attribute \emph{expression (smiling)} and sensitive attribute \emph{gender (male)}. 
Layer dimensions are denoted as $N_{\textnormal{weight\_in}} \times N_{\textnormal{weight\_out}} + N_{\textnormal{bias}}$. 
Task and sensitive adapter layers are attached after the final dense layer of the frozen ViT encoder (output dimension = 768) in all 11 encoder layers. 
Noise injection and embedding concatenation introduce no trainable parameters.}

\label{tab:architecture_specification}
\resizebox{0.75\textwidth}{!}{%
\setlength{\tabcolsep}{6pt}
\renewcommand{\arraystretch}{1.3}
\begin{tabular}{c|c|cc}
\hline
\textbf{Architecture} & \textbf{Layer} & \textbf{Specification} & \textbf{Output Size} \\ \hline
\multirow{2}{*}{Task Adapter} & down\_projection & $(768 \times 96 + 96) \times 11$ & 96 \\
 & up\_projection & $(96 \times 768 + 768) \times 11$ & 768 \\ \hline
\multirow{2}{*}{Sensitive Adapter} & down\_projection & $(768 \times 48 + 48)\times 11$ & 48 \\
 & up\_projection & $(48 \times 768 + 768)\times 11$ & 768 \\ \hline
Noise Injection & \textbackslash{} & \textbackslash{} & 768 \\ \hline
Embedding Concatenation & \textbackslash{} & \textbackslash{} & $768 \times 2$ \\ \hline
\multirow{3}{*}{Task Clf Head} & linear\_0 & $(768 \times 2) \times (768 \times 2) + (768 \times 2)$ & $768 \times 2$ \\
 & tanh\_activation & \textbackslash{} & $768 \times 2$ \\
 & linear\_1 & $(768 \times 2) \times 2 + 2$ & 2 \\ \hline
\multirow{3}{*}{Sensitive Clf Head} & linear\_0 & $768 \times 768 + 768$ & 768 \\
 & tanh\_activation & \textbackslash{} & 768 \\
 & linear\_1 & $768 \times 2 + 2$ & 2 \\ \hline
\end{tabular}
}
\end{table}

\paragraph{Implementation Details.}
We implement all models in PyTorch~\cite{Paszke2019PyTorchAI} and train them on a workstation equipped with an AMD EPYC~7H12 CPU (64~cores) with a NVIDIA~A100 GPU. For both our method and the baselines, we train the models using AdamW~\citep{loshchilov2019decoupledweightdecayregularization, kingma2017adammethodstochasticoptimization} with batch size~256 and default hyper parameters of $\beta_1 = 0.9$, $\beta_2 = 0.999$, a weight decay of 0.01, and a batch size of 256. The adapter architecture uses a reduction factor of~8 for the task branch and~16 for the sensitive branch. Training the debiasing framework for one run takes approximately 3~hours on a single A100 GPU. Inference requires 1.1~seconds for a batch of 256~samples.  

During training, we run a grid search over other sensitive hyperparameters including learning rates and loss weights, and report the best validation-selected results. Specifically, we perform grid-search hyperparameter tuning over the following ranges: adapter reduction factor $\{4, 8, 16\}$; number of hidden layers $\{0, 1, 2\}$; learning rates (searched by orders of magnitude, e.g., $1\mathrm{e}{-1}$, $1\mathrm{e}{-2}$, etc., until the best run is not at a boundary value); gradient-clipping thresholds $\{1, 10\}$; noise levels $\{1, 5, 10\}$; and loss-weight coefficients $\beta \in \{0.1, 0.3, 0.5, 1.0\}$.

We tune the following hyperparameters for the baseline methods using validation performance for model selection. For FairVPT, we grid search the fairness loss weight $\lambda \in \{0.01, 0.1, 0.5, 1.0\}$ and the phase-two linear probing learning rates, while fixing the number of cleaner prompts to 25 as recommended by the original implementation. For FairViT, we tune the learning rates, the distance loss weight $\alpha \in \{0.1, 0.5, 1.0, 5.0\}$, and the mask optimization step size $\{0.1, 1.0, 3.0\}$. For FSCL, we tune the phase-one and phase-two learning rates, the contrastive temperature $\{0.05, 0.1, 0.3\}$. For FairNet, we tune the learning rate, and the weight parameters $\lambda_c$ and $\lambda_d$ in $\{0.1, 0.5, 1.0, 5.0\}$.

For evaluation, accuracy and balanced accuracy are computed from the predicted and true task labels. Fairness metrics (DP, EO, and EOpp) are computed using the predicted and true task labels together with the true sensitive attributes. The attacker setup follows~\cite{kumar-etal-2023-parameter}: in an independent run, the attacker receives the task-classifier embeddings as input $X$ and the corresponding sensitive attributes $Y$ from the training and test sets. The attacker is trained to predict $Y$ from $X$ until the training accuracy no longer improves significantly, and its test accuracy is reported as the attacker’s ability to recover the sensitive attribute from the learned representation.

\section{More Experiment Results}
\label{appendix:more_exp_results}

\subsection{Image-based Classification Results}

\paragraph{More experiments on CelebA.}
Table \ref{tab:appendix_results_celeba} shows the results on more task, sensitive attribute pairs in the CelebA dataset. In most cases, we observe FairNVT showing a good balance between the prediction and fairness objectives, achieves better or comparable performances to the best baseline across different metrics.

\paragraph{Additional qualitative results on CelebA.}
We provide additional samples for the \emph{expression (smiling)} task with \emph{gender (male)} as the sensitive attribute. Heatmaps are computed with \emph{SmoothGrad} on the predicted class logit by averaging input gradients over 25 Gaussian noised, normalized inputs ($\sigma\!=\!0.10$), aggregating $|\nabla_x|$ across channels, bilinearly resizing, and applying a light $3{\times}3$ blur, with maps normalized independently per image so intensities reflect within panel variation. Warmer regions indicate stronger contributions to the output logit. FairNVT concentrates on expression-relevant areas (e.g., mouth, cheeks), suggesting reduced reliance on gender-correlated cues and improved fairness via task-specific evidence.

\begin{figure*}[tbh]
\centering
\scriptsize
\newcommand{\imgbox}[1]{\makebox[1.8cm]{#1}}
\renewcommand{\arraystretch}{0}
\setlength{\tabcolsep}{0pt}
\scalebox{1}{%
\begin{tabular}{@{}c@{\hspace{-7pt}}c@{\hspace{-7pt}}c@{\hspace{-7pt}}c@{\hspace{-7pt}}c@{\hspace{10pt}}c@{\hspace{-7pt}}c@{\hspace{-7pt}}c@{\hspace{-7pt}}c@{\hspace{-7pt}}c@{}}
\scriptsize Input &
\scriptsize Vanilla (ViT) &
\scriptsize ViT-FSCL &
\scriptsize FairViT &
\scriptsize \textbf{FairNVT} (Ours) &
\scriptsize Input &
\scriptsize Vanilla (ViT) &
\scriptsize ViT-FSCL &
\scriptsize FairViT &
\scriptsize \textbf{FairNVT} (Ours) \\
\imgbox{\includegraphics[height=1.6cm]{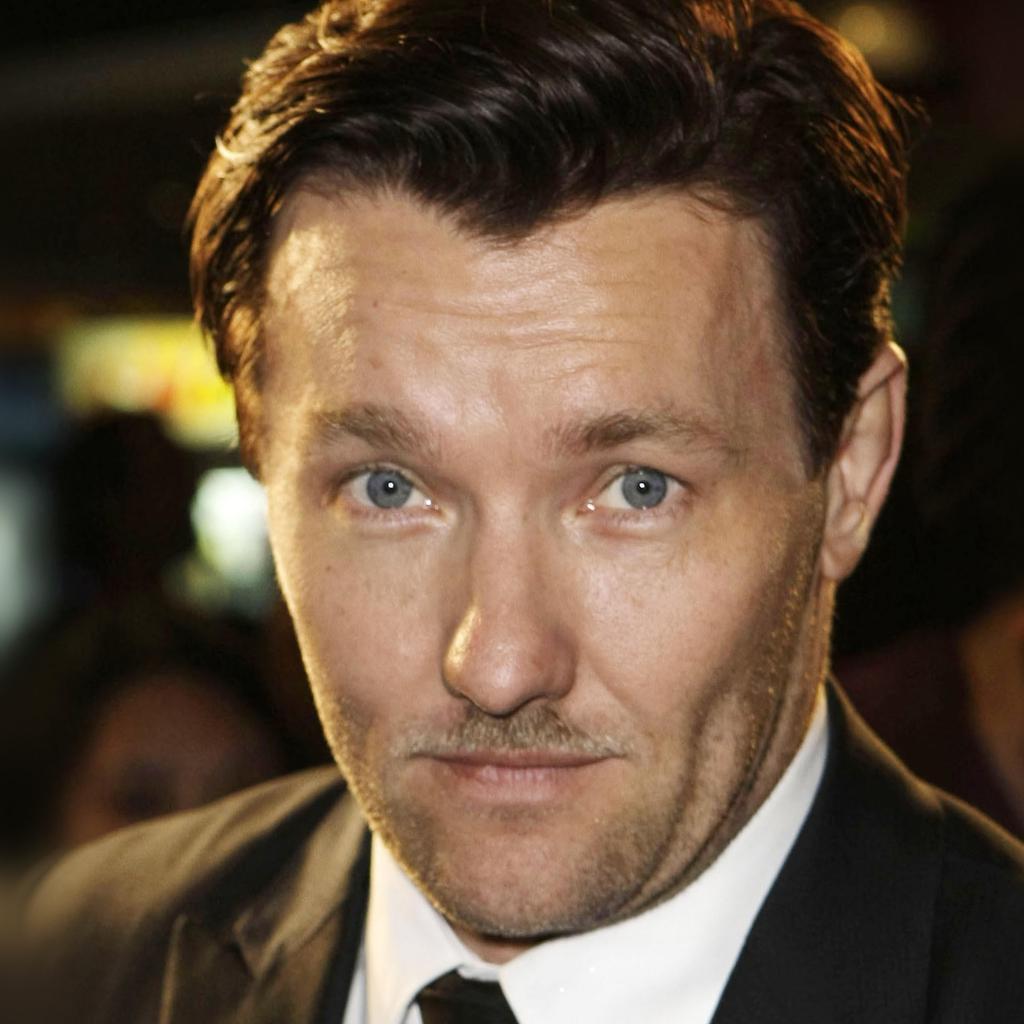}} &
\imgbox{\includegraphics[height=1.6cm]{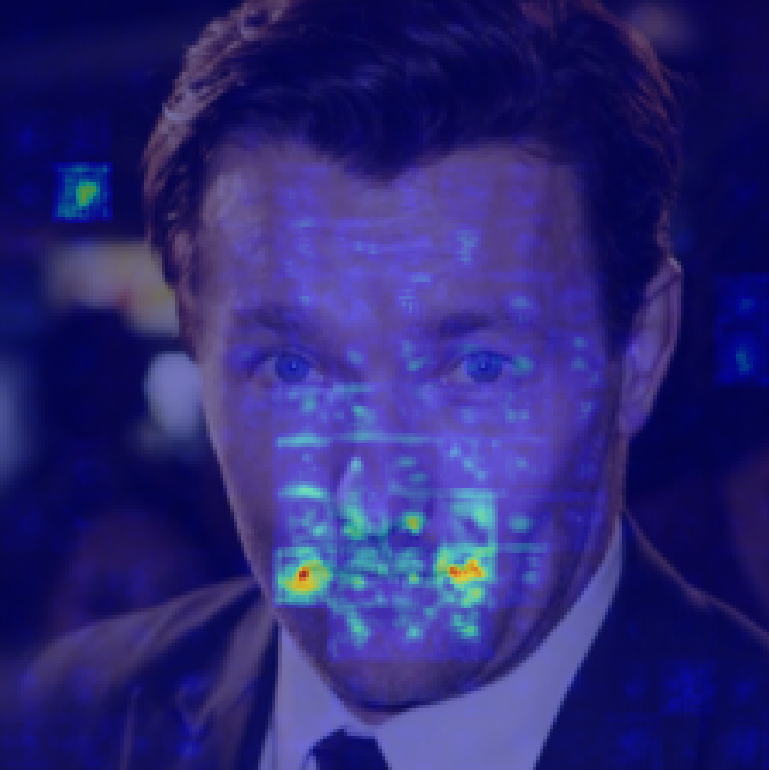}} &
\imgbox{\includegraphics[height=1.6cm]{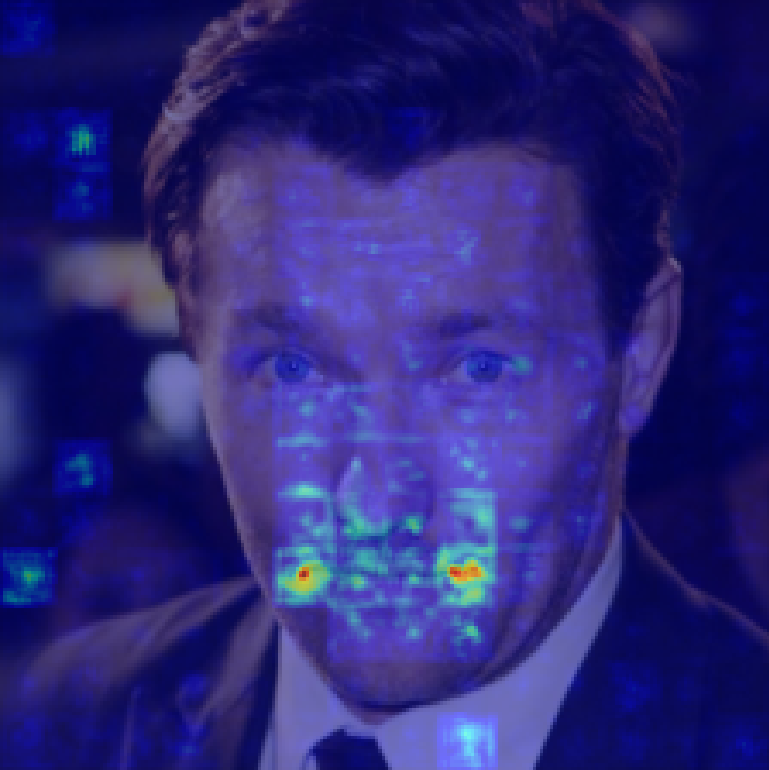}} &
\imgbox{\includegraphics[height=1.6cm]{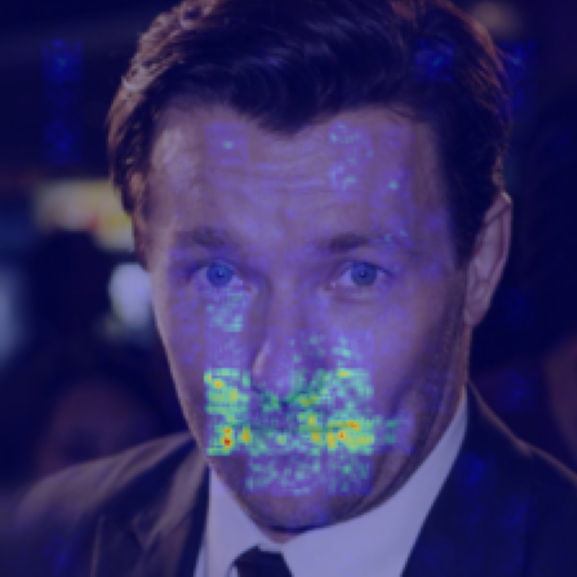}} &
\imgbox{\includegraphics[height=1.6cm]{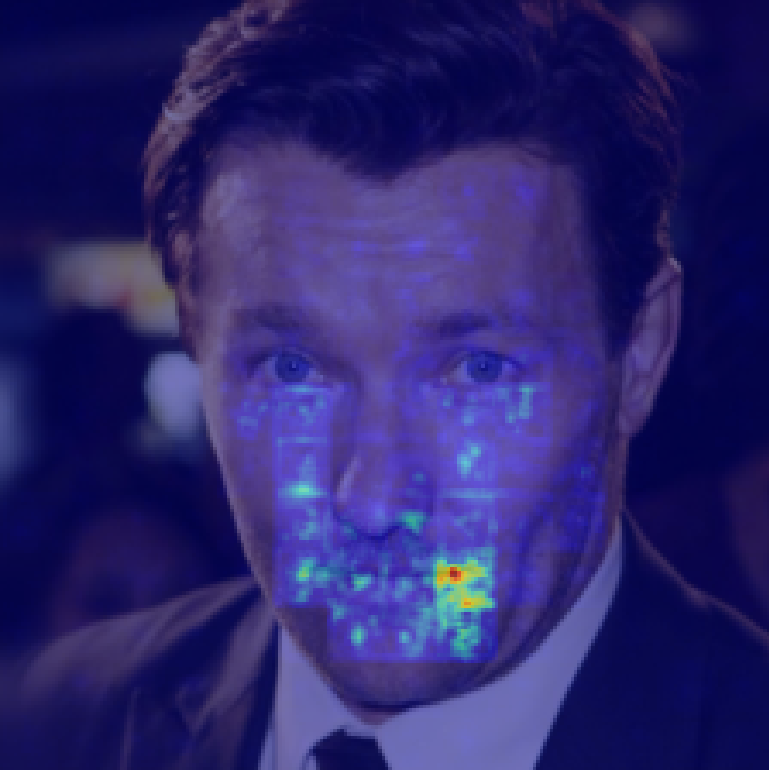}} &
\imgbox{\includegraphics[height=1.6cm]{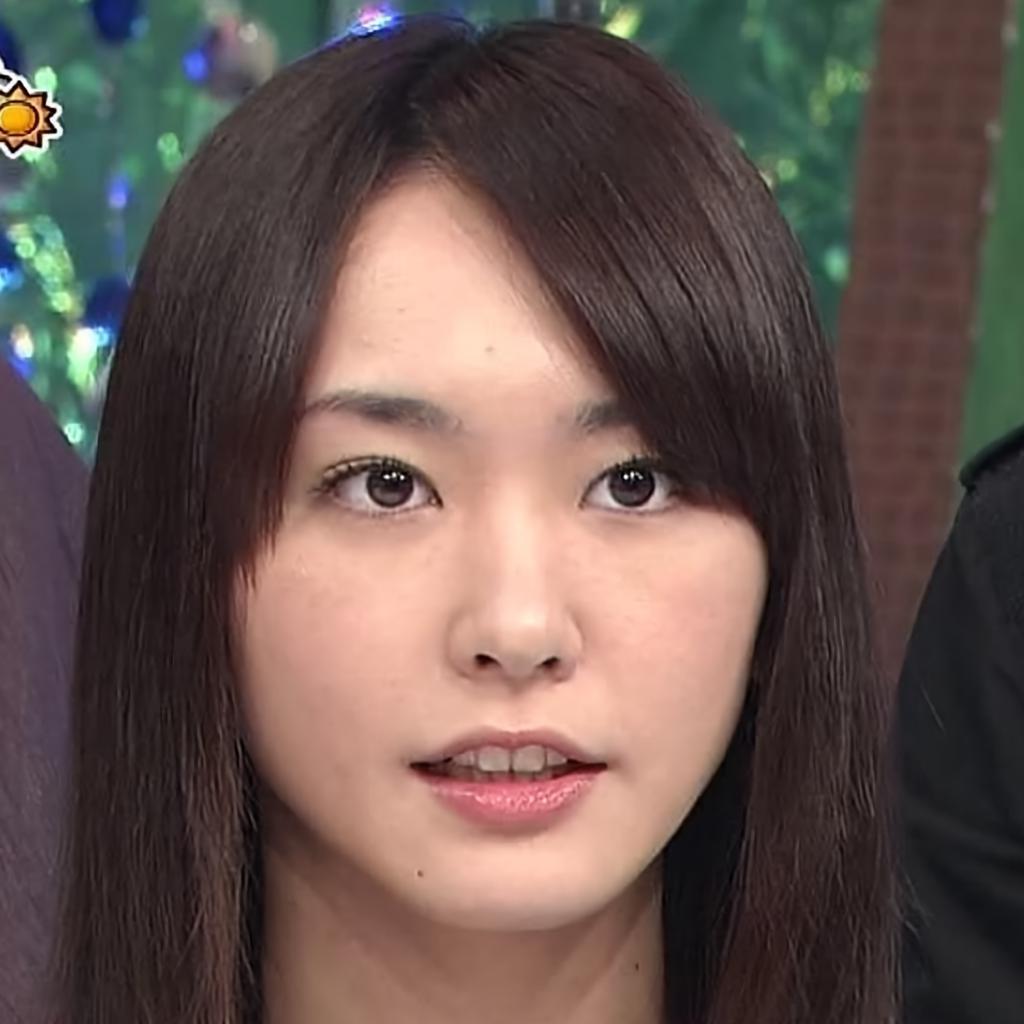}} &
\imgbox{\includegraphics[height=1.6cm]{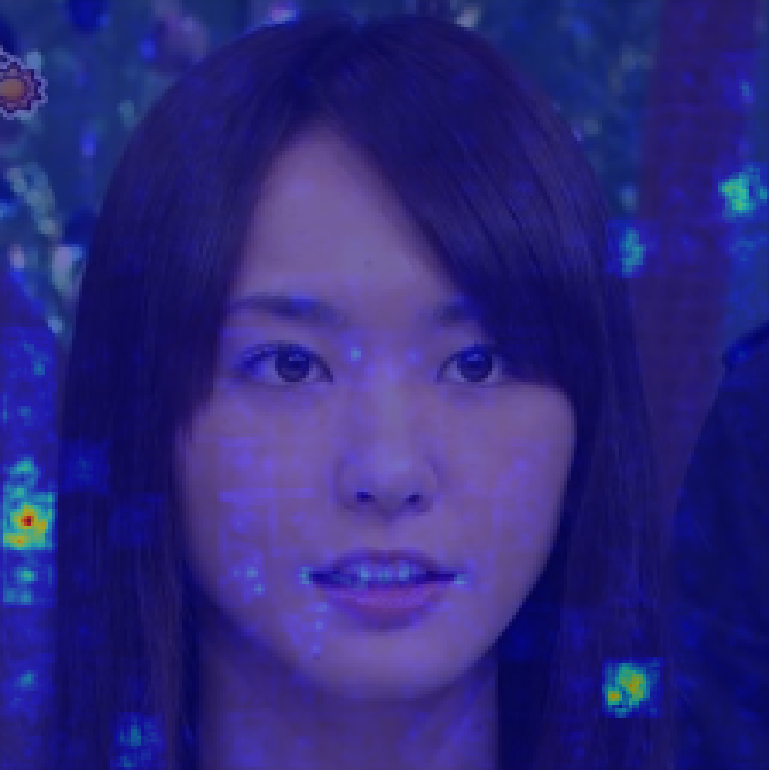}} &
\imgbox{\includegraphics[height=1.6cm]{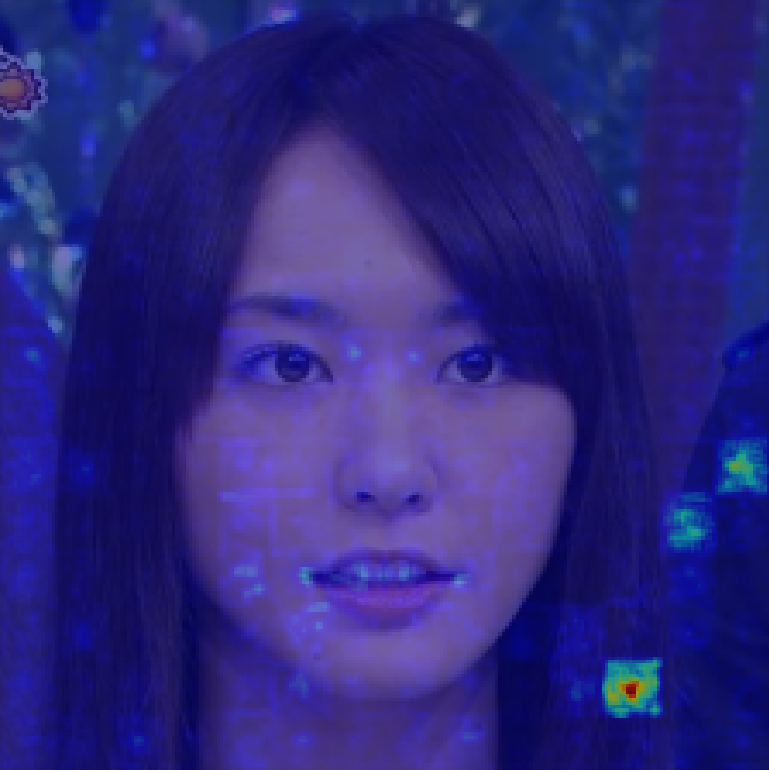}} &
\imgbox{\includegraphics[height=1.6cm]{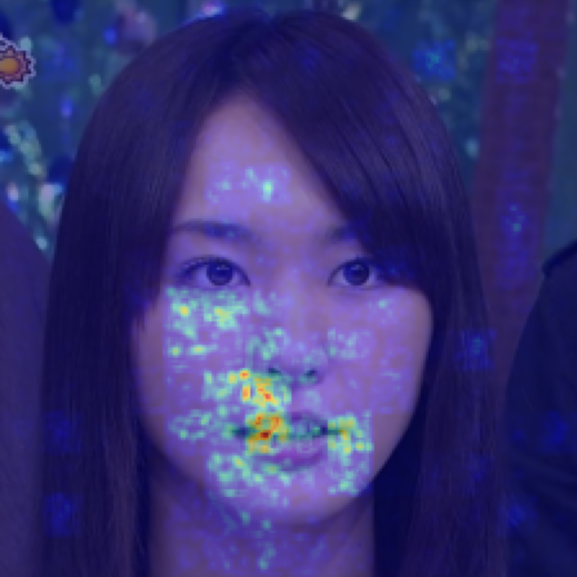}} &
\imgbox{\includegraphics[height=1.6cm]{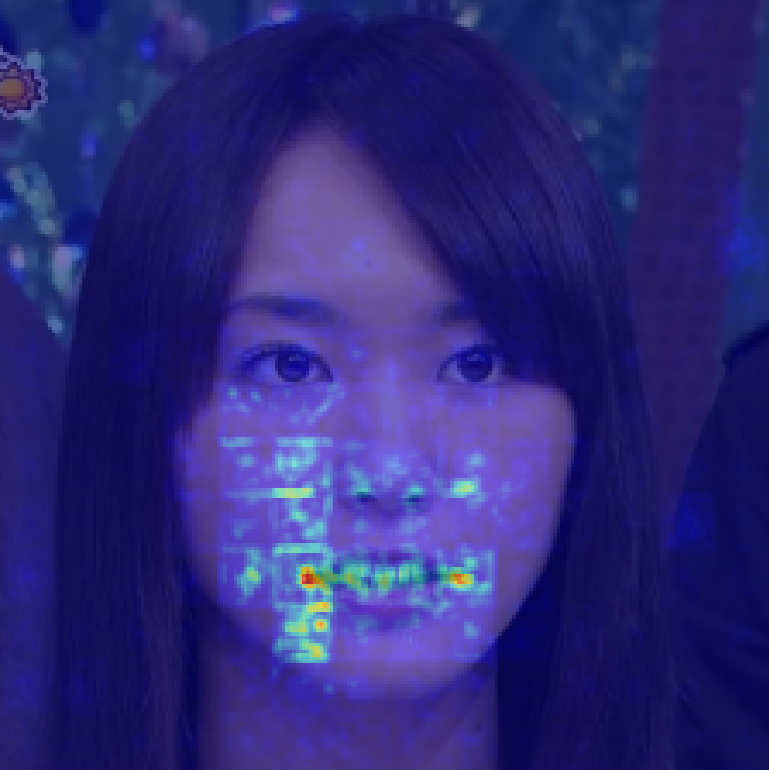}} \\[2pt]
\imgbox{\includegraphics[height=1.6cm]{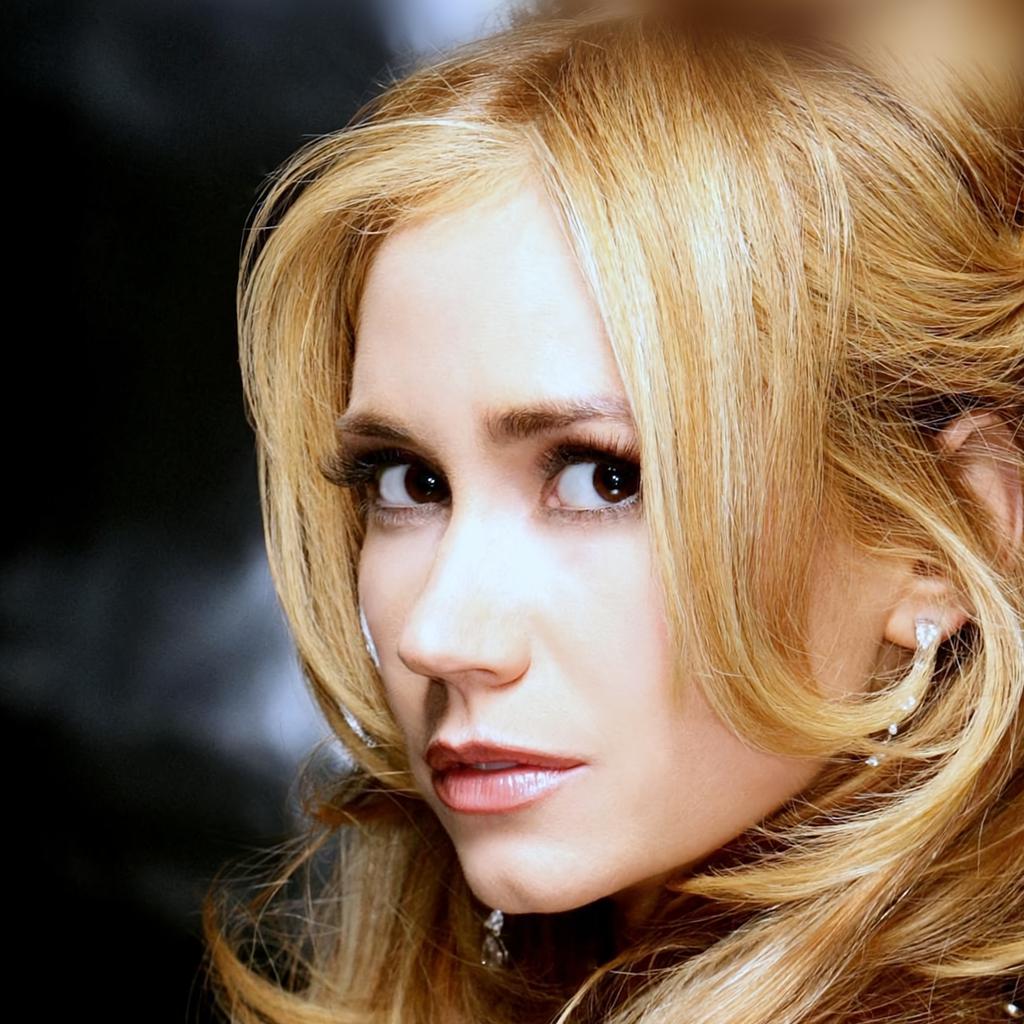}} &
\imgbox{\includegraphics[height=1.6cm]{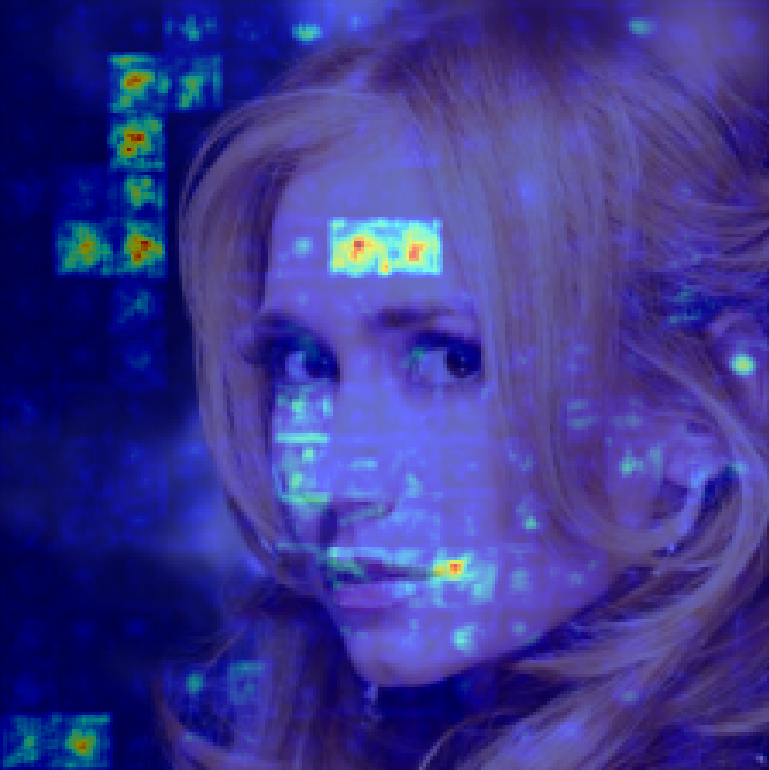}} &
\imgbox{\includegraphics[height=1.6cm]{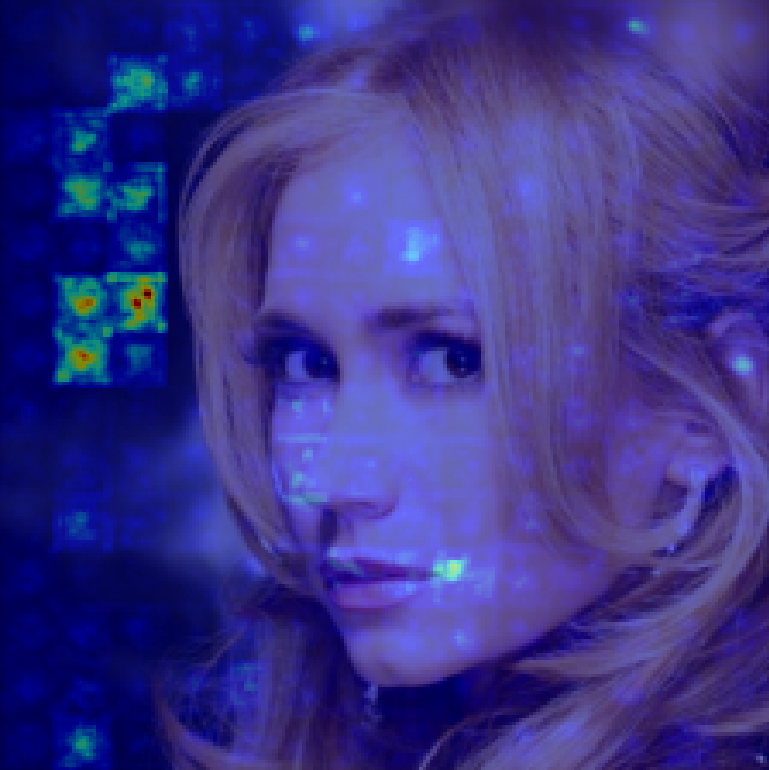}} &
\imgbox{\includegraphics[height=1.6cm]{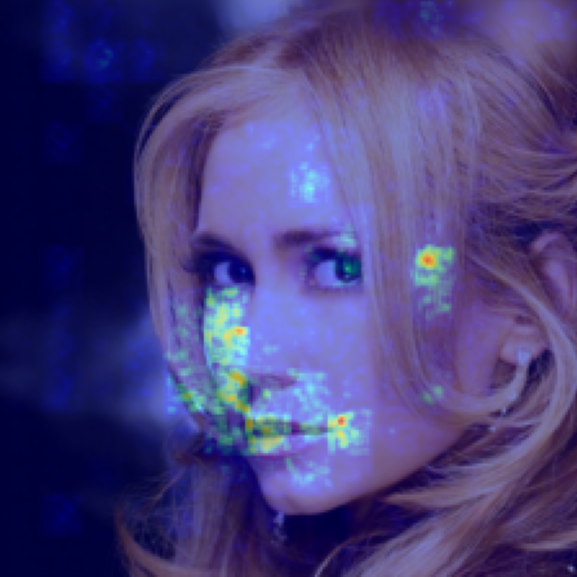}} &
\imgbox{\includegraphics[height=1.6cm]{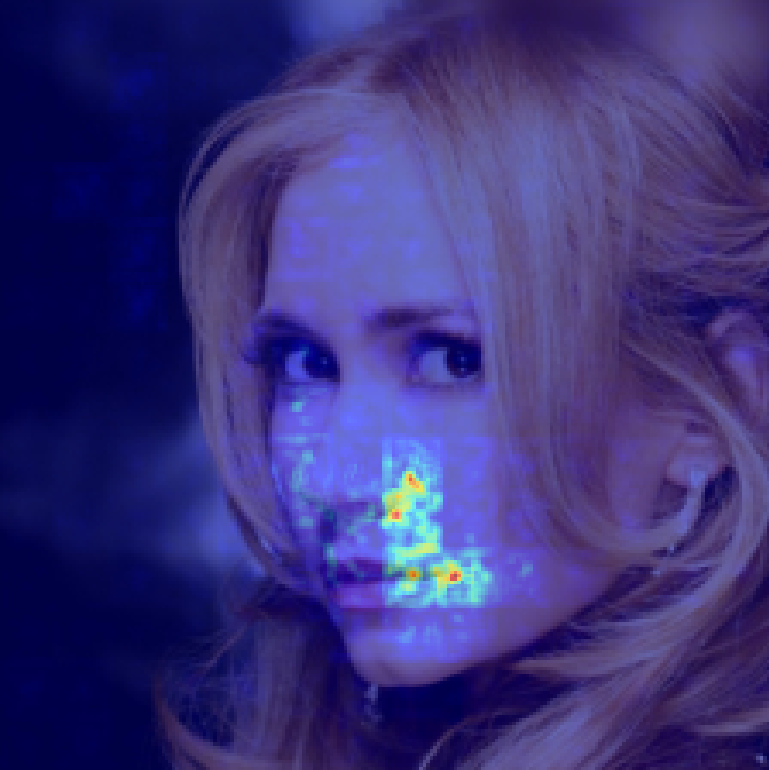}} &
\imgbox{\includegraphics[height=1.6cm]{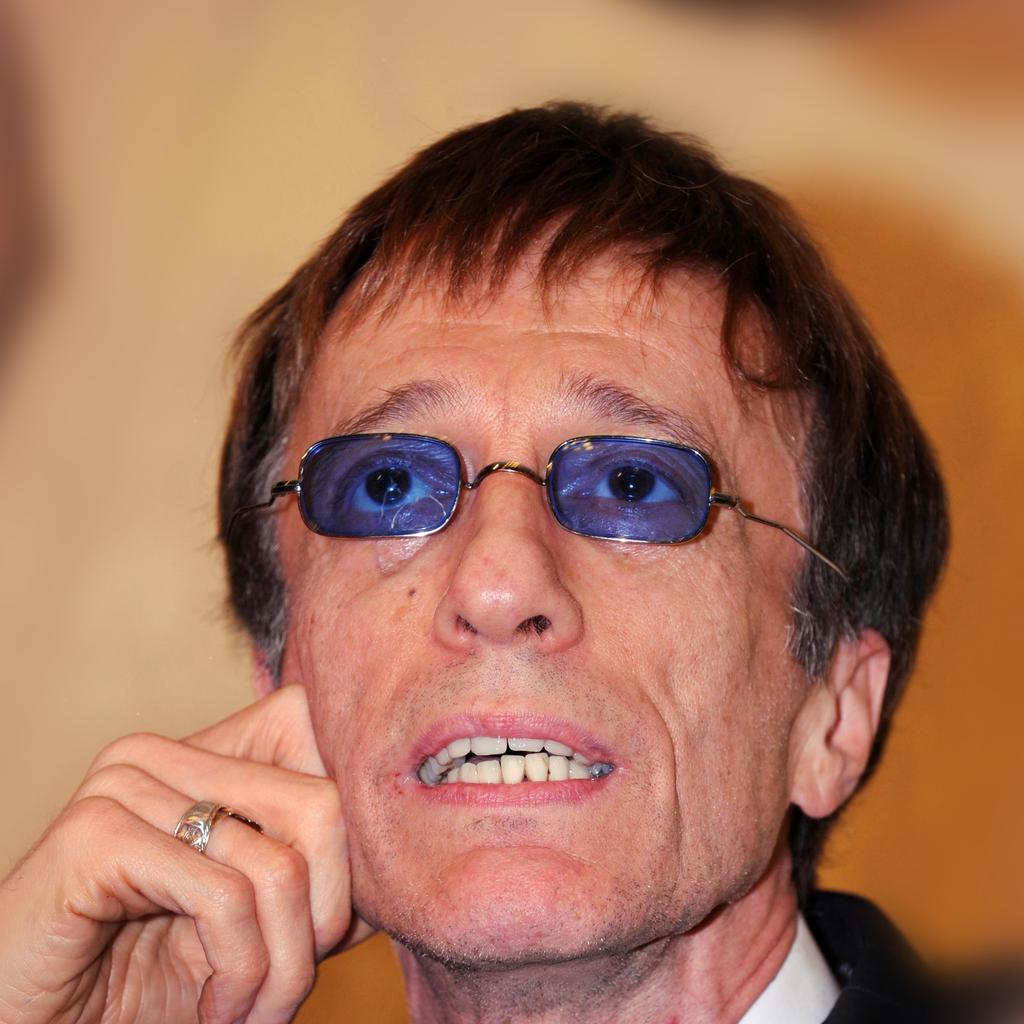}} &
\imgbox{\includegraphics[height=1.6cm]{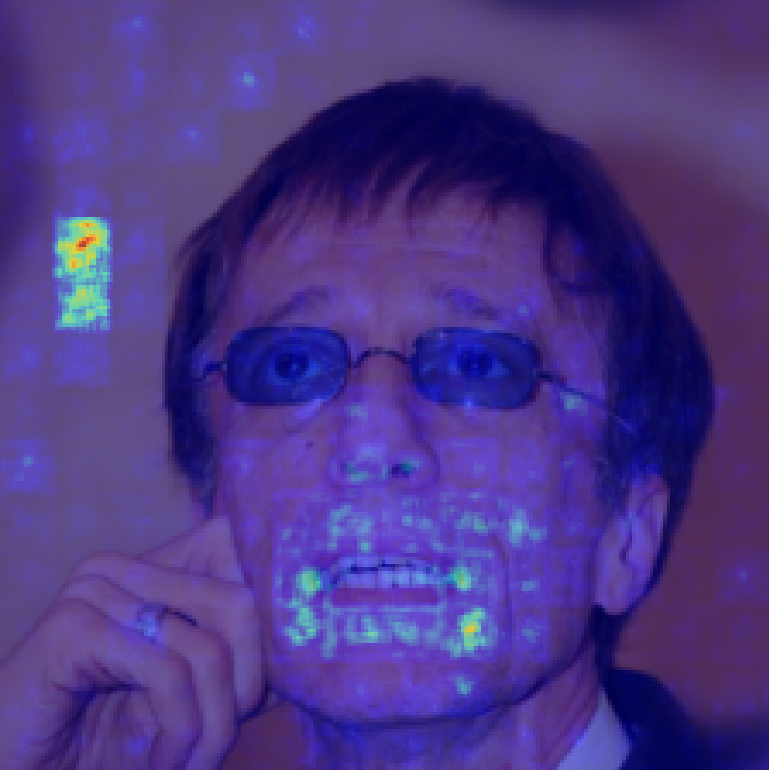}} &
\imgbox{\includegraphics[height=1.6cm]{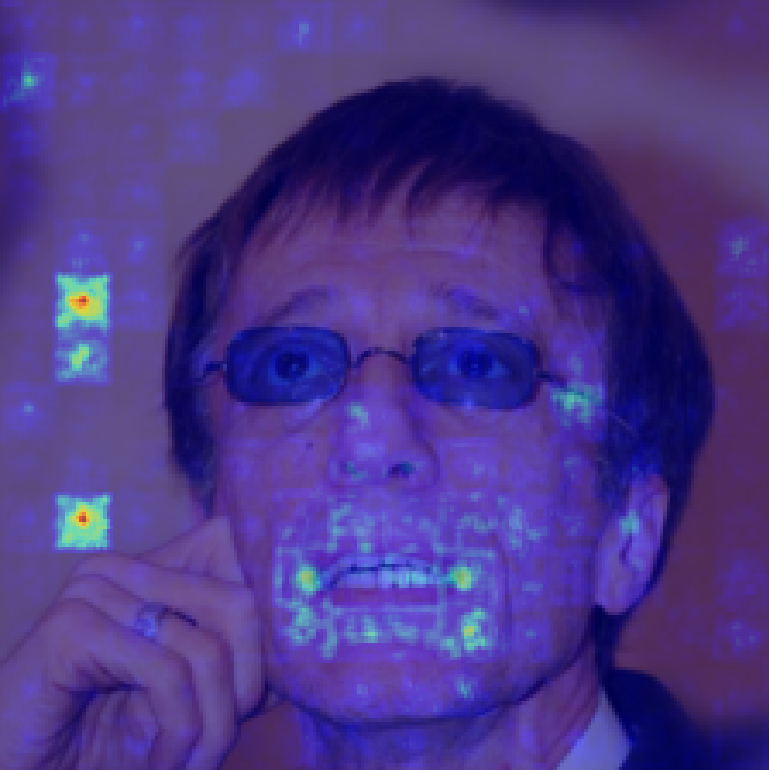}} &
\imgbox{\includegraphics[height=1.6cm]{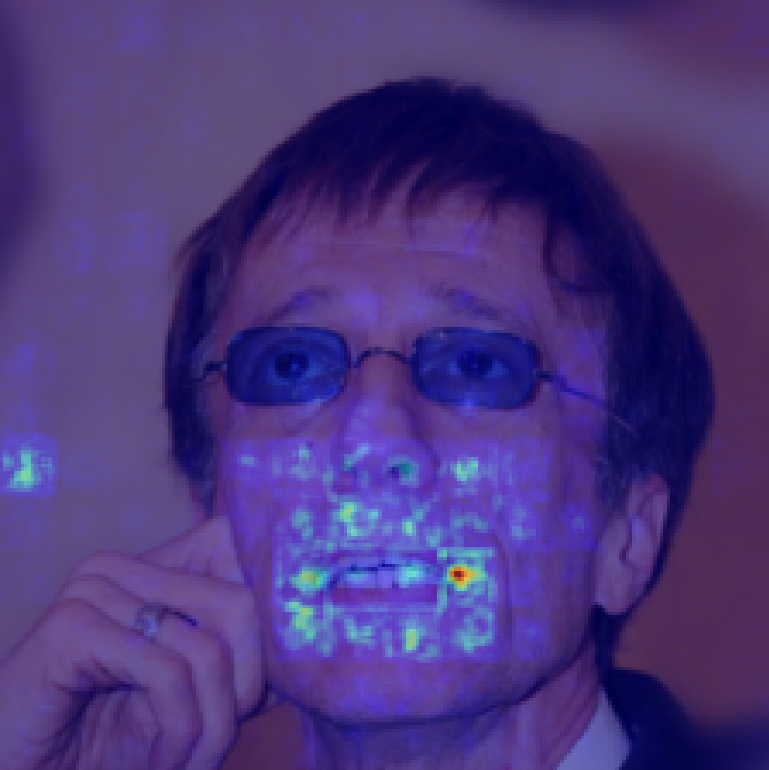}} &
\imgbox{\includegraphics[height=1.6cm]{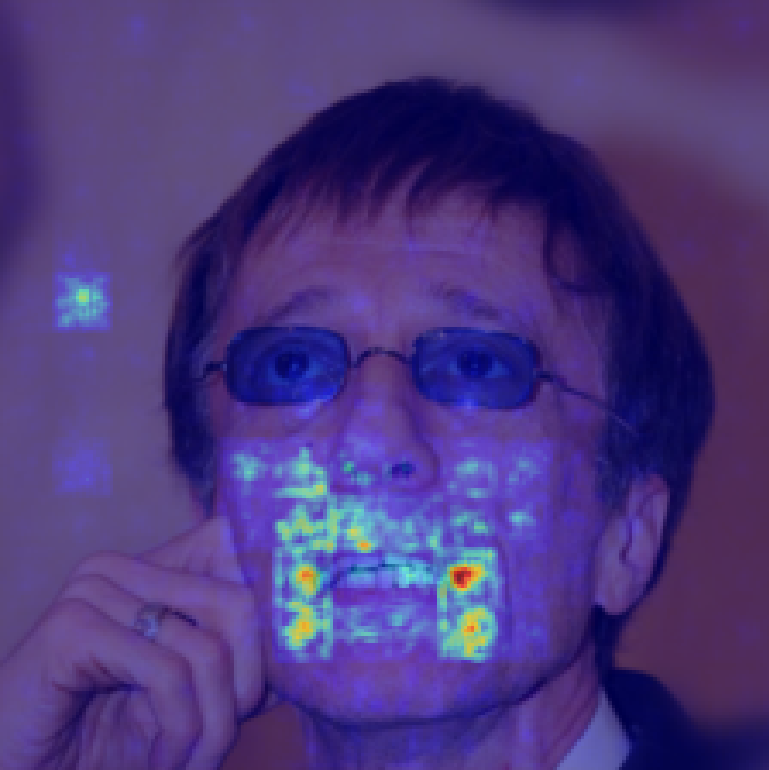}} \\[2pt]
\imgbox{\includegraphics[height=1.6cm]{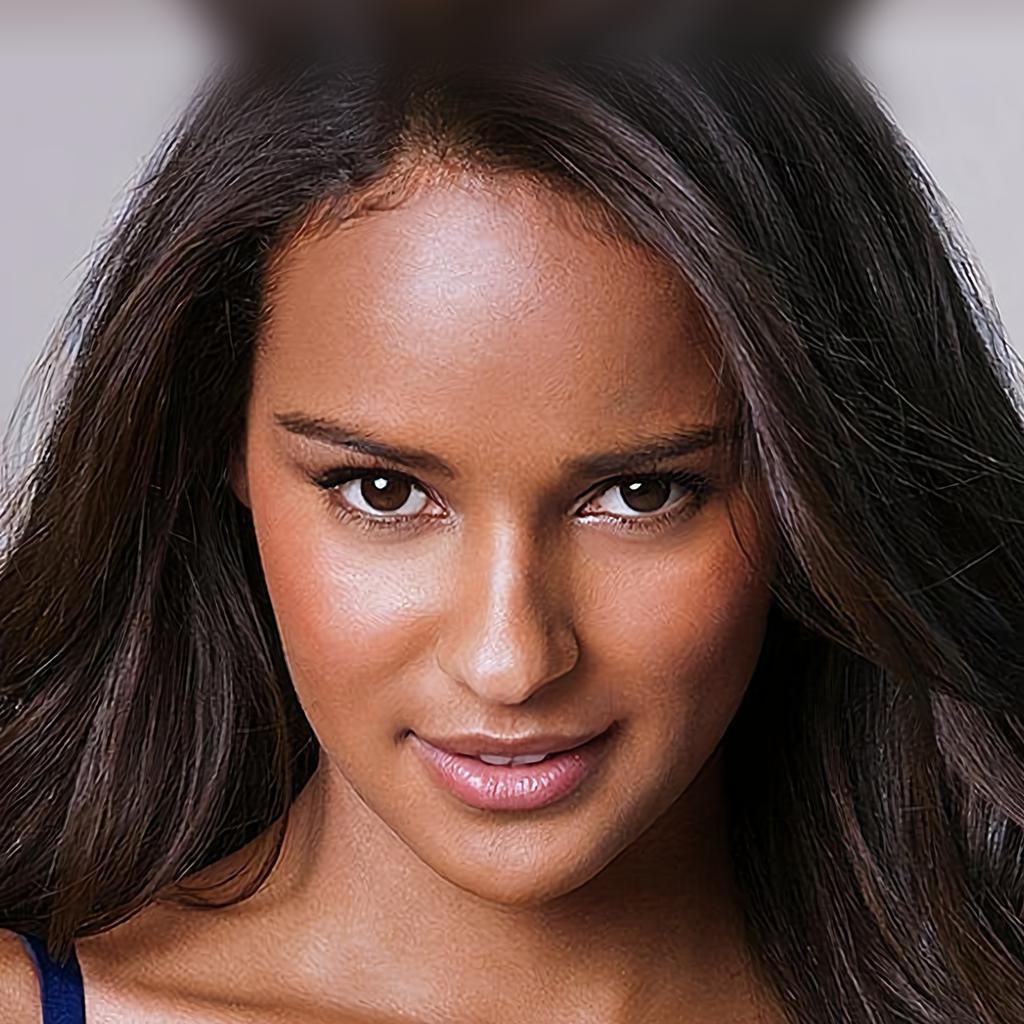}} &
\imgbox{\includegraphics[height=1.6cm]{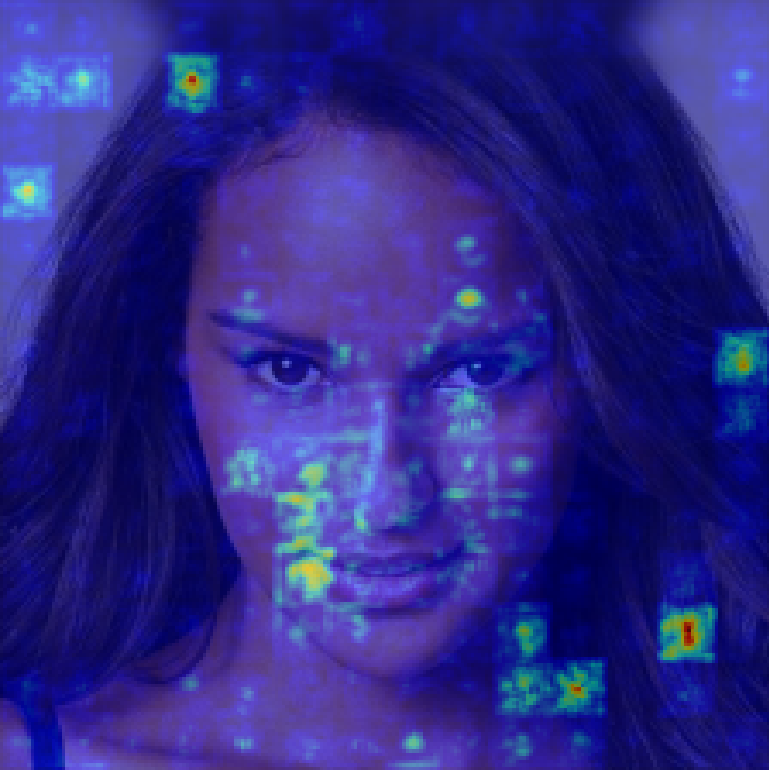}} &
\imgbox{\includegraphics[height=1.6cm]{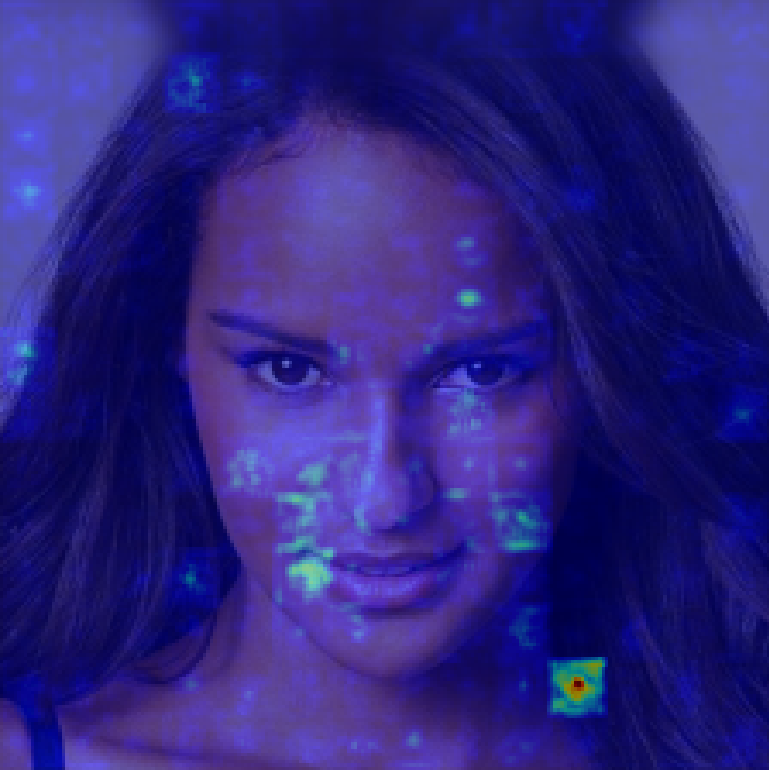}} &
\imgbox{\includegraphics[height=1.6cm]{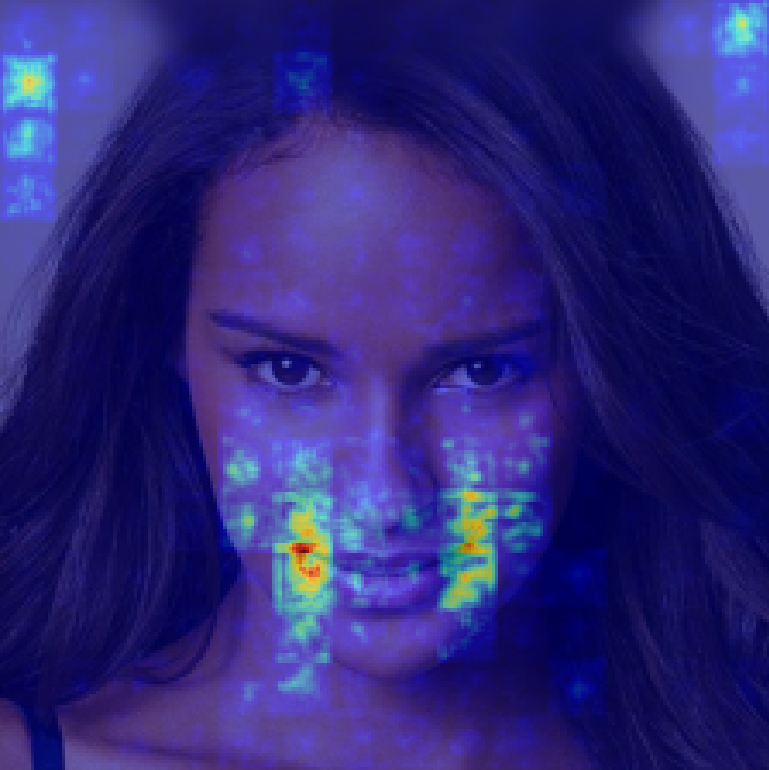}} &
\imgbox{\includegraphics[height=1.6cm]{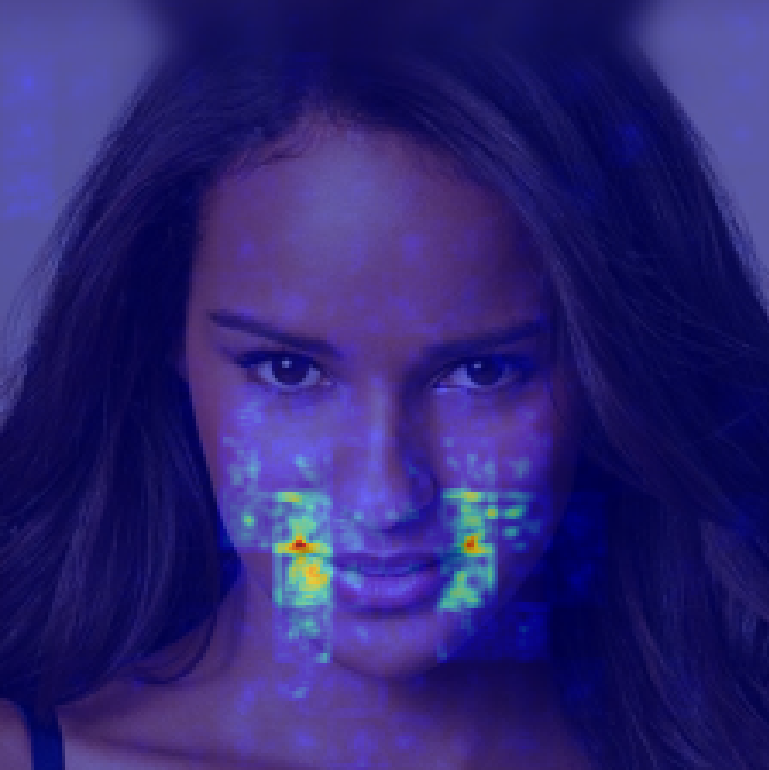}} &
\imgbox{\includegraphics[height=1.6cm]{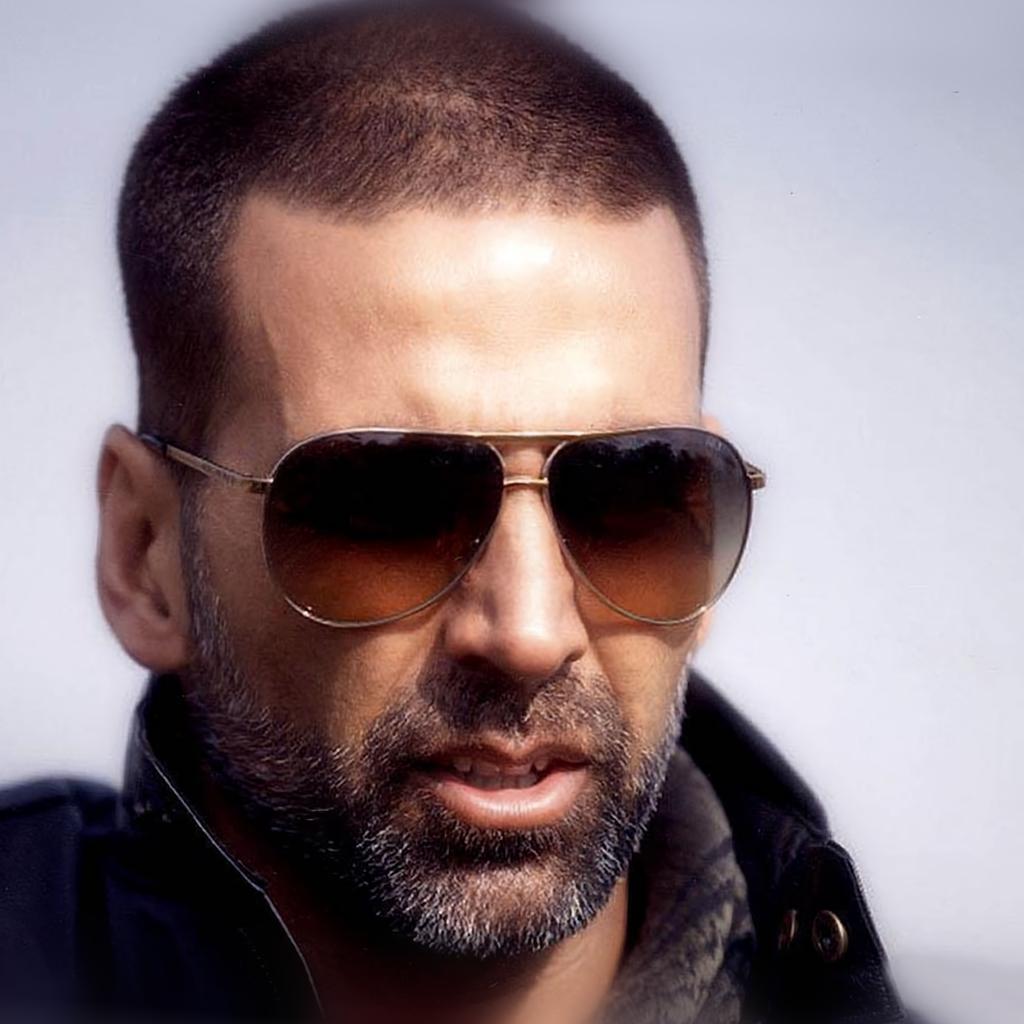}} &
\imgbox{\includegraphics[height=1.6cm]{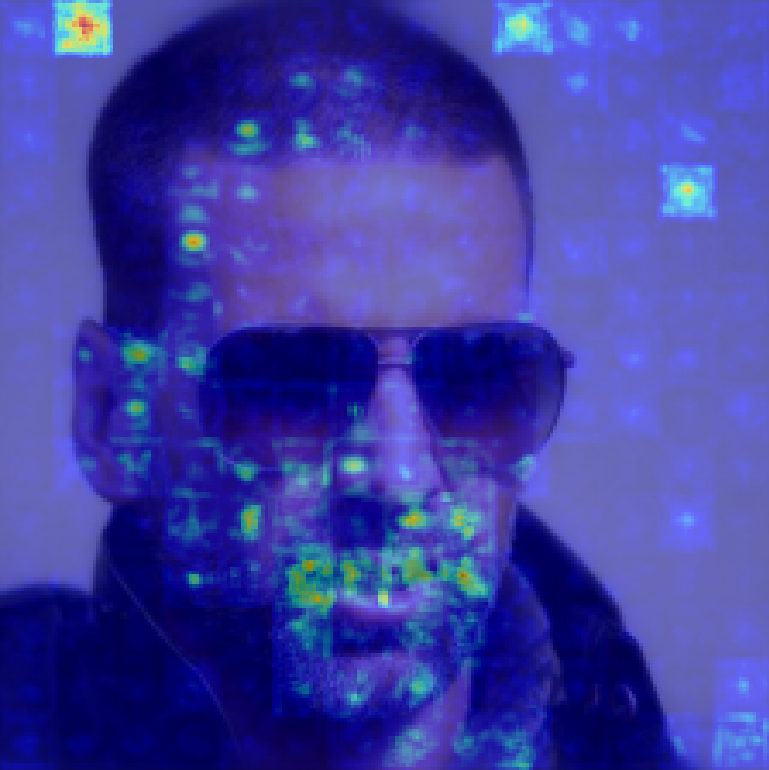}} &
\imgbox{\includegraphics[height=1.6cm]{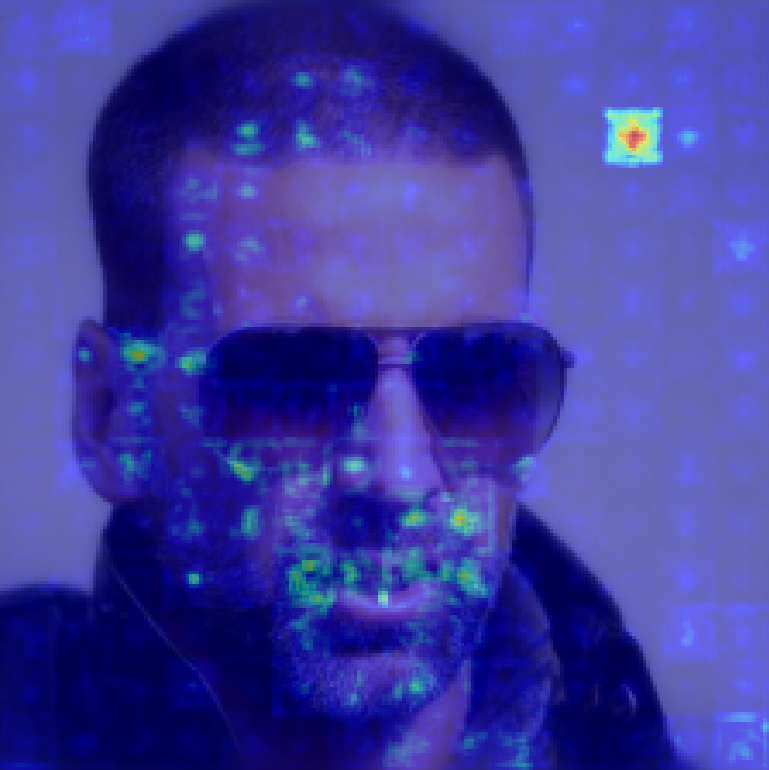}} &
\imgbox{\includegraphics[height=1.6cm]{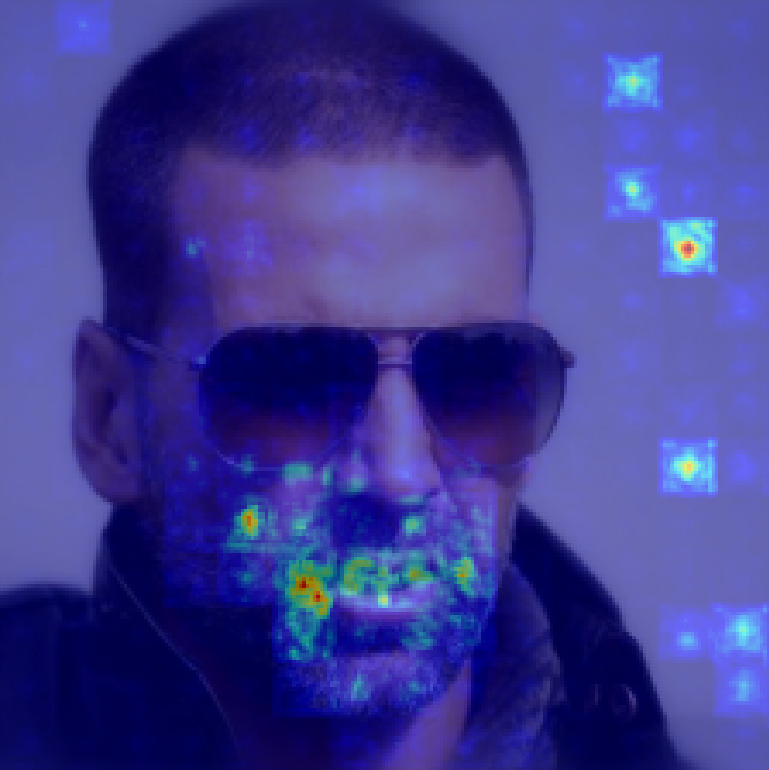}} &
\imgbox{\includegraphics[height=1.6cm]{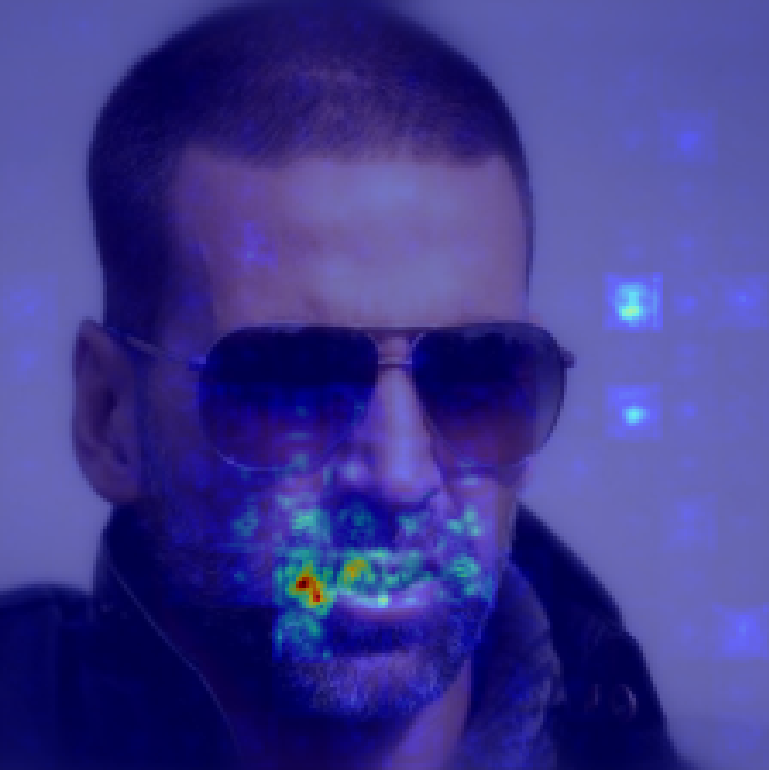}} \\[2pt]
\imgbox{\includegraphics[height=1.6cm]{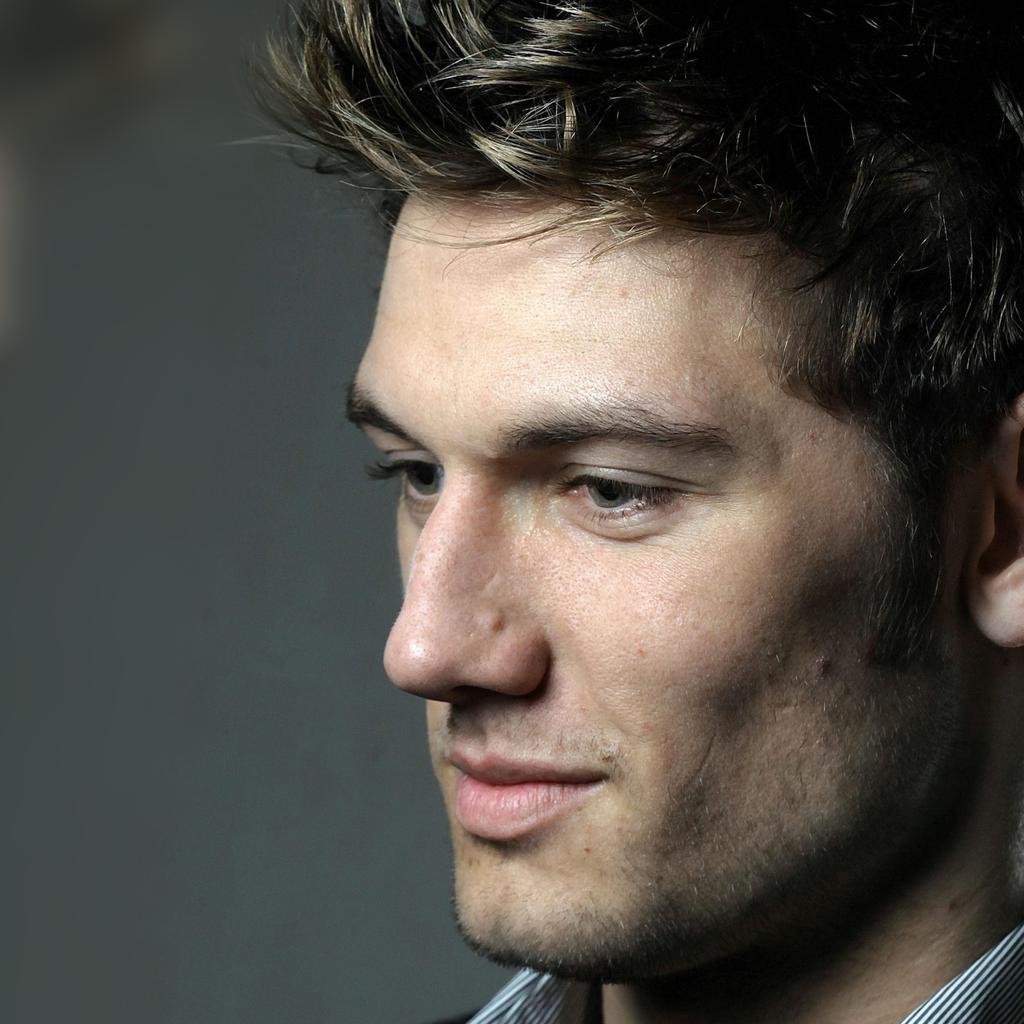}} &
\imgbox{\includegraphics[height=1.6cm]{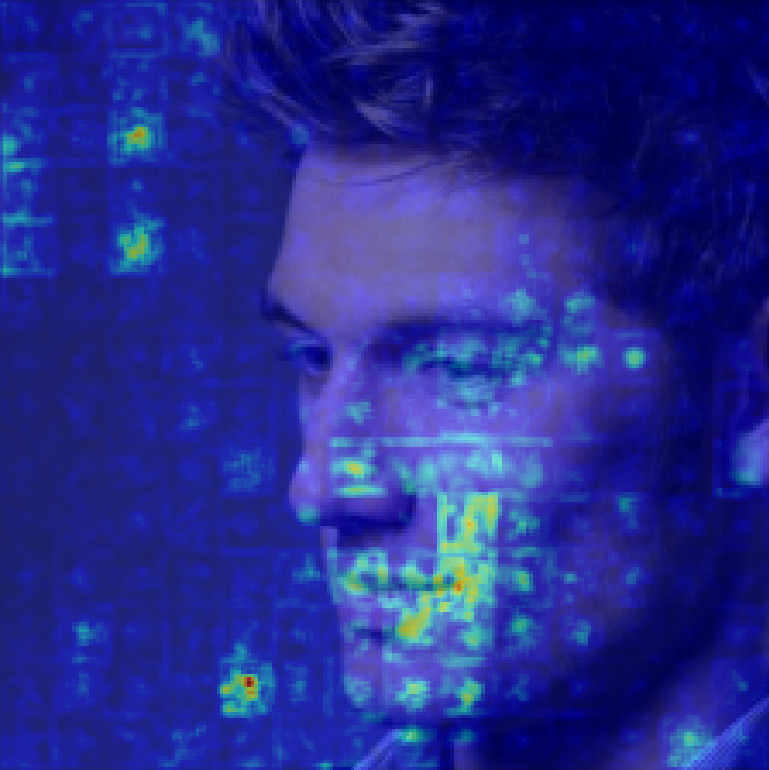}} &
\imgbox{\includegraphics[height=1.6cm]{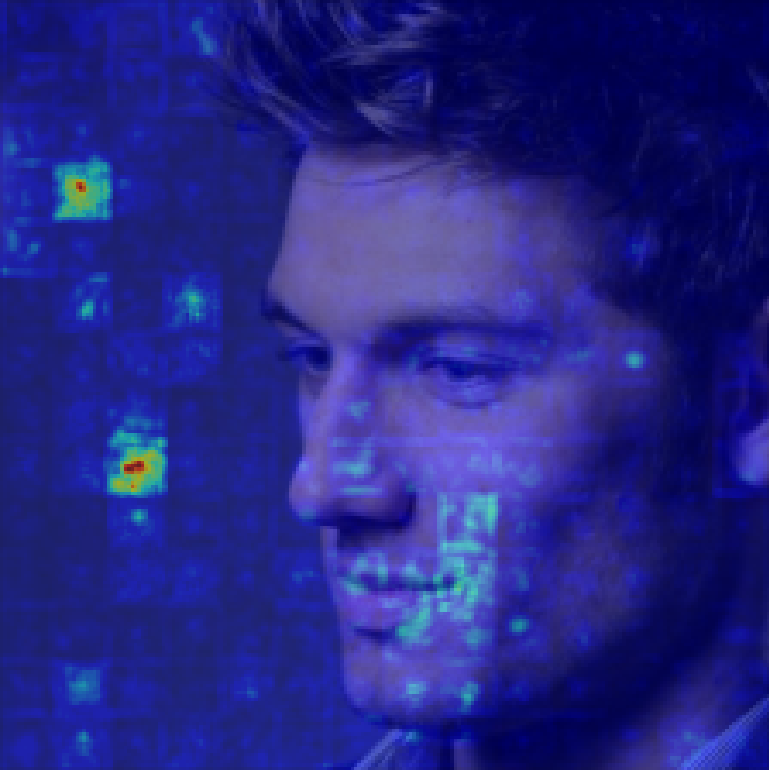}} &
\imgbox{\includegraphics[height=1.6cm]{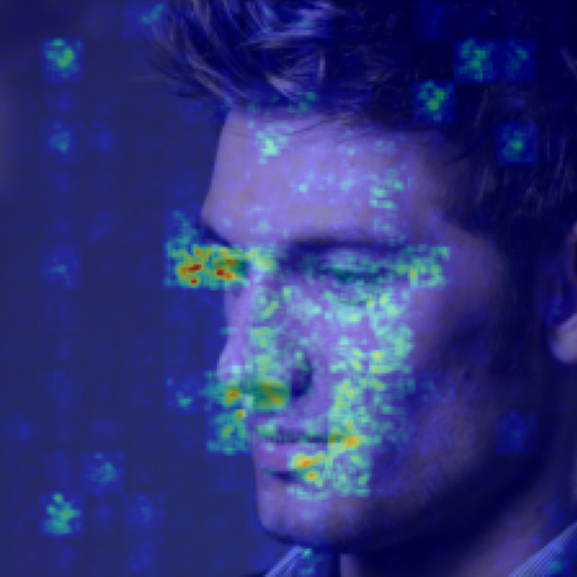}} &
\imgbox{\includegraphics[height=1.6cm]{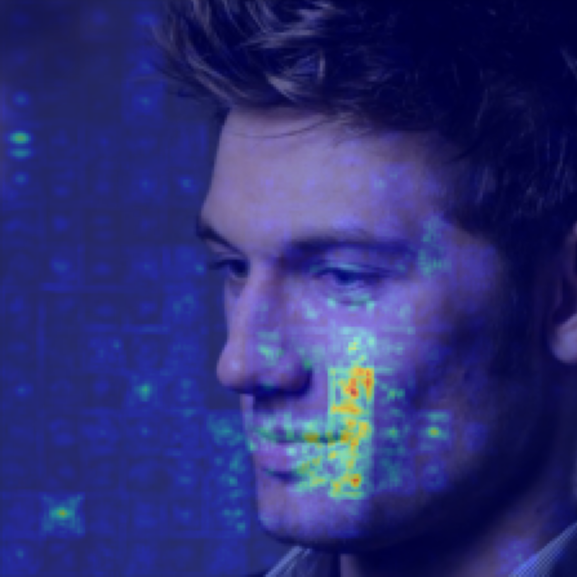}} &
\imgbox{\includegraphics[height=1.6cm]{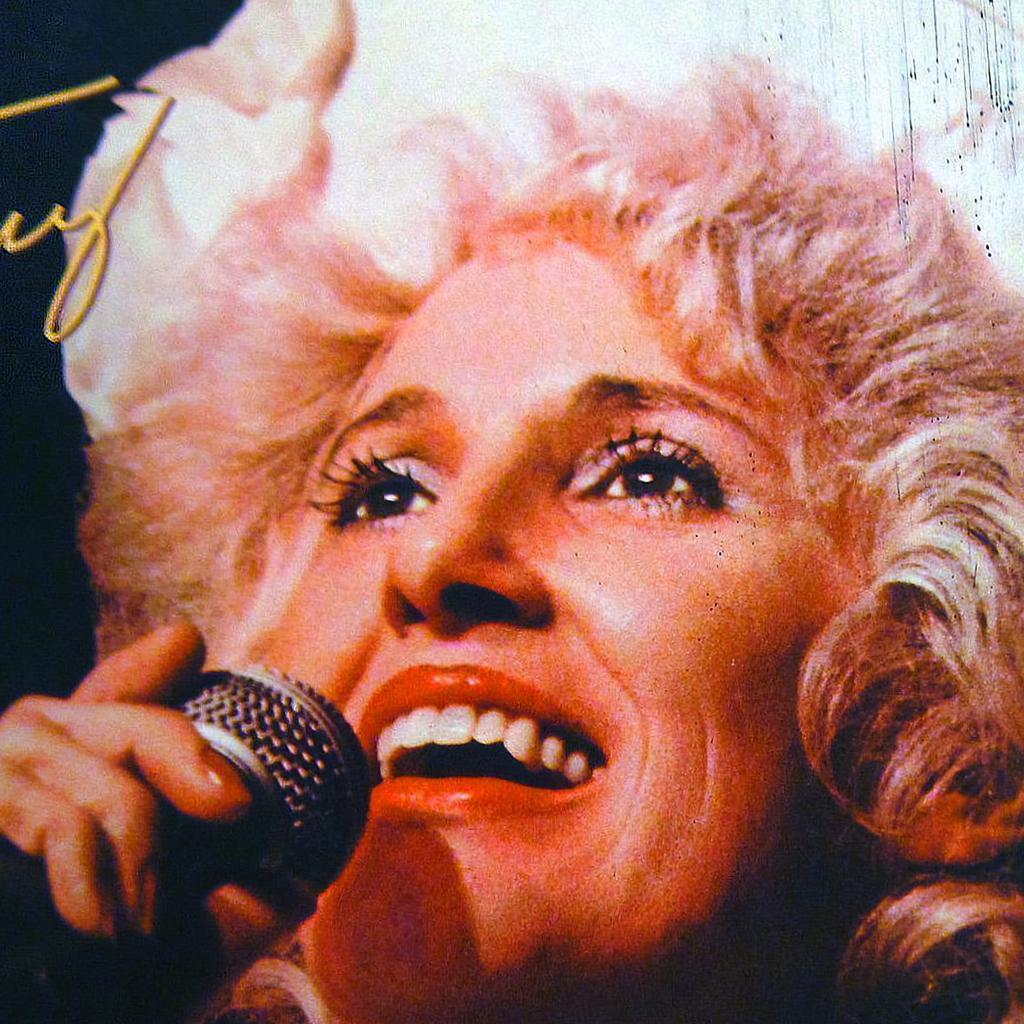}} &
\imgbox{\includegraphics[height=1.6cm]{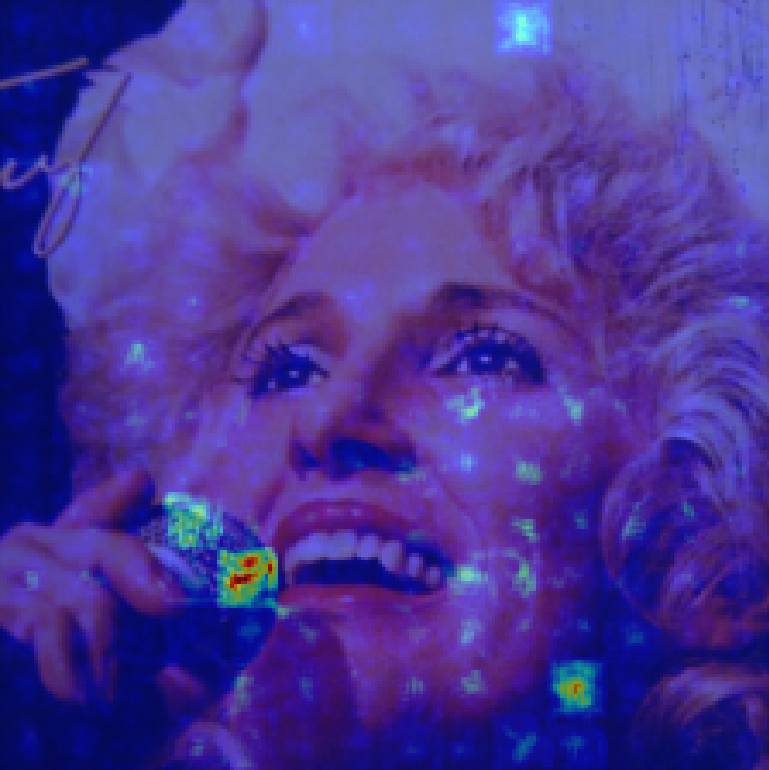}} &
\imgbox{\includegraphics[height=1.6cm]{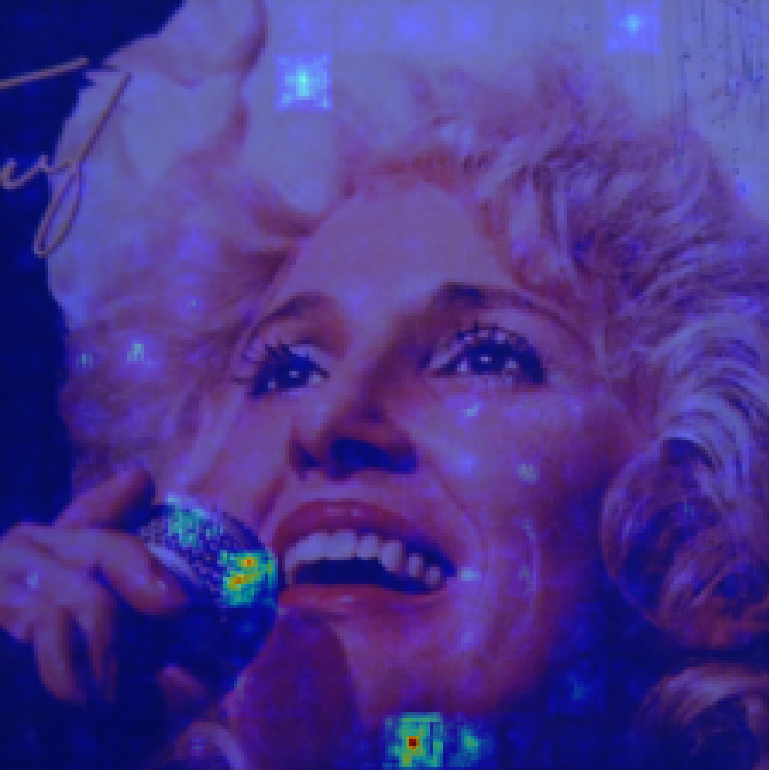}} &
\imgbox{\includegraphics[height=1.6cm]{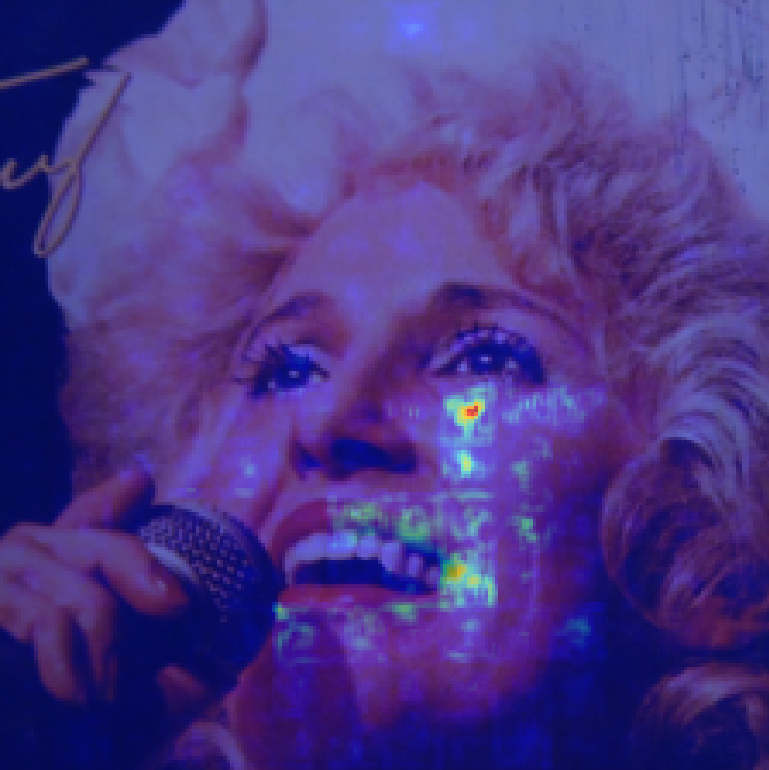}} &
\imgbox{\includegraphics[height=1.6cm]{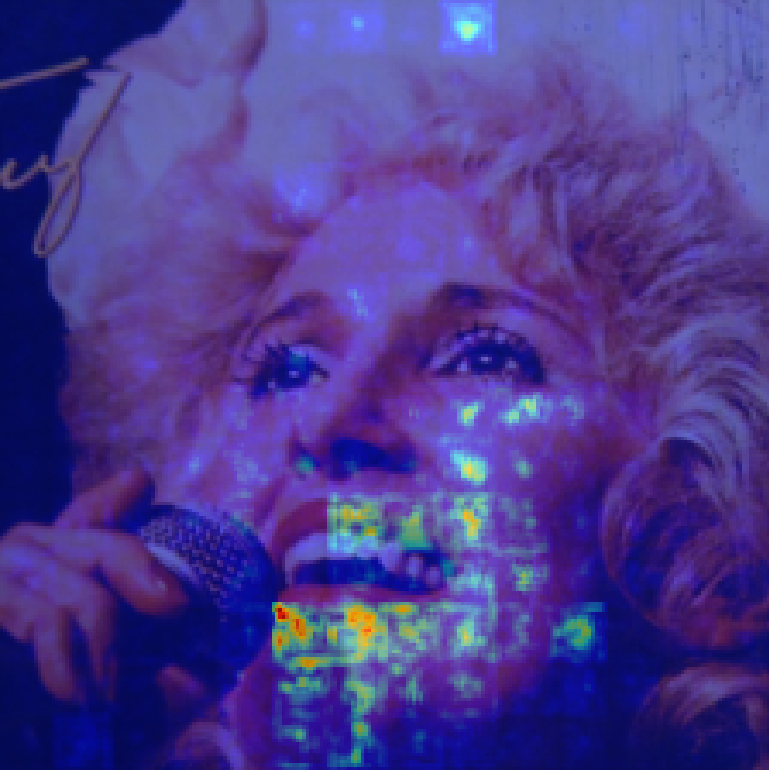}} \\[2pt]
\end{tabular}
}

\caption{Additional examples: Gradient‐based saliency map for the \textit{expression (smiling)} as main task and \textit{gender (male)} as {sensitive attribute}. Warmer regions indicate stronger contribution to the output logit. FairNVT primarily attends to expression‐relevant areas (mouth/cheeks), demonstrating reduced reliance on gender‐correlated cues. }
\label{fig:celeba_heatmaps_supp}
\end{figure*}

\subsection{Text-based Classification Experiments and Results}
\label{appendix:text_experiments}
\noindent \textbf{Text-based task.}
We additionally evaluate on \textbf{text-based classification tasks} with Bert-Base as the frozen backbone to compare with baselines \cite{kumar-etal-2023-parameter, ravfogel-etal-2020-null}. The dataset {BIOS}~\cite{de2019bias} consists of professional biographies with occupation labels, and the task is to predict occupation while evaluating fairness with respect to perceived gender.

\noindent \textbf{Multi-class fairness loss.}
Since the task (\textit{Profession}) in BIOS is a multi-class attribute, we extend the fairness loss in Eq.~\ref{eq:loss_dp} to aggregate disparity with respect to the sensitive attribute across all task classes: let $n_0, n_1$ be the number of samples in a batch belonging to sensitive group $0, 1$ respectively, $p_{k}=p_{\theta}(\hat{y}=k|x)$ be the probability of predicting class $k$ of label $y$, and $\mathbf{1}[\cdot]$ be the indicator function then,
\begin{equation*}
\begin{aligned}
\label{eq:loss_dp_multi}
    L_{\textnormal{dp}}^{\text{multi}}(\theta)=\sum_{k=1}^{c}\Bigl\lvert\frac{1}{n_0}\sum_{i=1}^{n_0}\mathbf{1}[s_i=0]p_{i,k} - \frac{1}{n_1}\sum_{j=1}^{n_1}\mathbf{1}[s_j=1]p_{j,k}\Bigr\rvert.
\end{aligned}
\end{equation*}
The overall loss remains the same form except substituting in the multi-class fairness loss, $L=L_{\textnormal{ce}}^{\textnormal{t}} + \beta_1 L_{\textnormal{ce}}^{\textnormal{s}} + \beta_2 L_{\textnormal{orth}} + \beta_3 L_{\textnormal{dp}}^{\text{multi}}$,
where $\beta$s are hyperparameters representing weights on each loss.

\paragraph{Text-based fair classification baselines.}
We include the Vanilla setup and five baselines:
\begin{itemize}
    \item \textbf{Vanilla-BERT}~\cite{devlin2019bert}: Standard fine-tuning without fairness intervention.
    
    \item \textbf{FT-Debias}~\cite{kumar-etal-2023-parameter}: Fine-tuning with adversarial debiasing objectives.
    \item \textbf{INLP}~\cite{ravfogel-etal-2020-null}: Iteratively trains linear probes on the sensitive attribute and projects embeddings to remove the corresponding subspaces.
    
    \item \textbf{SUP}~\cite{shi-etal-2024-debiasing}: Projection-based concept removal that preserves task-relevant features while suppressing sensitive directions.
    
    \item \textbf{ConGater}~\cite{masoudian-etal-2024-effective}: Group-aware contrastive training to disentangle task and sensitive representations.
    
    \item \textbf{DAM}~\cite{kumar-etal-2023-parameter}: Parameter-efficient debiasing using adapter fusion to reduce demographic leakage.
\end{itemize}

\paragraph{Comparison on BIOS.}
Table~\ref{tab:bios_binary} reports results on BIOS~\cite{de2019bias}, where the task is multi-class \textit{Profession}\footnote{EO and EOpp are defined to condition on the true task label and takes the difference between values conditioned on opposite binary task labels. We report DP in multi-class settings since DP does not condition on the true task label and remains applicable with its original form of definition.} and the sensitive attribute is \textit{Gender}. We report the mean results and its standard deviation over 3 runs. Overall, FairNVT offers a favorable fairness–utility trade-off, pairing competitive accuracy with state-of-the-art DP and sensitive-attribute leakage close to the best baseline.
Figure~\ref{fig:bios_binary} compares logits for original and gender-swapped sentences, where pronouns are replaced with those of the opposite gender. FairNVT produces more similar distributions between the two, indicating reduced sensitivity to gender. Additional details, illustrative examples, and predicted scores are provided in the supplementary materials.
These results highlight that FairNVT can effectively extend from the vision domain to textual embeddings. 

\begin{figure}[htb]
\centering
\begin{minipage}[htb]{0.42\columnwidth}
\centering
\captionof{table}{\textbf{Text-Based Classification task:} Comparing our method with baselines on Bios~\cite{de2019bias} dataset, Task: Profession (Multi-Class), Sensitive Attribute: Gender. All reported values are scaled by $\times 10^2$.}
\setlength{\tabcolsep}{6pt}
\label{tab:bios_binary}
\renewcommand{\arraystretch}{1.3}
\resizebox{\columnwidth}{!}{
\begin{tabular}{ccc}
\hline
\textbf{Method} & \textbf{Acc}($\uparrow$)  & \textbf{DP}($\downarrow$) \\ \hline
\textbf{Vanilla-BERT}     & 72.8\std{0.2} & 2.0\std{0.2} \\
\textbf{+FT-Debias}       & 76.8\std{2.4} & 2.1\std{0.2}  \\
\textbf{+INLP}            & 76.4\std{0.1} & \underline{1.7\std{0.1}}  \\
\textbf{+SUP}             & 77.2\std{0.2} & 2.1\std{0.5}  \\
\textbf{+DAM}             & 80.3\std{0.4} & 2.2\std{0.5}  \\
\textbf{+CONGATER}        & \bf{82.4\std{0.5}} & 1.9\std{0.3}  \\
\textbf{+FairNVT(Ours)}   & \underline{80.6\std{0.4}} & \bf{1.6\std{0.1}} \\
\hline
\end{tabular}%
}
\end{minipage}
\hfill
\begin{minipage}[htb]{0.48\columnwidth}
\centering
\captionof{figure}{
\textbf{Robustness to gender-indicator swapping on BIOS.}
We plot the distribution of the model’s confidence in predicting profession for the \textcolor{softblue}{original} text and its \textcolor{warmorange}{gender-swapped} counterpart for 100 random samples. FairNVT (right) exhibits more overlapping distributions than Vanilla (left) in more confident predictions.
}
\label{fig:bios_binary}
    \includegraphics[width=0.88\textwidth]{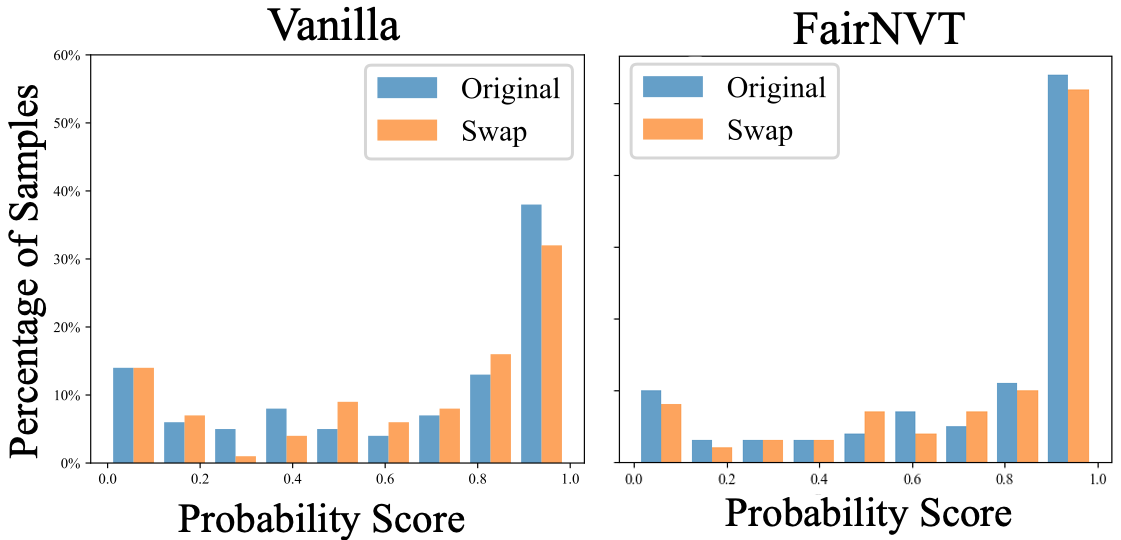}
\end{minipage}
\end{figure}

\paragraph{Qualitative results on BIOS.}
We evaluate fairness by comparing predictions on pairs of sentences that are identical except for words that indicate gender. Table~\ref{tab:appendix_bios_qualitative} summarizes how the predicted profession probabilities change under these minimal substitutions. The vanilla model shows substantial shifts, whereas FairNVT produces more stable predictions across sentences that differ only in gender-indicative terms.

\section{More Ablation Results}
\label{appendix:more_ablation}

If not specified specifically, the ablation studies is performed on task attribute: Expression (Smiling), sensitive attribute: Gender (Male) task using the CelebA dataset.

\paragraph{Sensitivity to batch size.}
The proposed fairness loss estimates group-level statistics within each mini-batch and may therefore become less reliable when the batch size is small, particularly when the sensitive attribute is imbalanced. Smaller batches also increase the variance of the optimization process more generally. This section evaluates the sensitivity of FairNVT to the training batch size. Table~\ref{tab:ablation_batch_size_appendix} compares the performance of FairNVT using batch sizes of 256, 128, and 64.

For the Gender attribute, where the two sensitive groups are relatively balanced (57\% versus 43\% of the training samples), the performance remains largely unchanged as the batch size decreases. For the Age attribute, where the group distribution is more imbalanced (77\% versus 23\%), reducing the batch size leads to a noticeable drop in task performance. Although the fairness metrics improve in this setting, the improvement is likely influenced by the degraded task accuracy, as predictions become closer to random guessing and therefore exhibit smaller group disparities.

We note that when hardware constraints limit the feasible batch size, we use techniques such as virtual batching to improve optimization stability by approximating updates with a larger effective batch size. Specifically, forward and backward passes are performed on multiple micro-batches and their gradients are accumulated. The model parameters are updated only after the accumulated gradients correspond to the desired effective batch size. While the fairness loss is still computed independently on each micro-batch and therefore rely on estimates of the sensitive-group statistics from smaller batches, gradient accumulation help reduce the variance of the overall parameter updates.

\begin{table}[htb]
\centering
\caption{\textbf{Sensitivity of batch size.} We ablate on the batch size when training on datasets with mini-batches in case the sensitive attribute is imbalanced. The Gender and Age attribute has majority/minority groups having 57\%/43\%, 77\%/23\% samples respectively.} 
\label{tab:ablation_batch_size_appendix}

\scalebox{0.8}{\renewcommand{\arraystretch}{1.2}\setlength{\tabcolsep}{6pt}
\begin{tabular}{cccccccc}
\hline
\bf{}& \bf{Batch Size}  & \textbf{Acc}($\uparrow$) & \textbf{BAcc}($\uparrow$) & \textbf{DP}($\downarrow$) & \textbf{EOpp}($\downarrow$) & \textbf{EO}($\downarrow$) & \textbf{Att.Acc}($\downarrow$) \\ \hline

\multirow{3}{*}{\shortstack{\textbf{Task: Expression (Smiling)} \\ \textbf{Sens.: Gender (Male)}}}
& 256 & 93.1 & 92.8 & 9.1 & 0.7 & 2.5 & 52.1 \\
& 128 & 93.1 & 93.0 & 9.8 & 0.2 & 1.5 & 53.3 \\
& 64  & 92.6 & 92.7 & 8.5 & 1.2 & 2.7 & 52.8 \\ \hline

\multirow{3}{*}{\shortstack{\textbf{Task: Big Nose} \\ \textbf{Sens.: Age (Young)}}}
& 256 & 83.4 & 71.5 & 22.9 & 17.3 & 11.1 & 67.6 \\
& 128 & 82.0 & 70.7 & 12.3 & 3.8 & 3.4 & 69.3 \\
& 64  & 81.5 & 71.3 & 14.8 & 4.0 & 4.1 & 68.8 \\ \hline

\end{tabular}%
}
\end{table}

\paragraph{Variation in performance from inference-time randomness.}
The standard deviations reported for the main results reflect variability from both training and inference, as the entire training and evaluation pipeline is repeated multiple times using the selected hyperparameter configuration. This section isolates the variability arising from inference-time stochasticity only. Specifically, we fix the trained model at the checkpoint obtained using the best hyperparameter configuration and repeat inference three times, where the only source of randomness is the Gaussian noise sampled during inference.

Table~\ref{tab:ablation_inference_variance_appendix} reports the resulting inference-time variability. We observe that the standard deviations are comparable to those obtained when both training and inference are repeated, indicating that FairNVT produces stable predictions when deployed with a fixed trained model. These results suggest that the stochastic noise injection at inference introduces only modest variability.

\begin{table}[htb]
\centering
\caption{\textbf{Variation in performance from inference-time randomness.} We examine the variation in performance when the training model parameters are fixed and different random noise is drawn at inference time to make predictions.} 
\label{tab:ablation_inference_variance_appendix}

\scalebox{0.8}{\renewcommand{\arraystretch}{1.2}\setlength{\tabcolsep}{6pt}
\begin{tabular}{ccccccc}
\hline
\bf{}& \textbf{Acc}($\uparrow$) & \textbf{BAcc}($\uparrow$) & \textbf{DP}($\downarrow$) & \textbf{EOpp}($\downarrow$) & \textbf{EO}($\downarrow$) &\textbf{Att.Acc}($\downarrow$) \\ \hline

\multirow{1}{*}{\shortstack{\textbf{Task: Expression (Smiling)} \textbf{Sens.: Gender (Male)}}}
& 93.1\std{0.06} & 92.9\std{0.07} & 9.4\std{0.15} & 1.1\std{0.23} & 1.5\std{0.07} & 53.6\std{0.03}\\ \hline

\multirow{1}{*}{\shortstack{\textbf{Task: Big Nose} \textbf{Sens.: Age (Young)}}}
& 83.2\std{0.12} & 70.8\std{0.31} & 15.8\std{0.26} & 8.3\std{0.43} & 7.7\std{0.25}& 68.5\std{0.03}\\ \hline

\end{tabular}%
}
\end{table}

\paragraph{Analysis of noise strength ($\sigma$).}
We analyze how the noise strength $\sigma$ influences both representation and prediction level fairness. As shown in Table~\ref{tab:ablation_single_var}, moderate noise substantially lowers attacker accuracy, indicating that the injected perturbation effectively hides sensitive information without disturbing the task signal. When the noise becomes very large, the model shows improvements in several fairness metrics (DP, EO, Att.Acc) but shows a decline in accuracy. In practice, a moderate noise level provides a stable trade-off between privacy and utility. 

\begin{table}[htb]
\centering
\caption{\textbf{Sensitivity to noise.} We ablate on noise levels for \textit{expression (smiling)} as main task and \textit{gender (male)} as sensitive attribute.
Moderate noise levels balance utility (Acc/BAcc), fairness gaps (DP/EOpp/EO), and sensitive information leakage (Att.~Acc). Very large noise further improves most fairness metrics but begins to slightly reduce accuracy, reflecting a utility-fairness trade-off at higher noise levels.} 
\label{tab:ablation_single_var}
\scalebox{0.9}{
\begin{tabular}{c|cccccc}
\toprule
\bf{Noise Level} ($\sigma$)  & \textbf{Acc}($\uparrow$) & \textbf{BAcc}($\uparrow$) & \textbf{DP}($\downarrow$) & \textbf{EOpp}($\downarrow$) & \textbf{EO}($\downarrow$) & \textbf{Att.Acc}($\downarrow$) \\ \midrule
1   & 93.0 & 93.1 & 9.4 & 1.0 & {2.0} & 67.4 \\
5 &  93.1 & {92.8} & 9.1  & 0.7 & {2.5} & {51.7} \\
100 & 91.0 & 91.2 & 9.2 & {0.9} & 1.1 & 50.5 \\ \bottomrule
\end{tabular}%
}
\end{table}

\paragraph{Comparing different task classifier inputs.}
Table \ref{tab:ablation_single_var_3} shows the effect on accuracy and fairness metrics when the task classifier input changes. These results are trained with the same loss $L$ as in Section \ref{sec:method}, except nullifying the orthogonality loss when the sensitive embedding ($e^s$) is not present. 
When using the backbone frozen embedding $h$ directly (row 1) or concatenating task ($e_t$) and sensitive embedding ($e_s$) without noise (row 2), it is more difficult to obtain fair outcomes when the sensitive information is not obfuscated, indicated by higher DP, EOpp, EO and Att.Acc values. Naively adding noise to $h$ (row 3) could achieve better fairness outcomes but hurting the task performance with 3.8\% decrease in accuracy. This indicate that separating into dedicated task and sensitive pathways is preliminary and favorable for adding noise perturbation. 
We additionally test on alternative ways of fusing $e_s, e_t$. Aligning noise $z$ with $e_s$ before concatenating with $e_t$ (row 4) improves task accuracy and achieves competitive DP, EOpp and EO, but leaks sensitive information as attacker accuracy increases. Fusing task and sensitive embedding with self-attention (row 5) preserves more sensitive information thus slightly hurting the fairness outcomes. Overall, simple concatenation of noisy $e_s$ with $e_t$ achieves the balance between accurate task prediction, fair outcomes and reducing sensitive information leakage.

\begin{table}[htb]
\centering
\small
\caption{
\textbf{Effect of noise and projection choices on fairness and utility.}
We assess variants of task classification head inputs constructed from frozen backbone output without having  any adapter $h$, sensitive embedding $e_s$, task embedding $e_t$, and injected noise $z$.
The comparison highlights how noise injection, projection, and attention choices influence task performance and fairness. All reported values are scaled by $\times 10^2$ and show performance from a single run with the same seed. Task: Expression (Smiling); Sensitive attribute: Gender (Male).
}
\label{tab:ablation_single_var_3}

\scalebox{0.9}{
\begin{tabular}{ccccccc}
\toprule
\textbf{$\textit{Task}^{\textit{clf}}$ Inputs} & \textbf{Acc}($\uparrow$) & \textbf{BAcc}($\uparrow$) & \textbf{DP}($\downarrow$) & \textbf{EOpp}($\downarrow$) & \textbf{EO}($\downarrow$) & \textbf{Att.Acc}($\downarrow$) \\ \hline

 $[h]$
& 90.2 & 90.1 & 10.5 & 1.5 & 2.3 & 98.8 \\
 $[e_s,\, e_t]$
& 92.9 & 92.9 & {10.1} & 2.4 & 3.0  & 98.5 \\

 $[z,\, h]$
& 86.4 & 86.5 & 7.0 & 1.4 & 4.1 & 89.5 \\

$[ \frac{\langle z^i,\, e^i_{\,s} \rangle}{\lVert e^i_{\,s} \rVert}\, e^i_{\,s},e_t]$  
& 92.2 & 93.2 & 9.8 & 0.8 & 2.1 & 98.8 \\

Attn $(e_s+z,\, e_t)$
& 91.9 & 91.7 & 10.2 & 1.3 & 2.8 & 54.5\\

 {FairNVT} & 93.1 & 92.8 & 9.1  & 0.7 & 2.5 & 52.1 \\

\bottomrule
\end{tabular}}
\vspace{-0.3cm}
\end{table}

\paragraph{Effect of ablating model components.}

Table~\ref{tab:ablation_two_var_appendix} presents ablation results for different model components. 
The results are consistent with the main findings in Section~\ref{sec:exp}:
the DP loss primarily drives fairness improvements, noise injection reduces sensitive-attribute leakage, 
and the orthogonality loss further enhances fairness with minimal impact on task performance. 
Model components also exhibit interacting effects; in particular, combining DP loss with noise injection 
further decreases DP, EOpp, and EO scores, indicating that enhancing representation-level fairness 
can align with improvements in prediction-level fairness.

\begin{table}[htb]
\centering
\caption{
{\bf Ablation of FairNVT components on CelebA.} We toggle Demographic Parity loss (DP), Orthogonality loss (Orth), and Noise injection (Noise). \cmark and \xmark means the component is present and absent respectively.
All reported values are scaled by $\times 10^2$ and show performance from a single run with the same seed.
}
\scalebox{0.70}{\renewcommand{\arraystretch}{1.2}\setlength{\tabcolsep}{6pt}
\begin{tabular}{c|ccc|cccccc}
\hline
 & Fair Loss&Orth Loss&Noise  & \textbf{Acc($\uparrow$)} & \textbf{BAcc ($\uparrow$)} & \textbf{DP ($\downarrow$)} & \textbf{EOpp($\downarrow$)} & \textbf{EO($\downarrow$)} & \textbf{Att Acc($\downarrow$)} \\ \hline

\multirow{8}{*}{\begin{tabular}{c}\textbf{Task: Expression (Smiling)} \\  \\ \textbf{Sens.: Gender (Male)}\end{tabular}}
&\xmark& \xmark& \xmark & 93.1 & 92.9 & 14.1 & 3.8 & 3.8& 98.9 \\
&\xmark& \cmark &\cmark & 93.2 & 92.8 & 14.5 & 4.8 & 4.8 & 52.9 \\
&\cmark& \xmark &\cmark & 92.8 & 92.7 & 9.8 & 0.2 & 3.1 & 53.0 \\
&\cmark& \cmark &\xmark & 93.0 & 93.1 & 9.3 & 0.4 & 3.7 & 99.0 \\
&\cmark& \xmark &\xmark  &92.7 & 92.7 &8.6 & 0.8 & 4.4 & 98.9 \\
&\xmark& \cmark &\xmark  & 93.4 & 93.1& 14.3 & 4.0& 4.0 & 99.0 \\
&\xmark& \xmark &\cmark  & 93.0 & 92.7& 14.8 & 4.8& 4.8 & 54.2 \\
&\cmark& \cmark &\cmark & 93.1 & 92.8 & 9.1 & 0.7 & 2.5 & 52.1 \\ \hline
\end{tabular}
}
\label{tab:ablation_two_var_appendix}
\end{table}

\paragraph{Sensitivity of loss weight coefficients.} 
We analyze the effect of loss weight coefficients in Table~\ref{tab:ablation_loss_weight_appendix}, using the task and sensitive attributes \emph{expression (smiling)} and \emph{gender (male)}, respectively. 
A moderate orthogonality loss weight consistently achieves the best balance between task accuracy and fairness metrics, indicating that this setting effectively disentangles task and sensitive embeddings without degrading representation quality. 
Increasing the DP loss weight improves prediction-level fairness, particularly for demographic parity difference, which it directly optimizes, though with a gradual trade-off in task performance. 
Because EO and EOpp condition on specific label groups, they are naturally more sensitive to small prediction variations, yet we observe stable improvements at moderate DP weights. 
Overall, these trends highlight that the loss weights control the fairness–utility balance in a predictable manner, and tuning them allows FairNVT to adapt robustly across datasets and attribute combinations.

\begin{table}[htb]
\centering
\caption{\textbf{Sensitivity of loss weight coefficients.} We evaluate the performance of FairNVT when the loss weight coefficients changes. All reported values are scaled by $\times 10^2$ and show performance from a single run with the same seed. Task: Expression (Smiling); Sensitive attribute: Gender (Male).} 
\label{tab:ablation_loss_weight_appendix}

\scalebox{0.8}{\renewcommand{\arraystretch}{1.2}\setlength{\tabcolsep}{6pt}
\begin{tabular}{cccccccc}
\hline
\bf{}& \bf{Level}  & \textbf{Acc}($\uparrow$) & \textbf{BAcc}($\uparrow$) & \textbf{DP}($\downarrow$) & \textbf{EOpp}($\downarrow$) & \textbf{EO}($\downarrow$) & \textbf{Att.Acc}($\downarrow$) \\ \hline

\multirow{4}{*}{\shortstack{\textbf{Orth} \textbf{Loss}}}
& 0 & 92.8 & 92.7 & 9.9 & 0.2 & 3.1 & 53.0  \\
& 0.01 & 92.8 & 92.8 & 10.2 & 0.4 & 2.5 & 53.2 \\
& 0.1  & 93.1 & 92.8 & 9.1  & 0.7 & 2.5 & 52.1 \\
& 1.0  & 92.8 & 92.7 & 10.4 & 0.7 & 2.4 & 52.1 \\ \hline

\multirow{4}{*}{\shortstack{\textbf{DP} \textbf{Loss}}}
& 0 & 93.2 & 92.8 & 14.5 & 4.8 & 4.8 & 52.9  \\
& 0.01 & 93.0 & 92.7 & 14.3 & 4.2 & 4.2 & 53.1 \\
& 0.3  & 93.1 & 92.8 & 9.1  & 0.7 & 2.5 & 52.1 \\
& 1.0  & 92.1 & 92.3 & 5.7  & 3.5 & 6.1 & 53.4 \\ \hline

\end{tabular}%
}
\end{table}

\paragraph{Sensitivity of embedding clipping.}

As discussed in Section~\ref{sec:exp}, we clip the embeddings to an upper bound $C$ before adding noise, which helps control the obfuscation of sensitive information. 
We analyze the sensitivity of the model to different values of the clipping threshold $C$. 
Changing $C$ under a fixed noise multiplier ($\sigma$) has a combined effect: it alters the embedding magnitude while also changing the effective noise level, since the noise variance $\sigma^2 C^2$ scales with $C$ (Table~\ref{tab:ablation_clipping_appendix}, rows~1-3). 
In this setting, smaller $C$ values degrade representation-level fairness, as reflected by higher attacker accuracies. 
When controlling for noise variance (Table~\ref{tab:ablation_clipping_appendix}, rows~4-6), we observe that varying $C$ produces no significant change in either task accuracy or fairness metrics, suggesting that the clipping operation itself has limited influence once the noise scale is fixed.

\begin{table}[htb]
\centering
\caption{\textbf{Sensitivity of embedding clipping threshold.} We evaluate the performance of FairNVT when the clipping threshold changes. All reported values are scaled by $\times 10^2$ and show performance from a single run with the same seed. Task: Expression (Smiling); Sensitive attribute: Gender (Male).} 
\label{tab:ablation_clipping_appendix}

\scalebox{0.8}{\renewcommand{\arraystretch}{1.2}\setlength{\tabcolsep}{6pt}
\begin{tabular}{cccccccc}
\hline
\bf{}& \bf{Level}  & \textbf{Acc}($\uparrow$) & \textbf{BAcc}($\uparrow$) & \textbf{DP}($\downarrow$) & \textbf{EOpp}($\downarrow$) & \textbf{EO}($\downarrow$) & \textbf{Att.Acc}($\downarrow$) \\ \hline

\multirow{3}{*}{\shortstack{\textbf{Clip Threshold} \\ (with same noise multiplier)}} & 1 & 93.1 & 93.1 & 9.7 & 0.4 & 2.9 & 89.0 \\
& 10 & 93.1 & 92.8 & 9.1 & 0.7 & 2.5 & 52.1 \\
& 100  & 92.3 & 92.2& 10.6 & 0.9 & 1.9 & 53.1 \\ \hline

\multirow{3}{*}{\shortstack{\textbf{Clip Threshold} \\ (with same noise amount)}} & 1 & 92.8 & 92.8 & 9.8 & 0.2 & 2.9 & 52.6 \\
& 10 & 93.1 & 92.8 & 9.1 & 0.7 & 2.5 & 52.1 \\
& 100  & 92.8 & 92.8& 9.7 & 0.3 & 2.7 & 52.2 \\ \hline

\end{tabular}%
}
\end{table}

\begin{table*}[htb]
\centering
\caption{\textbf{Image-Based Classification task:} Comparing our method with baselines on CelebA~\citep{celaba} dataset. All reported values are scaled by $\times 10^2$. We report the mean and standard deviation over 3 runs.}
\label{tab:appendix_results_celeba}

\begin{subtable}[t]{0.48\textwidth}
\centering
\resizebox{\textwidth}{!}{%
\setlength{\tabcolsep}{6pt}
\renewcommand{\arraystretch}{1.1}
\begin{tabular}{ccccccc}
\hline
\textbf{Method} & \textbf{Acc($\uparrow$)} & \textbf{BAcc($\uparrow$)} & \textbf{DP($\downarrow$)} & \textbf{EOpp($\downarrow$)} & \textbf{EO($\downarrow$)}  \\ \hline
\textbf{Vanilla} & 89.9 \std{0.1} & 89.4\std{0.3} & 10.0\std{0.6} & 2.1\std{0.3} & 6.1\std{1.2} \\
\textbf{ViT-FSCL} & 88.7 \std{0.1} & 88.0\std{0.1} & 7.4\std{0.8} & \bf{0.7\std{0.2}}& 2.5\std{0.1}  \\
\textbf{FairViT} & \underline{92.5\std{0.2}} & \underline{91.9\std{0.2}} & 5.6\std{0.3} & 1.8\std{0.9} & 2.3\std{0.2} \\
\textbf{FairVPT}       & 91.9\std{0.2} & 91.4\std{0.2} & \bf{1.7\std{1.0}} & \underline{1.7\std{1.0}} & \bf{2.0\std{0.3}} \\
\textbf{FairNet}       & 92.1\std{0.3} & 91.6\std{0.3} & \underline{1.7\std{2.0}} & 1.8\std{1.0}& 2.1\std{0.3} \\
\textbf{FairNVT(Ours)} & \bf{92.8\std{0.1}} & \bf{92.1\std{0.1}} & 5.8\std{0.3} & \underline{1.7\std{1.0}} & \underline{2.2\std{0.2}}  \\ \hline
\end{tabular}%
}
\caption{Task: Expression; Sensitive Attribute: Age (Young)}
\label{tab:expression_young}
\end{subtable}

\hfill

\begin{subtable}[t]{0.48\textwidth}
\centering
\resizebox{\textwidth}{!}{%
\setlength{\tabcolsep}{6pt}
\renewcommand{\arraystretch}{1.1}
\begin{tabular}{ccccccc}
\hline
\textbf{Method} & \textbf{Acc($\uparrow$)} & \textbf{BAcc($\uparrow$)} & \textbf{DP($\downarrow$)} & \textbf{EOpp($\downarrow$)} & \textbf{EO($\downarrow$)}  \\ \hline
\textbf{Vanilla} & 81.6 \std{0.2} & 63.2\std{0.2} & 33.1\std{2.7} & 40.3\std{3.0} & 36.8\std{5.2} \\
\textbf{ViT-FSCL} & 80.4\std{1.1} & 64.8\std{0.0} & 24.7\std{0.2} & 35.2\std{0.1} & 24.7\std{0.2} \\
 \textbf{FairViT} & \underline{81.9\std{0.3}} & \underline{66.9\std{0.4}} & 20.4\std{0.5} & 30.6\std{1.2} & 19.8\std{0.9}  \\
 \textbf{FairVPT}       & \bf{83.0\std{0.5}} & 61.1\std{0.5} & 17.0\std{0.4} & 25.2\std{0.9} & 15.7\std{1.0}  \\
\textbf{FairNet}        & 81.2\std{1.0} & 61.2\std{0.6} & \underline{16.4\std{0.5}} & \underline{18.3\std{0.8}} & \underline{13.0\std{1.0}}  \\
\textbf{FairNVT(Ours)} & 81.2\std{0.1} & \bf{67.4\std{0.5}} & \bf{8.1\std{0.6}} & \bf{8.2\std{1.8}} & \bf{8.3\std{1.8}}  \\
\hline
\end{tabular}%
}
\caption{Task: Big Nose; Sensitive Attribute:  Gender (Male)}
\label{tab:bignose_gender}
\end{subtable}

\begin{subtable}[t]{0.48\textwidth}
\centering
\resizebox{\textwidth}{!}{%
\setlength{\tabcolsep}{6pt}
\renewcommand{\arraystretch}{1.1}
\begin{tabular}{ccccccc}
\hline
\textbf{Method} & \textbf{Acc($\uparrow$)} & \textbf{BAcc($\uparrow$)} & \textbf{DP($\downarrow$)} & \textbf{EOpp($\downarrow$)} & \textbf{EO($\downarrow$)}  \\ \hline
\textbf{Vanilla}& 84.4\std{0.6} & 81.2\std{0.5} & 10.3\std{1.0} & 8.5\std{1.0} & 7.7 \std{2.0}  \\
\textbf{ViT-FSCL}& 83.3 \std{0.6} & 77.9\std{2.0} & \bf{5.3\std{0.8}} & \underline{2.0\std{0.4}} & \bf{1.3\std{0.3}}  \\
\textbf{FairViT} & \underline{86.6\std{0.4}} & \underline{83.7\std{0.3}} & 9.0\std{0.5} & 3.5\std{1.6} & 2.8\std{0.6}  \\
\textbf{FairVPT}       & 84.2\std{0.6} & 82.2\std{0.4} & 7.9\std{0.6} & 2.8\std{1.0} & 2.9\std{0.5}  \\
\textbf{FairNet}        & 84.4\std{1.0} & 80.7\std{1.1} & 8.3\std{0.7} & \bf{0.5\std{1.2}} & \underline{1.5\std{0.8}}  \\
\textbf{FairNVT(Ours)} & \bf{87.0\std{0.5}} & \bf{84.0\std{0.5}} & \underline{7.1\std{0.7}} & 3.4\std{0.9} & 2.4\std{0.5}  \\
\hline
\end{tabular}%
}
\caption{Task: Wavy hair; Sensitive Attribute: Age(Young)}
\label{tab:hat_young}
\end{subtable}

\hfill

\begin{subtable}[t]{0.48\textwidth}
\centering
\resizebox{\textwidth}{!}{%
\setlength{\tabcolsep}{6pt}
\renewcommand{\arraystretch}{1.1}
\begin{tabular}{ccccccc}
\hline
\textbf{Method} & \textbf{Acc($\uparrow$)} & \textbf{BAcc($\uparrow$)} & \textbf{DP($\downarrow$)} & \textbf{EOpp($\downarrow$)} & \textbf{EO($\downarrow$)}  \\ \hline
\textbf{Vanilla}& 99.1 \std{0.1} & 94.1\std{0.3} & 10.8\std{0.1} & 5.3\std{1.5} & 4.3 \std{1.3}  \\
\textbf{ViT-FSCL}& 99.1\std{0.1} & 95.0\std{0.2} & 10.8\std{0.4} & 3.9\std{0.4} & 2.4\std{0.2}  \\
\textbf{FairViT} & 99.0\std{0.2} & \underline{98.0\std{0.2}} & \underline{10.0\std{0.3}} & 0.8\std{0.3} & \bf{0.6\std{0.3}}  \\
\textbf{FairVPT}       & \underline{99.4\std{0.1}} & 97.0\std{0.2} & \bf{9.6\std{0.3}} & 2.1\std{0.3} & 1.2\std{0.4} \\
\textbf{FairNet}        & 99.3\std{0.1} & 93.9\std{0.9} & 10.4\std{0.6} & \bf{0.3\std{1.1}} & 1.2\std{0.3} \\
\textbf{FairNVT(Ours)} & \bf{99.6\std{0.0}} & \bf{98.7\std{0.0}} & 11.2\std{0.1} & \underline{0.7\std{0.4}} & \underline{0.7\std{0.4}}  \\
\hline
\end{tabular}%
}
\caption{Task: Wearing glasses; Sensitive Attribute: Gender (Male)}
\label{tab:glass_gender}
\end{subtable}

\begin{subtable}[t]{0.48\textwidth}
\centering
\resizebox{\textwidth}{!}{%
\setlength{\tabcolsep}{6pt}
\renewcommand{\arraystretch}{1.1}
\begin{tabular}{ccccccc}
\hline
\textbf{Method} & \textbf{Acc($\uparrow$)} & \textbf{BAcc($\uparrow$)} & \textbf{DP($\downarrow$)} & \textbf{EOpp($\downarrow$)} & \textbf{EO($\downarrow$)}  \\ \hline
\textbf{Vanilla}& 99.1 \std{0.1} & 95.4\std{0.2} & 13.7\std{0.2} & 7.0\std{0.6} & 6.3\std{1.6}  \\
\textbf{ViT-FSCL} & 99.0\std{0.1} & 95.1\std{1.1} & 13.1\std{0.5} & 5.9\std{0.1} & 3.3\std{0.2}  \\
\textbf{FairViT} & 99.1\std{0.3} & \underline{97.4\std{0.4}} & 13.0\std{0.7} & 2.7\std{0.6} & 2.9\std{0.5}  \\
\textbf{FairVPT}       & \underline{99.4\std{0.1}} & 96.8\std{0.3} & \underline{12.9\std{0.7}} & \underline{1.2\std{1.0}} & \underline{2.4\std{0.7}}  \\
\textbf{FairNet}        & \bf{99.6\std{0.3}} & \bf{98.5\std{0.4}} & 13.5\std{0.4} & \bf{1.0\std{1.2}} & \bf{0.7\std{1.3}}  \\
\textbf{FairNVT(Ours)} & 99.3\std{0.2} & 96.9\std{0.5} & \bf{12.4\std{1.0}} & 3.0\std{1.1} & 2.8\std{1.3} \\
\hline
\end{tabular}%
}
\caption{Task: Wearing Glasses; Sensitive Attribute: Age (Young)}
\label{tab:glass_young}
\end{subtable}

\hfill

\begin{subtable}[t]{0.48\textwidth}
\centering
\resizebox{\textwidth}{!}{%
\setlength{\tabcolsep}{6pt}
\renewcommand{\arraystretch}{1.1}
\begin{tabular}{ccccccc}
\hline
\textbf{Method} & \textbf{Acc($\uparrow$)} & \textbf{BAcc($\uparrow$)} & \textbf{DP($\downarrow$)} & \textbf{EOpp($\downarrow$)} & \textbf{EO($\downarrow$)}  \\ \hline
\textbf{Vanilla} & 85.5\std{0.3} & 85.2\std{0.4} & 9.6\std{0.4} & 4.7\std{0.6} & 4.6\std{0.9}  \\
\textbf{ViT-FSCL} & 82.6\std{0.5} & 81.8\std{0.6} & \underline{7.5\std{2.0}} & 1.3\std{0.7} & 2.5\std{1.7} \\
\textbf{FairViT} & \underline{93.4\std{0.1}} & \underline{93.3\std{0.2}} & 9.0\std{0.4} & 1.6\std{0.3} & 1.5\std{0.4} \\
\textbf{FairVPT}       & 92.7\std{0.1} & 92.7\std{0.1} & 9.3\std{0.3} & \bf{0.3\std{0.8}} & \bf{0.6\std{0.7}}  \\
\textbf{FairNet}        & 91.0\std{0.5} & 90.9\std{0.4} & 8.1\std{0.4} & 1.1\std{0.7} & \underline{1.3\std{0.8}}  \\
\textbf{FairNVT(Ours)} & \bf{93.7\std{0.1}} & \bf{93.7\std{0.1}} & \bf{6.3\std{0.8}} & \underline{0.9\std{0.1}} & 1.5\std{0.6} \\
\hline
\end{tabular}%
}
\caption{Task: Mouth Slightly Open; Sensitive Attribute: Gender(Male)}
\label{tab:mouth_open_gender}
\end{subtable}

\hfill

\begin{subtable}[t]{0.48\textwidth}
\centering
\resizebox{\textwidth}{!}{%
\setlength{\tabcolsep}{6pt}
\renewcommand{\arraystretch}{1.1}
\begin{tabular}{ccccccc}
\hline
\textbf{Method} & \textbf{Acc($\uparrow$)} & \textbf{BAcc($\uparrow$)} & \textbf{DP($\downarrow$)} & \textbf{EOpp($\downarrow$)} & \textbf{EO($\downarrow$)}  \\ \hline
\textbf{Vanilla} & 84.7\std{1.8} & 83.9\std{1.7} & 7.4\std{1.5} & 1.8\std{1.3} & 6.1\std{1.8}  \\
\textbf{ViT-FSCL}& 83.8\std{0.1} & 82.9\std{0.1} & 5.7\std{1.2} & 1.5\std{1.3} & 2.3\std{1.1}  \\
\textbf{FairViT} & \underline{93.4\std{0.3}} & \underline{93.1\std{0.2}} & 7.0\std{0.3} & 0.8\std{0.2} & \underline{0.9\std{0.3}}  \\
\textbf{FairVPT}       & 92.0\std{0.1} & 91.8\std{0.1} & \bf{4.4\std{0.5}} & 1.8\std{1.0} & 1.0\std{0.4} \\
\textbf{FairNet}        & 93.3\std{0.3} & 93.1\std{0.4} & \bf{0.9\std{0.5}} & \underline{0.7\std{0.9}} & 1.0\std{0.5} \\
\textbf{FairNVT(Ours)} & \bf{94.0\std{0.0}} & \bf{93.7\std{0.2}} & 4.6\std{0.2} & \bf{0.3\std{0.1}} & \bf{0.6\std{0.1}}\\
\hline
\end{tabular}%
}
\caption{Task: Mouth Slightly Open; Sensitive Attribute: Age(Young)}
\label{tab:mouth_open_young}
\end{subtable}

\end{table*}

\begin{table}[htb]
\caption{\textbf{Qualitative BIOS examples.}
We show pairs of biography snippets that differ only in gender indicators.
For each snippet, we display the model's predicted occupation and the prediction score for the ground-truth label.
Vanilla model predictions vary substantially across genders, suggesting reliance on gender cues,
whereas FairNVT yields more stable scores and consistent predictions, indicating improved robustness
to gender indicators.}

\label{tab:appendix_bios_qualitative}
\begin{tabular}{c|p{6.3cm}|c|c}
\toprule
\textbf{ID} & \textbf{BIO Snippet} & \textbf{Vanilla} & \textbf{FairNVT} \\
\midrule

1 &
He specializes in development economics, household economics, and personnel economics. In 2003 he received his Ph.D. in Economics from the London School of Economics... 
&
professor (0.903) &
professor (0.992) \\
1 &
She specializes in development economics, household economics, and personnel economics. In 2003 she received her Ph.D. in Economics from the London School of Economics... 
&
professor (0.882) &
professor (0.993) \\
\midrule

2 &
Prosper was born and raised in Miami Beach, FL. He received his Bachelor’s degree from Emory University and graduated with honors from the University of Miami School of Law... 
&
attorney (0.971) &
attorney (0.971) \\
2 &
Prosper was born and raised in Miami Beach, FL. She received her Bachelor’s degree from Emory University and graduated with honors from the University of Miami School of Law...
&
attorney (0.939) &
attorney (0.970) \\
\midrule

3 &
She has been travelling the world, and worked, amongst others, on a documentary photography project in India with an orphanage... &
photographer (0.643) &
photographer (0.908) \\
3 &
He has been travelling the world, and worked, amongst others, on a documentary photography project in India with an orphanage...  &
photographer (0.729) &
photographer (0.864) \\
\midrule

4 &
She studied at EFET Paris and NYU New-York respectively. While working in a post-production, she develops her own photographic concept... 
&
photographer (0.664) &
photographer (0.966) \\
4 &
He studied at EFET Paris and NYU New-York respectively. While working in a post-production, he develops his own photographic concept...
&
photographer (0.804) &
photographer (0.941) \\
\midrule

5 &
He attended the University of California, San Francisco (UCSF), School of Medicine and subsequently trained at Children's Hospital Los Angeles for residency...
&
physician (0.717) &
physician (0.818) \\
5 &
She attended the University of California, San Francisco (UCSF), School of Medicine and subsequently trained at Children's Hospital Los Angeles for residency...
&
physician (0.826) &
physician (0.755) \\

\bottomrule
\end{tabular}
\end{table}

\begin{table}[htb]
\begin{tabular}{c|p{6.3cm}|c|c}
\toprule
\textbf{ID} & \textbf{BIO Snippet} & \textbf{Vanilla} & \textbf{FairNVT} \\
\midrule
6 &
Dr. Cottrell attended medical school at the University of Missouri-Columbia School of Medicine. He is in-network for Anthem, Blue Cross/Blue Shield, Blue Shield, and more. &
physician (0.773) &
physician (0.923) \\
6 &
Dr. Cottrell attended medical school at the University of Missouri-Columbia School of Medicine. She is in-network for Anthem, Blue Cross/Blue Shield, Blue Shield, and more.  &
physician (0.791) &
physician (0.920) \\

\midrule

7 &
After spending two years at the University of Iowa, Kyle transferred to Chapman University, where he directed a superhero noir titled The League, about the 1960's superhero labor union of Chicago... &
filmmaker (0.944) &
filmmaker (0.808) \\
7 &
After spending two years at the University of Iowa, Kylie transferred to Chapman University, where she directed a superhero noir titled The League, about the 1960's superhero labor union of Chicago...  &
filmmaker (0.925) &
filmmaker (0.866) \\

\midrule

8 &
She has been a successful Dentist for the last 16 years. She is a BDS. She is currently associated with SMII Dental Art Studio in Koregaon Park, Pune... &
dentist (0.991) &
dentist (0.997) \\
8 &
He has been a successful Dentist for the last 16 years. He is a BDS. He is currently associated with SMII Dental Art Studio in Koregaon Park, Pune...  &
dentist (0.966) &
dentist (0.943) \\

\midrule

9 &
Downs was a fellow at Northwestern University's Academy for Alternative Journalism in 2004, and he earned a degree in English literature from University of California at Santa Barbara in 2002... &
journalist (0.437) &
journalist (0.917) \\
9 &
Downs was a fellow at Northwestern University's Academy for Alternative Journalism in 2004, and she earned a degree in English literature from University of California at Santa Barbara in 2002...  &
journalist (0.480) &
journalist (0.885) \\

\midrule

10 &
She graduated with honors in 2012. Having more than 5 years of diverse experiences, especially in NURSE PRACTITIONER, Melissa R Kludt affiliates with many hospitals including... &
nurse (0.917) &
nurse (0.890) \\
10 &
He graduated with honors in 2012. Having more than 5 years of diverse experiences, especially in NURSE PRACTITIONER, Miles R Kludt affiliates with many hospitals including...  &
nurse (0.624) &
nurse (0.926) \\

\bottomrule
\end{tabular}
\end{table}


\end{document}